\newtheorem{example}{Example}
\newtheorem{theorem}{Theorem}
\newcommand{\unit}{\ensuremath{\mathbf{U}}}
\newcommand{\kb}{\ensuremath{\mathcal{K}}}
\newcommand{\db}{\ensuremath{D}}
\newcommand{\marco}{\textcolor{black}}
\newcommand{\C}{\ensuremath{\mathsf{C}}}
\newcommand{\PS}{\ensuremath{\mathsf{P}}}
\newcommand{\V}{\ensuremath{\mathsf{V}}}
\newcommand{\Dom}{\ensuremath{\mathbb{D}}}
\newcommand{\WCQ}{\ensuremath{\mathbb{NXL}}}
\newcommand{\FreeConst}{\mathit{Fr}}
\newcommand{\Genes}{\mathit{Ge}}
\newcommand{\nop}[1]{}
\newtheorem{cor}{Corollary}
\newtheorem{proposition}{Proposition}
\newtheorem{lemma}{Lemma}
\newtheorem{definition}{Definition}
\newtheorem{remark}{Remark}
\title{Characterizing Nexus of Similarity within Knowledge Bases:\\A Logic-based Framework and its Computational Complexity Aspects}
\author{%
Giovanni Amendola$^1$\and
Marco Manna$^1$\and
Aldo Ricioppo$^1$\and
\affiliations
$^1$University of Calabria\\
\emails
\{giovanni.amendola, marco.manna, aldo.ricioppo\}@unical.it
}
\begin{document}
\maketitle

\begin{abstract}
Similarities between entities occur frequently in many real-world scenarios. For over a century, researchers in different fields have proposed a range of approaches to measure the similarity between entities. More recently, inspired by ``Google Sets'', significant academic and commercial efforts have been devoted to expanding a given set of entities with  similar ones. As a result, existing approaches nowadays are able to take into account properties shared by entities, hereinafter called nexus of similarity. Accordingly, machines are largely able to deal with both similarity measures and set expansions. To the best of our knowledge, however, there is no way to characterize nexus of similarity between entities, namely identifying such nexus in a formal and comprehensive way so that they are both machine- and human-readable; moreover, there is a lack of consensus on evaluating existing approaches for weakly similar entities. 
As a first step towards filling these gaps, we aim to complement existing literature by  developing a novel logic-based framework to formally and automatically characterize nexus of similarity between tuples of entities within a knowledge base. Furthermore, we analyze computational complexity aspects of this framework.
\end{abstract}


\section{Introduction}

Real-world scenarios often involve similarities between entities.
Even at a young age, kids start learning how to informally describe, classify, and compare simple groups of entities. For example, they can easily recognize that both $\langle\mathsf{Paris}\rangle$ and $\langle\mathsf{Rome}\rangle$ are ``cities'', and that $\langle\mathsf{Paris}\rangle$ is more similar to $\langle\mathsf{Rome}\rangle$ than to $\langle\mathsf{Gardaland}\rangle$.
Growing up, people generally get better at identifying properties shared by entities, hereinafter called {\em nexus of similarity}.%
\footnote{Note that the term {\em similarity} is often misused to refer the broader notion of {\em relatedness}~\cite{DBLP:journals/csur/ChandrasekaranM21}.}
Adults, for example, can easily agree that $\langle\mathsf{Paris}\rangle$ and $\langle\mathsf{Rome}\rangle$ are both ``Europe's capitals situated on rivers''.
Likewise, one can identify nexus of similarity between \mbox{$n$-ary} tuples of entities. 
For instance, $\langle\mathsf{Tokyo}, \mathsf{Tokyo~Tower} \rangle$ and $\langle\mathsf{Paris}, \mathsf{Eiffel~Tower} \rangle$ are fairly similar, as each of them is a ``capital paired with one of its famous monument being a tower made of metal''.

As we move towards more complex entities such as goods, services, behaviors, or health conditions, the challenges become greater, and the stakes become more interesting and valuable.
For example, in e-commerce, Amazon's recommendation system may suggest specific smartphones to a user interested in high-end devices equipped with features like accelerometer, compass, fingerprint, and Android 13.
In the tourism industry, travel agencies may recommend other theme parks in the US to a family based on their previously visited ones.
Streaming services such as Netflix suggest trailers to their customers based on their previous viewing history and preferences.
In medicine, researchers may need to understand why certain individuals are more susceptible to certain diseases than others.

For over a century, researchers from various  fields have proposed a range of approaches to {\em semantic similarity}~\cite{GomaaFahmy13} ---a problem having as expected output a similarity measure between entities, usually in the form of a descriptive rating or a numerical score.
Moreover, in the past two decades, inspired by ``Google Sets''~\cite{GoogleSets08}, considerable academic and commercial efforts have been devoted to providing solutions for expanding a given set of entities with similar ones. The most studied tasks here are
{\em entity set expansion}~\cite{DBLP:conf/emnlp/PantelCBPV09},
{\em entity recommendation}~\cite{DBLP:conf/semweb/BlancoCMT13},
{\em tuples expansion}~\cite{DBLP:conf/iiwas/ErAB16}, or
{\em entity suggestion}~\cite{DBLP:conf/ijcai/ZhangXHWWW17}.
Traditional approaches mainly use  (hyper)text corpora or tabular data as background knowledge~\cite{DBLP:conf/ijcai/GabrilovichM07,DBLP:conf/www/HeX11,DBLP:journals/corr/abs-1301-3781,DBLP:conf/ijcai/XunLZGZ17,DBLP:conf/naacl/DevlinCLT19}.
%
Over the years, more and more approaches have started exploiting structured  knowledge bases (KBs)
automatically enriched or completed from text corpora~\cite{DBLP:conf/aaai/LinLSLZ15,DBLP:conf/aaai/LiuZ0WJD020,DBLP:conf/www/LissandriniMPV20,DBLP:journals/fcsc/ShiDCHWL21};
%
think, for example, at
BabelNet~\cite{DBLP:journals/ai/NavigliP12},
DBpedia~\cite{DBLP:journals/semweb/LehmannIJJKMHMK15},
Wikidata~\cite{DBLP:journals/cacm/VrandecicK14}, and
YAGO~\cite{DBLP:journals/ai/HoffartSBW13}--- typically available in the form of  knowledge graphs (KGs)~\cite{DBLP:journals/csur/HoganBCdMGKGNNN21} and at the core of the Linked Open Data Cloud~\cite{DBLP:journals/semweb/McCraeMHB15}.
Essentially, the heterogeneity, semantic richness and large-scale nature of KBs make traditional \mbox{approaches less effective~\cite{DBLP:conf/www/MaK15}.}
\newcommand{\tmpframea}[1]{\colorbox{Cyan!5}{#1}}

\normalsize

\begin{figure*}[t!]
	\centering
	\begin{minipage}{0.65\textwidth}
		\centering
		\begin{tikzpicture}[x=2.5 cm,y=1.83 cm]
			\SetUpEdge[lw = 0.5pt, color = black, labeltext = black!75, labelcolor = white]
			
			\GraphInit[vstyle=Normal] 
			
			\tikzset{VertexStyle/.style = {
					shape = rectangle, draw, color=black!75,
					font=\scriptsize\sffamily,
					fill = LimeGreen!25}
			}
			
			\Vertex{Discovery Cove}
			\SO(Discovery Cove){Florida}
			\SOWE(Discovery Cove){US}
			\tikzset{VertexStyle/.style = {
					shape = rectangle, draw, color=black!75,
					font=\scriptsize\sffamily,
					fill = LimeGreen!25}
			}
			\EA(Discovery Cove){theme park}
			\NOEA(theme park){amusement park}
			
			\tikzset{VertexStyle/.style = {
					shape = rectangle, draw, color=black!75,
					font=\scriptsize\sffamily,
					fill = LimeGreen!25}
			}
			\SOEA(Discovery Cove){Epcot}
			\EA(amusement park){Prater}
			\SO(Prater){Austria}
			\WE(Discovery Cove){California}
			\NOWE(Discovery Cove){Pacific Park}
			\SO(amusement park){Gardaland}
			\SO(Gardaland){Italy}
			\SO(Austria){Leolandia}
			
			\tikzset{EdgeStyle/.style={->}}
			\tikzset{EdgeStyle/.append style={font=\sffamily\scriptsize}}
			
			\Edge[label=located](Discovery Cove)(Florida)
			\tikzset{EdgeStyle/.style={->, font=\sffamily\scriptsize}}
			\Edge[label=partOf](Florida)(US)
			\tikzset{EdgeStyle/.style={->, font=\sffamily\scriptsize}}
			\Edge[label=isa](Discovery Cove)(theme park)
			\tikzset{EdgeStyle/.style={->,  font=\sffamily\scriptsize}}
			\Edge[label=isa](theme park)(amusement park)
			\Edge[label=located](Epcot)(Florida)
			\Edge[label=isa](Epcot)(theme park)
			\tikzset{EdgeStyle/.style={->, font=\sffamily\scriptsize}}
			\Edge[label=isa](Prater)(amusement park)
			\Edge[label=located](Prater)(Austria)
			\Edge[label=located](Pacific Park)(California)
			\Edge[label=partOf](California)(US)
			\Edge[label=isa](Pacific Park)(amusement park)
			\Edge[label=located](Gardaland)(Italy)
			\Edge[label=isa](Leolandia)(amusement park)
			\Edge[label=located](Leolandia)(Italy)
			\tikzset{EdgeStyle/.append style = {bend left}}
			\Edge[label=isa](Gardaland)(theme park)
			\tikzset{EdgeStyle/.style={->, font=\sffamily\scriptsize}}
			\tikzset{EdgeStyle/.append style={font=\sffamily\scriptsize}}
			\Edge[label=isa](Discovery Cove)(amusement park)
			\Edge[label=isa](Gardaland)(amusement park)
			\tikzset{EdgeStyle/.style={->, font=\sffamily\scriptsize}}
			\Edge[label=isa](Epcot)(amusement park)
		\end{tikzpicture}\vspace{-1mm}
		\caption{Knowledge graph $\mathcal{G}_0$ used in Example~\ref{ex:running} to build the selective knowledge base $\kb_0$.}\vspace{-2mm}
		\label{fig:KG1}  
	\end{minipage}%
	\begin{minipage}{0.36\textwidth}
		\centering
		\begin{tikzpicture}[x=1.35 cm,y=0.9 cm]
			\SetUpEdge[lw = 0.5pt, color = black!75, labeltext = black!75, labelcolor = white]
			\GraphInit[vstyle=Normal] 
			\tikzset{VertexStyle/.style = {
					shape = rectangle, draw, color=black!75,
					font=\scriptsize\sffamily,
					fill = Lavender!20}
			}
			\Vertex[Math,L=\!\bar{\varphi}_a\mapsto\{\langle\textsf{Discovery~Cove}\rangle{\tt,}\langle\textsf{Epcot}\rangle\}\,{\tt=}\,\unit_0]{Discovery Cove; Epcot}
			\NOWE[Math,L=\!\bar{\varphi}_b\mapsto\{\langle\textsf{Pacific~Park}\rangle\}\!](Discovery Cove; Epcot){Pacific Park}
			\NOEA[Math,L=\!\bar{\varphi}_c \mapsto\{\langle\textsf{Gardaland}\rangle\}\!](Discovery Cove; Epcot){Gardaland}
			\NOWE[Math,L=\!\bar{\varphi}_d\mapsto\{\langle\textsf{Prater}\rangle{\tt,} \langle\textsf{Leolandia}\rangle\}\!](Gardaland){Prater; Leolandia}
			\NO[Math,L=\!\bar{\varphi}_e\mapsto\{\langle\textsf{theme park}\rangle\}\!](Prater; Leolandia){theme park}
			\NO[Math,L=\!\bar{\varphi}_f \mapsto\{\langle \xi \rangle~:~\xi\textit{ is any other entity}\}\!](theme park){any other entity}
			\tikzset{EdgeStyle/.style={->}}
			\tikzset{EdgeStyle/.append style={font=\sffamily\scriptsize}}
			\Edge[label=](Discovery Cove; Epcot)(Pacific Park)
			\Edge[label=](Discovery Cove; Epcot)(Gardaland)
			\Edge[label=](Gardaland)(Prater; Leolandia)
			\Edge[label=](Pacific Park)(Prater; Leolandia)
			\Edge[label=](Prater; Leolandia)(theme park)
			\Edge[label=](theme park)(any other entity)
		\end{tikzpicture}\vspace{-1mm}
		\caption{Expansion graph $\mathit{eg}(\unit_0,\kb_0)$.} \vspace{-2mm}
		\label{fig:KG2}  
	\end{minipage}
\end{figure*}
\newcommand{\tmpframe}[1]{\colorbox{Yellow!10}{#1}}

\begin{figure*}[t!]
\footnotesize
\begin{equation*} 
\boxed{
    \begin{split}
        \bar{\varphi}_a  =  x \leftarrow  &  \ 
        \mathsf{isa}(x, \mathsf{tp}),
        \mathit{conj}(\bar{\varphi}_e),
        \mathsf{located}(x,\mathsf{Florida}),
        \hspace{1.8cm}\\
          &  \ 
        \mathsf{partOf}(\mathsf{Florida},\mathsf{US}), 
        \top(\mathsf{tp}),
        \top(\mathsf{Florida}), \top(\mathsf{US}) \\
    \bar{\varphi}_b =   x  \leftarrow & \  \mathsf{isa}(x,\mathsf{ap}),  \mathsf{located}(x,y), \top(x),  \top(y), \top(\mathsf{ap}),\\
    & \ \mathsf{partOf}(y,\mathsf{US}),  \top(\mathsf{US})
    \end{split}
    \begin{split}
        \bar{\varphi}_c  =  x  \leftarrow & \ \mathsf{isa}(x, \mathsf{tp}), \mathit{conj}(\bar{\varphi}_d), \top(\mathsf{tp})\\
        \bar{\varphi}_d = x  \leftarrow & \  \mathsf{isa}(x,\mathsf{ap}),  \mathsf{located}(x,y), \top(x),  \top(y), \top(\mathsf{ap})\\
        \bar{\varphi}_e  = x  \leftarrow & \  \mathsf{isa}(x,\mathsf{ap}), \top(x), \top(\mathsf{ap})\\
        \bar{\varphi}_f =  x \leftarrow & \ \top(x)\\
    \end{split}
    }
    \end{equation*}

\vspace{-4mm}
\caption{Characterizations occurring in Figure~\ref{fig:KG2}, where $\mathsf{tp}$ stands for $\mathsf{theme~park}$, $\mathsf{ap}$  for $\mathsf{amusement~park}$, and $\mathit{conj}(\varphi)$ for  the atoms of $\varphi$.} 
\vspace{-3mm}\label{fig:queries}
\end{figure*}
\normalsize

Nowadays, computing machines have advanced to the point where they can perform various tasks related to entity similarity, such as: 
$(a)$ classifying $\langle\mathsf{Paris}\rangle$ and $\langle\mathsf{Rome}\rangle$ as ``cities'' or ``capitals''; 
$(b)$ calculating proper similarity scores by taking into account that both entities are ``European capitals situated on rivers''; and 
$(c)$ expanding these entities to include $\langle\mathsf{Bratislava}\rangle$, $\langle\mathsf{Budapest}\rangle$, $\langle\mathsf{Prague}\rangle$, and $\langle\mathsf{Vienna}\rangle$, by following the same rationale.
%
%
To deal with similarities, researchers agree that an entity, whether a real-world object or an abstract concept, can be described by a set of {\em properties} that carry relevant information about it and form {\em semantic connections} with other entities~\cite{DBLP:reference/db/Balog18}.
A selection of these properties is often referred to as a {\em summary} of that entity~\cite{DBLP:journals/ws/LiuCGQ21}.
Both humans and machines have their own summaries available for a number of entities, which they rely on while dealing with them.
For humans, these summaries essentially collect the most prominent properties  they remember about the entities they know.
For machines, such summaries consist of the available properties that are artificially encoded or embedded, describing the entities they can access within text corpora or KBs.

As far as we know, there are two major gaps in the presented  picture. On the one hand, there is no way to formally and comprehensively characterize the nexus of similarity between tuples of entities so that they are both human- and machine-readable. For instance, one could use formal (onto)logical expressions~\cite{DBLP:reference/fai/LifschitzMP08} or description logics~\cite{DBLP:conf/dlog/2003handbook} to express the nexus of similarity between $\langle\mathsf{Paris}\rangle$ and $\langle\mathsf{Rome}\rangle$. 
On the other hand, there is no consensus on how to evaluate existing approaches when entities are weakly similar~\cite{DBLP:conf/naacl/AsrZJ18,DBLP:conf/cogsci/DeyneNPS12}.
While people and machines may agree on the similarity of entities like $\langle\mathsf{Paris}\rangle$ and $\langle\mathsf{Rome}\rangle$, 
disagreements may arise in cases like $\langle\mathsf{Paris}\rangle$ and $\langle\mathsf{Gardaland}\rangle$ due to random differences in summaries. 
As a result, benchmarks for similarity may diverge, making it difficult for machines to receive fair evaluations.

The present paper aims to address the aforementioned gaps by proposing a formal reference semantics to deal with nexus of similarity. As a first step, we develop a logic-based framework to characterize the nexus of similarity between (tuples of) entities within a KB. This not only provides value in itself but also ensures that machines can compare with each other in a fair and formal manner, besides human opinion.
To illustrate the key principles of our framework, we provide a running example.

\begin{example}\label{ex:running}
Let's say we start with an initial KB represented by the KG $\mathcal{G}_0$ shown in Figure~\ref{fig:KG1}.
This graph can be encoded as a dataset $D_0$ of the form $\{\mathsf{located}(\mathsf{Epcot}$,
$\mathsf{Florida})$,
$\mathsf{partOf}(\mathsf{Florida},\mathsf{US})$,
$\mathsf{isa}(\mathsf{Epcot},\mathsf{tp}),
\mathsf{isa}(\mathsf{Epcot},\mathsf{ap}), \hspace{1.3mm}
...,$
$\top(\mathsf{Epcot})$,
$\top(\mathsf{Florida})$,
$\top(\mathsf{US})$,
$\top(\mathsf{tp})$,
$\top(\mathsf{ap})$,
...$\}$, where $\mathsf{tp}$ stands for $\mathsf{theme~park}$ and $\mathsf{ap}$  for $\mathsf{amusement~park}$.
%
%
To have a summary for any entity of $\mathcal{G}_0$, we need to use an algorithm that is tailored to the considered scenario, called  summary selector.
%
We now consider the simple yet effective selector $\varsigma_0$ that builds, for each entity $e$ of $D_0$, the dataset $\varsigma_0(\langle e \rangle)$ as the union of 
$A(e) = \{\mathsf{isa}(e,e'): \mathsf{isa}(e,e') \in D_0\}$,
$B(e) = \{p(e,e') : p(e,e') \in D_0 \wedge p \neq \mathsf{isa}\}$,
$C(e) =\{p'(e',e'') : p'(e',e'') \in D_0 \wedge p(e,e') \in B \wedge p' \neq \mathsf{isa}\}$ , and
$D(e) = \{\top(e') : e'\mbox{ occurs in }A(e)\cup B(e) \cup C(e)\}$.
For instance, $\varsigma_0(\langle \mathsf{Epcot} \rangle)$ is the dataset $\{\mathsf{isa}(\mathsf{Epcot},\mathsf{tp}), \mathsf{isa}(\mathsf{Epcot},\mathsf{ap})\}$ $\cup$ $\{\mathsf{located}(\mathsf{Epcot},\mathsf{Florida})\}$ $\cup$ $\{\mathsf{partOf}(\mathsf{Florida},\mathsf{US})\}$ $\cup$ $\{\top(\mathsf{Epcot})$, 
$\top(\mathsf{tp})$, 
$\top(\mathsf{ap})$,
$\top(\mathsf{Florida})$,
$\top(\mathsf{US})\}$.
The pair $\kb_0 = (D_0,\varsigma_0)$ is called selective knowledge base (SKB).

Consider the set \mbox{$\unit_0$ $=$ $\{\langle\mathsf{Discovery~Cove}\rangle$, $\langle\mathsf{Epcot}\rangle\}$;} we can refer to it as an anonymous relation or simply as a unit.
If we focus on the summaries of the tuples in $\unit_0$, then 
we can observe that formula $\bar{\varphi}_1$ $=$ $x \leftarrow \mathsf{isa}(x, \mathsf{ap})$, $\mathsf{located}(x,y)$, $\mathsf{partOf}(y,\mathsf{US})$ expresses some nexus of similarity between $\langle\mathsf{Discovery~Cove}\rangle$ and $\langle\mathsf{Epcot}\rangle$.
However, this formula  neglects at least the fact that both entities are also located in Florida according to their summaries. 
Later, it will become clear that $\bar{\varphi}_a$ in Figure~\ref{fig:queries} better and fully expresses the nexus of similarity between the two entities;
therefore, we say that $\bar{\varphi}_a$ characterizes their nexus of similarity.

The next step is to create an {\em is-a} taxonomy by classifying each entity in $\kb_0$ w.r.t. $\unit_0$, where $\bar{\varphi}_a$ is the most specific concept. 
This taxonomy, called expansion graph and 
denoted by $\mathit{eg}(\unit_0,\kb_0)$, is depicted in Figure~\ref{fig:KG2}.
Intuitively, each node $n_1$ labeled by $\varphi_1 \mapsto \unit_1$ says that $\varphi_1$ characterizes $\unit_0 \cup \{\tau\}$ for each $\tau \in \unit_1$. 
Also, if a node $n_2$ labeled by $\varphi_2 \mapsto \unit_2$ has an arc pointing to $n_1$, it means that $\varphi_1$ characterizes also $\unit_0 \cup \unit_1 \cup \unit_2$.
Thus,  we can conclude, for instance, that 
the  nexus of similarity that $\langle \mathsf{Gardaland}\rangle$ has with $\unit_0$ are higher than those that $\langle \mathsf{Leolandia} \rangle$ has with $\unit_0$,  showing that $\mathsf{Gardaland}$ is more similar to the entities of $\unit_0$ than $\mathsf{Leolandia}$.
\qed
\end{example}

We  now outline the content of the paper, emphasizing the main contributions. 
In Section~\ref{sec:KR}, we introduce our {\em nexus explanation language} and define the notion of a {\em selective knowledge base}.
Then, in Section~\ref{sec:Explanations}, we formally illustrate how to {\em characterize the nexus of similarity} between tuples of entities. 
Section~\ref{sec:expansions} introduces the notion of {\em expansion graph} and discusses how it relates to semantic similarity and entity set expansion.
In Section~\ref{sec:tasks}, we identify the {\em key reasoning tasks} associated with our framework. 
Section~\ref{sec:complexity} analyzes the {\em computational aspects} of these reasoning tasks. 
In Section~\ref{sec:related}, we review the {\em related work}.
Finally,  Section~\ref{sec:conclusion} discusses some  of our design choices and draws conclusions.

\section{Knowledge Modelling}\label{sec:KR}

\subsection{Basics on Relational Structures}

We are given three pairwise disjoint countably infinite sets of symbols: $\PS$ of {\em predicates}, $\C$ of {\em constants}, and $\V$ of {\em variables}. The latter are of the form $x$, $y$, $z$, and variations thereof. 
Constants and variables are called {\em terms}.
Each predicate $p$ has an {\em arity} consisting of a positive integer denoted by $|p|$.
The set $\PS$ contains the special unary predicate $\top$ called {\em top}.

An {\em atom} is an expression of the form $p(t_1,...,t_n)$, where $p$ is a predicate of arity $n$ and $t_1,...,t_n$ is a {\em sequence} of terms.  A {\em structure} is any set of atoms. 
The {\em domain} of a structure $S$, denoted by $\Dom_S$ \marco{or $\Dom(S)$}, is the set of terms forming the atoms of $S$.
For example, the domain of the structure $\{p_1(\mathsf{1},x,\mathsf{1}), p_2 (\mathsf{d}_1,\mathsf{1},\mathsf{d}_2,\mathsf{c})\}$ is the set $\{\mathsf{1},\mathsf{c},\mathsf{d}_1,\mathsf{d}_2,x\}$.
A structure $S$ is said to be {\em closed under $\top$} if, for each $t \in \Dom_S$, it holds that  $\top(t) \in S$.
For example, $\{\top(\mathsf{1})\}$ and
$\{p(\mathsf{1},\mathsf{2}), \top(\mathsf{1}), \top(\mathsf{2})\}$ are closed under $\top$, whereas $\{p(\mathsf{1}), \top(\mathsf{2})\}$ is not.
A structure $S$ is {\em connected} if $|S| = 1$ or $S$ can be partitioned into two connected structures $S'$ and $S''$ s.t. \mbox{$\Dom_{S'} \cap \Dom_{S''} \neq \emptyset$}.
%
%
%
E.g., $\{p(\mathsf{1})\}$ and 
$\{p(\mathsf{1}), r(\mathsf{1},x)\}$ are connected, whereas $\{p(\mathsf{1}), r(\mathsf{2},x)\}$ is not.

Consider two structures $S$ and $S'$ together with  a set $\mathit{T}$ of terms. Hereinafter, a \mbox{{\em $\mathit{T}$-homomorphism}} from $S$ to $S'$ is a total map \mbox{$h:\Dom_S \rightarrow \Dom_{S'}$} s.t. $h(t) = t$ for each $t \in T \cap \Dom_S$ and $p(t_1,...,t_n) \in S$ implies  $p(h(t_1),...,h(t_n)) \in S'$.
If $T = \emptyset$, then the prefix ``$\mathit{T}$-''  can be omitted.
%

\subsection{Anonymous Relations}

Consider some integer $n \geq 0$. An $n$-ary {\em tuple} is an expression of the form $\langle t_1,...,t_n\rangle$, where each $t_i$ is a term.
Intuitively, a tuple can be thought as an anonymous atom or, conversely, an atom can be thought as a labeled tuple.
For a tuple $\tau$ (resp.,  sequence {\bf s}) of terms, its $i$-th term is denoted by $\tau[i]$ (resp., ${\bf s}[i]$).
For a set $T$ of terms, $T^n$ denotes the set of all the $|T|^n$ tuples of arity $n$ that can be constructed by using terms from $T$. 
The {\em domain} of a set $J$ of tuples, denoted by $\Dom_J$, is the set of terms forming the tuples of $J$. For example, the domain of the set $\{\langle \mathsf{1},x,\mathsf{1}\rangle, \langle \mathsf{d}_1,\mathsf{d}_2,\mathsf{c}\rangle\}$ of tuples is the set $\{\mathsf{1},\mathsf{c},\mathsf{d}_1,\mathsf{d}_2,x\}$.

A set $J$ of $n$-ary tuples is {\em proper} if  \mbox{$n > 0$} and its arrangement into rows produces no duplicate column.
For example, the set $\{\langle \mathsf{a},\mathsf{b},\mathsf{a}\rangle, \langle \mathsf{c},\mathsf{d},\mathsf{e}\rangle\}$ is proper whereas the set $\{\langle \mathsf{a},\mathsf{b},\mathsf{a}\rangle, \langle \mathsf{c},\mathsf{d},\mathsf{c}\rangle\}$ is not.
An $n$-ary {\em anonymous relation} is any proper finite nonempty set of $n$-ary tuples of constants.
Intuitively, it encodes some ``unit'' of knowledge that we would like to characterize and possibly expand;
accordingly, hereinafter, an anonymous relation will 
be often called {\em unit} for short.
Coming back to our running example, the set $\unit_0 = \{\langle\mathsf{Discovery\_Cove}\rangle,\langle\mathsf{Epcot}\rangle\}$ given in Example~\ref{ex:running} is a unary anonymous relation and, therefore, a unary unit too.

\vspace{-0.5mm}
\subsection{Selective Knowledge Bases}
\vspace{-0.5mm}

A {\em dataset} $D$ is a finite nonempty structure closed under $\top$ s.t. \mbox{$\Dom_D\subset \C$;} see, for instance, $\db_0$ in Example~\ref{ex:running}.
A {\em summary selector} is a computable function $\varsigma$ (informally, an algorithm) that: $(i)$ takes as input a dataset $D$ together with an \mbox{$n$-ary} tuple $\tau$ of constants from $\Dom_\db$, and $(ii)$ returns a dataset \mbox{$\varsigma(\db,\tau) \subseteq D$,} called the {\em summary} of $\tau$ (w.r.t. $D$), whose domain contains at least all the constants of $\tau$.
According to Example~\ref{ex:running}, 
$\{\mathsf{located}(\mathsf{Discovery\_Cove}$,
$\mathsf{Florida})$, 
$\top(\mathsf{Florida})$, $\top(\mathsf{Discovery\_Cove})$, $\top(\mathsf{Epcot})\}$ 
is a possible summary for $\langle\mathsf{Discovery\_Cove}\rangle$ w.r.t. $\db_0$, whereas 
both
$\{\mathsf{located}(\mathsf{Epcot}$,  $\mathsf{Florida})$, 
$\top(\mathsf{Epcot})$, $\top(\mathsf{Florida})\}$ and
$\{\mathsf{located}(\mathsf{Discovery\_Cove}$,  $\mathsf{Florida})\}$ are not.
If there is no ambiguity, we write $\varsigma(\tau)$ instead of $\varsigma(\db,\tau)$.
Intuitively, $\varsigma$ selects a portion of $D$  carrying relevant information about $\tau$, depending on some specific application scenario; see, for instance, $\varsigma_0$ in Example~\ref{ex:running}.%
\footnote{There exist specific techniques working with unary tuples~\cite{DBLP:journals/ws/LiuCGQ21} and binary tuples~\cite{DBLP:journals/semweb/Pirro19}.}
%
%
%

A {\em selective knowledge base}, SKB for short, is a pair \mbox{$\kb =(\db, \varsigma)$,} where  $\db$ is a dataset and $\varsigma$
is a summary selector; 
see, for instance, $\kb_0$ in Example~\ref{ex:running}.

\vspace{-0.5mm}
\subsection{Explanation Language}
\vspace{-0.5mm}

A {\em (conjunctive) formula} is an expression $\varphi$ of the form

\vspace{-2.5mm}

\begin{equation}\label{eq:formula}
x_1,...,x_n \leftarrow p_1({\bf t}_1),..., p_m({\bf t}_m),    
\end{equation}

\vspace{-0mm}

\noindent where $n \geq 0$ is its {\em arity},
$m  > 0$ is its {\em size} often denoted by $|\varphi|$, each ${\bf t}_i$ is a sequence of terms, each $p_i({\bf t}_i)$ is an atom, and
each $x_j$ is a variable ---called {\em free}--- occurring in some of the atoms of $\varphi$.
Hereinafter, the sequence or conjunction $p_1({\bf t}_1), ..., p_m({\bf t}_m)$ of atoms is denoted by $\mathit{conj}(\varphi)$ and the set of its atoms by $\mathit{atm}(\varphi)$.
Essentially, $\varphi$ is an expression in the equality-free primitive positive fragment of first-order logic over the vocabulary $\PS \cup \C \cup \V$~\cite{DBLP:journals/jacm/Rossman08}.

A formula is said to be {\em open} if it contains at least one free variable.
A formula $\varphi$ is {\em connected} if 
\mbox{$\mathit{atm}(\varphi)$} is connected.
An open formula $\varphi$ is 
{\em  nearly connected} if $\mathit{atm}(\varphi) \cup \{\mathsf{free}(x_1,...,x_n)\}$ is connected, where 
$\mathsf{free}(x_1,...,x_n)$ is a dummy fresh atom.
Intuitively, in the latter case, each \mbox{$\alpha \in \mathit{atm}(\varphi)$} has a (direct or indirect) connection with some atom containing at least one free variable.
Clearly, if an open formula is connected, then it is also nearly connected; however, the converse may not hold.
For example, formula 
$\bar{\varphi}_1 = x  \leftarrow   \mathsf{isa}(x, \mathsf{ap}), \mathsf{located}(x,y), \mathsf{partOf}(y,\mathsf{US})$ is (nearly) connected, whereas
$\bar{\varphi}_2 = x,y  \leftarrow   \mathsf{isa}(x, \mathsf{ap}), \mathsf{partOf}(y,\mathsf{US})$ is nearly connected but not connected.
%
Differently, \mbox{$x \leftarrow   \mathsf{isa}(x, \mathsf{ap}), \mathsf{partOf}(y,\mathsf{US})$} is not even nearly connected as neither $y$ nor $\mathsf{US}$ occurs in the unique atom containing $x$.

Consider an $n$-ary open formula
$\varphi$.
Let $\varphi'$ be the formula obtained from $\varphi$ by replacing the $j$-th free variable with $z_j$ and each non-free variable $y$ with $y_{\varphi}$.
The {\em conjunction} of two $n$-ary open formulas $\varphi_1$ and $\varphi_2$, denoted by \mbox{$\varphi_1\wedge\varphi_2$,} is the new formula $z_1,...,z_n \leftarrow  \mathit{conj}(\varphi_1'), \mathit{conj}(\varphi_2')$.
For example, if $\varphi_1$ is the formula $x_1,x_2 \leftarrow p(a,x_1), r(x_2,y)$ and
$\varphi_2$ is $y_2, y_1 \leftarrow u(y_1), v(y_2,b,y,a)$, then 
$\varphi_1 \wedge \varphi_2 = z_1,z_2 \leftarrow p(a,z_1), r(z_2,y_{\varphi_1}),
u(z_2), v(z_1,b,y_{\varphi_2},a)$.
A set $F$ of formulas is {\em closed under conjunction} if, for each $\varphi_1$ and $\varphi_2$ in $F$, also $\varphi_1\wedge\varphi_2$ is in $F$.

Nearly connected formulas are the formalism we adopt to express nexus of similarity between the tuples of a unit (i.e., properties shared by such tuples) w.r.t. their summaries.
Accordingly, \marco{since
``nexus'' literally means ``series of connections'',}
such formulas ---often called {\em (nexus) explanations} for short--- form our {\em nexus explanation language} denoted by $\WCQ$.
\marco{The rationale behind the choice of this language is discussed in Section~\ref{sec:conclusion}.} 
We conclude with a rather intuitive yet useful result of our language.

\begin{proposition}\label{prop:conj}
$\WCQ$ is closed under conjunction.
\end{proposition}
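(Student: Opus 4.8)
The plan is to unfold the definitions: $\WCQ$ is the set of \emph{nearly connected} open formulas, so to show closure under conjunction I must take two arbitrary $n$-ary open formulas $\varphi_1,\varphi_2 \in \WCQ$ and verify that $\varphi_1 \wedge \varphi_2$ is again an open, nearly connected $n$-ary formula. First I would check the syntactic bookkeeping: by construction $\varphi_1 \wedge \varphi_2$ has the form $z_1,\dots,z_n \leftarrow \mathit{conj}(\varphi_1'),\mathit{conj}(\varphi_2')$, so its arity is $n$; since $\varphi_1$ is open it has at least one free variable $x_j$, which becomes $z_j$ occurring in some atom of $\mathit{conj}(\varphi_1')$, hence $\varphi_1 \wedge \varphi_2$ is open as well. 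The renaming scheme (free variables of both formulas unified to the shared $z_1,\dots,z_n$, non-free variables of $\varphi_i$ decorated by the subscript $\varphi_i$) guarantees that the only variables shared between the two conjuncts are precisely $z_1,\dots,z_n$, which is the key fact exploited below.

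The heart of the argument is the near-connectedness of $\mathit{atm}(\varphi_1 \wedge \varphi_2) \cup \{\mathsf{free}(z_1,\dots,z_n)\}$. I would argue that this structure is connected by reducing it to the near-connectedness of $\varphi_1$ and $\varphi_2$ separately. Let $H_i = \mathit{atm}(\varphi_i') \cup \{\mathsf{free}(z_1,\dots,z_n)\}$; since $\varphi_i'$ is just $\varphi_i$ with a variable renaming that does not affect free variables' roles, $\varphi_i'$ is nearly connected, i.e.\ $H_i$ is connected (the dummy atom $\mathsf{free}$ carries exactly the free variables $z_1,\dots,z_n$). Now $\mathit{atm}(\varphi_1 \wedge \varphi_2) \cup \{\mathsf{free}(z_1,\dots,z_n)\} = H_1 \cup H_2$, and $\Dom_{H_1} \cap \Dom_{H_2} \supseteq \{z_1,\dots,z_n\} \ni z_j$, which is nonempty. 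A union of two connected structures sharing at least one domain element is connected — this follows directly from the recursive definition of connectedness given in the excerpt (take the partition into $H_1$ and $H_2$, both connected, with overlapping domains). Hence $H_1 \cup H_2$ is connected, so $\varphi_1 \wedge \varphi_2$ is nearly connected, and therefore lies in $\WCQ$.

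One subtlety I would be careful about is the edge case where $H_1$ and $H_2$ might not be a \emph{partition} in the strict set-theoretic sense, since they share the atom $\mathsf{free}(z_1,\dots,z_n)$ (and conceivably more, though the variable-renaming convention keeps the $\varphi_i$-specific atoms disjoint). The cleanest fix is to observe that connectedness of a structure depends only on the ``Gaifman-style'' incidence between atoms via shared terms, so I can argue at the level of the term-sharing graph: every atom of $H_1$ is connected to $\mathsf{free}(z_1,\dots,z_n)$ within $H_1$, every atom of $H_2$ likewise within $H_2$, and the shared vertex $\mathsf{free}(z_1,\dots,z_n)$ (equivalently any $z_j$) glues the two pieces into a single connected whole. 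Alternatively, if one wants to stick literally to the recursive partition-based definition, a short induction on $|H_2 \setminus H_1|$ suffices.

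I do not expect a genuine obstacle here — the result is, as the authors say, ``rather intuitive.'' The only thing demanding a line of care is making the gluing argument (``union of two connected structures with overlapping domains is connected'') precise against the paper's specific recursive definition of connected structure, and checking that the conjunction operation's renaming really does isolate the shared variables to $z_1,\dots,z_n$ so that the overlap of the two domains is exactly controlled. Everything else is definitional unwinding.
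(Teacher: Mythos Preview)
Your proposal is correct and follows essentially the same approach as the paper: define $H_i = \mathit{atm}(\varphi_i') \cup \{\mathsf{free}(z_1,\dots,z_n)\}$, observe each $H_i$ is connected by the near-connectedness of $\varphi_i$, and conclude that $H_1 \cup H_2$ is connected because their domains share $z_1,\dots,z_n$. The paper handles the partition subtlety you flag by simply restating connectedness in the equivalent ``union of two connected structures with overlapping domains is connected'' form up front, which is exactly the gluing lemma you invoke.
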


%
%


\subsection{Instances of a Formula}

Consider a formula $\varphi$ as in Equation~\ref{eq:formula}.
The {\em output} to $\varphi$ over a dataset $\db$ is the set $\varphi(\db)$ of every tuple $\langle t_1,...,t_n\rangle$ admitting a \mbox{$\C$-homomorphism} $h$  from $\mathit{atm}(\varphi)$ to $D$  s.t. each $h(x_i)= t_i$.
An {\em instance} of $\varphi$ according to some \mbox{$\kb =(\db, \varsigma)$} is any tuple $\tau \in \varphi(\varsigma(\tau))$.
%
Intuitively, since summaries list relevant information about tuples, if $\tau$ is not in the output to $\varphi$ over $\varsigma(\tau)$, then $\varphi$ does not express properties of $\tau$ in terms of the considered scenario; if so, we consider $\tau$ not an instance of $\varphi$ according to $\kb$.
The set of all \mbox{$\varphi$-instances} is denoted by $\mathit{inst}(\varphi,\kb)$.
According to Example~\ref{ex:running}, given the formula $\bar{\varphi}_3$ defined as $y  \leftarrow  \mathsf{located}(x,y), \mathsf{partOf}(y,\mathsf{US})$, 
we have that $\bar{\varphi}_3(\db_0) = \{\langle \mathsf{Florida}\rangle, \langle \mathsf{California}\rangle \}$ and
\mbox{$\mathit{inst}(\bar{\varphi}_3,\kb_0) = \emptyset$.}
%
The latter holds because if
$c \in \{\mathsf{Florida}$, $\mathsf{California}\}$, then $\varsigma_0(\langle c \rangle)$ is $\{\mathit{partOf}(c,\mathsf{US}), \top(c), \top(\mathsf{US})\}$ and, thus, $\bar{\varphi}_3(\varsigma_0(\langle c \rangle)) = \emptyset$.

\begin{proposition}
Consider some \marco{selective KB} $\kb =(\db, \varsigma)$ and some formula $\varphi$. It holds that $\mathit{inst}(\varphi,\kb) \subseteq \varphi(D)$.
\end{proposition}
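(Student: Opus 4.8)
The plan is to unwind the definitions directly: the statement $\mathit{inst}(\varphi,\kb) \subseteq \varphi(D)$ says that every tuple that is an instance of $\varphi$ according to $\kb$ is also in the output of $\varphi$ over the whole dataset $D$. So I would take an arbitrary $\tau \in \mathit{inst}(\varphi,\kb)$ and show $\tau \in \varphi(D)$.

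First I would recall that, by definition of an instance, $\tau \in \varphi(\varsigma(\tau))$, i.e., there is a $\C$-homomorphism $h$ from $\mathit{atm}(\varphi)$ to the summary $\varsigma(\db,\tau)$ such that $h(x_i) = \tau[i]$ for each free variable $x_i$. Next I would invoke the defining property of a summary selector, namely that $\varsigma(\db,\tau) \subseteq D$. Since a $\C$-homomorphism is a total map $h : \Dom_{\mathit{atm}(\varphi)} \to \Dom_{\varsigma(\tau)}$ fixing constants, and $\varsigma(\tau) \subseteq D$ implies $\Dom_{\varsigma(\tau)} \subseteq \Dom_D$, the very same map $h$ is also a total map into $\Dom_D$; moreover $p(t_1,\dots,t_n) \in \mathit{atm}(\varphi)$ implies $p(h(t_1),\dots,h(t_n)) \in \varsigma(\tau) \subseteq D$, so $h$ remains a $\C$-homomorphism, now from $\mathit{atm}(\varphi)$ to $D$. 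Since $h$ still satisfies $h(x_i) = \tau[i]$, the tuple $\tau$ witnesses membership in $\varphi(D)$ by the definition of the output of a formula over a dataset. Hence $\tau \in \varphi(D)$, and the inclusion follows.

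There is essentially no obstacle here — the result is a one-line consequence of monotonicity of the homomorphism-based semantics together with the containment $\varsigma(\db,\tau) \subseteq D$ built into the definition of a summary selector. The only point requiring a word of care is the observation that a homomorphism into a substructure is also a homomorphism into the larger structure (and that the constant-fixing condition and the free-variable condition are both unaffected by enlarging the target), which is immediate from the definitions. I would present the argument in two or three sentences exactly as above.
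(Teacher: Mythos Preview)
Your proposal is correct and follows essentially the same argument as the paper: take an arbitrary $\tau \in \mathit{inst}(\varphi,\kb)$, use the witnessing $\C$-homomorphism into $\varsigma(\tau)$, and observe that $\varsigma(\tau) \subseteq D$ makes the same map a $\C$-homomorphism into $D$. The only difference is that you spell out explicitly why enlarging the target preserves the homomorphism (domain inclusion, atom preservation, constant- and free-variable conditions), which the paper leaves implicit in a single sentence.
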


We end with a result being quite
useful since $\WCQ$ is closed under conjunction.

\begin{proposition}\label{prop:instConj}
Consider a
\marco{selective KB} $\kb =(\db, \varsigma)$ and two formulas $\varphi_1$ and $\varphi_2$ of the same arity.
It holds that $\mathit{inst}(\varphi_1 \wedge \varphi_2,\kb) = \mathit{inst}(\varphi_1,\kb) \cap \mathit{inst}(\varphi_2,\kb)$.
\end{proposition}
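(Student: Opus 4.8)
The plan is to prove the two inclusions separately, exploiting the definition of an instance ($\tau \in \mathit{inst}(\varphi,\kb)$ iff $\tau \in \varphi(\varsigma(\tau))$) together with the explicit shape of $\varphi_1 \wedge \varphi_2$ given in the excerpt. Write $n$ for the common arity, $\varphi_1 \wedge \varphi_2 = z_1,\dots,z_n \leftarrow \mathit{conj}(\varphi_1'), \mathit{conj}(\varphi_2')$, where $\varphi_i'$ renames the free variables to the shared $z_1,\dots,z_n$ and the non-free variables of $\varphi_i$ by tagging them with subscript $\varphi_i$. The crucial structural observation is that $\mathit{atm}(\varphi_1')$ and $\mathit{atm}(\varphi_2')$ share \emph{only} the variables $z_1,\dots,z_n$ (all other variables are disjointly renamed), and that $\mathit{atm}(\varphi_1') \cup \mathit{atm}(\varphi_2') = \mathit{atm}(\varphi_1 \wedge \varphi_2)$.

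For the inclusion $\subseteq$: suppose $\tau \in \mathit{inst}(\varphi_1 \wedge \varphi_2,\kb)$, so there is a $\C$-homomorphism $h$ from $\mathit{atm}(\varphi_1\wedge\varphi_2)$ to $\varsigma(\tau)$ with $h(z_j) = \tau[j]$ for each $j$. Restricting $h$ to $\Dom(\mathit{atm}(\varphi_1'))$ and composing with the renaming that produced $\varphi_1'$ from $\varphi_1$ (i.e. mapping each free variable $x_j$ of $\varphi_1$ to $h(z_j)=\tau[j]$ and each non-free $y$ to $h(y_{\varphi_1})$) yields a $\C$-homomorphism from $\mathit{atm}(\varphi_1)$ to $\varsigma(\tau)$ sending the $j$-th free variable to $\tau[j]$; hence $\tau \in \varphi_1(\varsigma(\tau)) = $ witnesses $\tau \in \mathit{inst}(\varphi_1,\kb)$. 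Symmetrically $\tau \in \mathit{inst}(\varphi_2,\kb)$, so $\tau$ lies in the intersection.

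For the inclusion $\supseteq$: suppose $\tau \in \mathit{inst}(\varphi_1,\kb) \cap \mathit{inst}(\varphi_2,\kb)$, so there are $\C$-homomorphisms $h_1 : \mathit{atm}(\varphi_1) \to \varsigma(\tau)$ and $h_2 : \mathit{atm}(\varphi_2) \to \varsigma(\tau)$, each sending the $j$-th free variable to $\tau[j]$. Note both have the \emph{same} target dataset $\varsigma(\tau)$ — this is exactly why the summary selector being a function of $\tau$ alone matters. Transport $h_1,h_2$ along the renamings to obtain maps on $\mathit{atm}(\varphi_1')$ and $\mathit{atm}(\varphi_2')$ respectively, and glue them: define $h$ on $\Dom(\mathit{atm}(\varphi_1\wedge\varphi_2))$ by $h(z_j) = \tau[j]$, $h(y_{\varphi_1}) = h_1(y)$ for non-free $y$ of $\varphi_1$, and $h(y_{\varphi_2}) = h_2(y)$ for non-free $y$ of $\varphi_2$. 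This is well-defined precisely because the only overlap between the two variable sets is $\{z_1,\dots,z_n\}$, on which both prescriptions agree (they both give $\tau[j]$). One then checks $h$ is a $\C$-homomorphism: every atom of $\varphi_1\wedge\varphi_2$ comes from $\varphi_1'$ or $\varphi_2'$, and on each piece $h$ restricts to (the transport of) $h_1$ or $h_2$, which maps it into $\varsigma(\tau)$; and $h$ fixes constants since $h_1,h_2$ do. Thus $\tau \in (\varphi_1\wedge\varphi_2)(\varsigma(\tau))$, i.e. $\tau \in \mathit{inst}(\varphi_1\wedge\varphi_2,\kb)$.

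The only real subtlety — and the step I would state most carefully — is the gluing in the $\supseteq$ direction: one must verify that the disjoint renaming of non-free variables guarantees the two partial maps are compatible, so their union is a genuine function, and that the homomorphism condition is checked atom-by-atom using the partition $\mathit{atm}(\varphi_1\wedge\varphi_2) = \mathit{atm}(\varphi_1') \cup \mathit{atm}(\varphi_2')$. Everything else is bookkeeping about the renaming bijections. Note that, unlike the analogous statement for plain conjunctive-query output over a fixed $D$, here the proof genuinely uses that $\varsigma(\tau)$ depends only on $\tau$; if the summary depended on the formula this argument would break.
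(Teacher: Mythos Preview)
Your proof is correct and follows essentially the same restrict-then-glue strategy as the paper's own argument: both directions hinge on the observation that $\mathit{atm}(\varphi_1')$ and $\mathit{atm}(\varphi_2')$ overlap only on $\{z_1,\dots,z_n\}$ (plus constants, where $\C$-homomorphisms must agree anyway), so partial homomorphisms can be restricted or glued without conflict. The one cosmetic difference is that where you pass between $\varphi_i$ and $\varphi_i'$ by explicitly composing with the renaming bijection, the paper instead invokes Proposition~\ref{prop:equinst} (since $\varphi_i \longleftrightarrow \varphi_i'$); your direct route is arguably cleaner and avoids the forward reference, but the underlying mechanics are identical.
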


\subsection{Relationships between Formulas}

Consider two formulas $\varphi_1$ and $\varphi_2$  s.t. $x_1,...,x_n$ and $y_1,...,y_n$ are their free variables, respectively.
Let \mbox{$h : \Dom_{\mathit{atm}(\varphi_1)} \rightarrow \Dom_{\mathit{atm}(\varphi_2)}$} be a map  s.t.
each \mbox{$h(x_i) =  y_i$.}
If $h$ is a \mbox{$\C$-homomorphism} from 
$\mathit{atm}(\varphi_1)$ to $\mathit{atm}(\varphi_2)$, then this fact is denoted by \mbox{$\varphi_1 \longrightarrow \varphi_2$.}
If \mbox{$\varphi_1 \longrightarrow \varphi_2$} and \mbox{$\varphi_2 \longrightarrow \varphi_1$} both hold, 
then the two formulas are said {\em equivalent}, denoted by \mbox{$\varphi_1 \longleftrightarrow \varphi_2$}.
Moreover, if $h$ is also a bijection, then the formulas are said {\em isomorphic}, denoted by \mbox{$\varphi_1 \simeq \varphi_2$,} implying they are also equivalent. 
\marco{Given a formula $\varphi$, as common, $[\varphi]$ denotes the {\em equivalence class} of $\varphi$ collecting all formulas $\varphi'$  s.t. $\varphi \simeq \varphi'$.}

\begin{proposition}\label{prop:equinst}
If \mbox{$\varphi_1 \longleftrightarrow \varphi_2$}, then \mbox{$\varphi_1(D) = \varphi_2(D)$} for each  $D$, and \mbox{$\mathit{inst}(\varphi_1,\kb) = \mathit{inst}(\varphi_2,\kb)$}
for each \marco{SKB} $\kb$.
\end{proposition}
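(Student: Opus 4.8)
The plan is to reduce everything to the first claim, namely that $\varphi_1(D) = \varphi_2(D)$ for every dataset $D$, because the statement about instances then follows with no extra work: if $\tau \in \mathit{inst}(\varphi_1,\kb)$ then by definition $\tau \in \varphi_1(\varsigma(\tau))$, hence $\tau \in \varphi_2(\varsigma(\tau))$ by the first claim instantiated at the dataset $D = \varsigma(\tau)$, i.e.\ $\tau \in \mathit{inst}(\varphi_2,\kb)$; the symmetric argument gives the reverse inclusion. So the whole proof rests on showing that homomorphically equivalent formulas produce the same output on every dataset.

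For that, I would unfold the hypothesis $\varphi_1 \longleftrightarrow \varphi_2$: by definition it supplies a $\C$-homomorphism $h : \Dom_{\mathit{atm}(\varphi_1)} \to \Dom_{\mathit{atm}(\varphi_2)}$ with $h(x_i) = y_i$ for all $i$, and a $\C$-homomorphism $g : \Dom_{\mathit{atm}(\varphi_2)} \to \Dom_{\mathit{atm}(\varphi_1)}$ with $g(y_i) = x_i$. To prove $\varphi_1(D) \subseteq \varphi_2(D)$, take an arbitrary $\langle t_1,\dots,t_n\rangle \in \varphi_1(D)$, witnessed by a $\C$-homomorphism $f : \mathit{atm}(\varphi_1) \to D$ with $f(x_i) = t_i$. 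The claim is that $f \circ g$ witnesses $\langle t_1,\dots,t_n\rangle \in \varphi_2(D)$: it is a map $\Dom_{\mathit{atm}(\varphi_2)} \to \Dom_D$, it satisfies $(f \circ g)(y_i) = f(x_i) = t_i$, and it sends every atom of $\varphi_2$ first into an atom of $\varphi_1$ (via $g$) and then into an atom of $D$ (via $f$). The reverse inclusion $\varphi_2(D) \subseteq \varphi_1(D)$ is entirely symmetric, composing a witness for $\varphi_2$ with $h$ instead of $g$.

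The only genuinely non-routine ingredient, which I would isolate as a one-line auxiliary observation before the main argument, is that the composition of two $\C$-homomorphisms is again a $\C$-homomorphism. Preservation of atoms is the usual diagram chase; the extra point is that constants stay fixed, which holds because a constant in the domain of the inner map is mapped to itself, that same constant then lies in the domain of the outer map, and is therefore fixed again. I do not expect any real obstacle here: this is essentially the textbook fact that homomorphic equivalence of conjunctive queries — here, equality-free primitive positive formulas with a designated tuple of free variables — preserves answers over every structure. The only places that require a little care are the bookkeeping of the free variables (the designated $x_i$ and $y_i$ must be matched up in the right order, which is exactly what the conditions $h(x_i)=y_i$ and $g(y_i)=x_i$ guarantee) and the treatment of the distinguished constant set $\C$ in the definition of $T$-homomorphism; once the composition observation is in place, both directions of the inclusion, and hence the instance statement, drop out immediately.
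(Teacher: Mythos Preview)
Your proposal is correct and follows essentially the same route as the paper's proof: compose the witnessing $\C$-homomorphism for $\tau\in\varphi_1(D)$ with the $\C$-homomorphism supplied by $\varphi_2\longrightarrow\varphi_1$ to obtain a witness for $\tau\in\varphi_2(D)$, argue symmetrically, and then derive the instance claim by applying the first part to the dataset $\varsigma(\tau)$. The paper simplifies notation by assuming without loss of generality that both formulas share the same free variables, and it records the composition-closure fact as a separate lemma, but the argument is otherwise identical to yours.
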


\marco{It is well-known that  every formula $\varphi$ admits ---up to isomorphism--- a unique equivalent formula of minimal size, denoted by $\mathit{core}(\varphi)$, called the {\em core} of $\varphi$~\cite{DBLP:journals/tods/FaginKP05}.
Thus, \mbox{$\varphi_1 \longleftrightarrow \varphi_2$} if, and only if,
$[\mathit{core}(\varphi_1)] = [\mathit{core}(\varphi_2)]$.
For example, from the formulas $\bar{\varphi}_4 = x   \leftarrow  
\mathsf{isa}(x, \mathsf{tp})$, $\mathsf{located}(x,y)$,
$\mathsf{located}(x,\mathsf{Florida})$ and
$\bar{\varphi}_5 = z  \leftarrow   
\mathsf{isa}(z, \mathsf{tp})$,  $\mathsf{located}(z,\mathsf{Florida})$ we get that $\bar{\varphi}_4 \longleftrightarrow \bar{\varphi}_5$ and that $[\mathit{core}(\bar{\varphi}_5)] = [\mathit{core}(\bar{\varphi}_4)] = [\bar{\varphi}_5$].}

\section{Nexus Characterizations}\label{sec:Explanations}

In the rest of this paper, consider to have a selective KB \mbox{$\kb =(\db, \varsigma)$} together with an $n$-ary unit $\unit = \{\tau_1,...,\tau_m\}$  such that $\Dom_\unit \subseteq \Dom_{\db}$, and refer $\unit$ as a {\em $\kb$-unit} for short. See, for instance, the $\kb_0$-unit $\unit_0$ in Example~\ref{ex:running}.
%


\begin{definition}\label{def:interpretations}
A  formula $\varphi$ {\em interprets (some)
nexus of similarity} between the tuples of the given $\kb$-unit $\unit$ 
if both the next hold: $(i)$ $\varphi \in \WCQ$; and $(ii)$ $\mathit{inst}(\varphi,\kb) \supseteq \unit$. 
\end{definition}

When the Definition~\ref{def:interpretations} applies, we may also say that $\varphi$ interprets $\unit$, that 
$\varphi$ is an interpretation for $\unit$, or that 
$\unit$ is interpreted by $\varphi$ (w.r.t. $\kb$).
For instance, according to Example~\ref{ex:running}, formula $\bar{\varphi}_1$ $=$ $x  \leftarrow   \mathsf{isa}(x, \mathsf{ap})$, $\mathsf{located}(x,y)$, $\mathsf{partOf}(y,\mathsf{US})$ interprets $\unit_0$.
Since datasets are closed under $\top$, 
the explanation ${x_1,...,x_n}$ $\leftarrow$  $\top(x_1),\!...,\!\top(x_n)$ always interprets any $n$-ary unit. Hence, the next result follows.

\begin{proposition}\label{prop:interpreted}
The given $\kb$-unit $\unit$ is always interpreted by some explanation.
\end{proposition}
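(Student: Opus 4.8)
The plan is to exhibit one explicit witness explanation and check it against Definition~\ref{def:interpretations}. The natural candidate is the \emph{trivial} formula $\varphi_\top = x_1,\dots,x_n \leftarrow \top(x_1),\dots,\top(x_n)$, where $n \geq 1$ is the arity of $\unit$ (recall that units, being proper, have positive arity). I would then establish the two required conditions separately.

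For condition $(i)$, $\varphi_\top \in \WCQ$, I would observe that $\varphi_\top$ is open, since every $x_i$ is a free variable occurring in the atom $\top(x_i)$, and that it is nearly connected: the dummy atom $\mathsf{free}(x_1,\dots,x_n)$ shares the variable $x_i$ with the atom $\top(x_i)$ for each $i$, so $\mathit{atm}(\varphi_\top)\cup\{\mathsf{free}(x_1,\dots,x_n)\}$ is connected (a star centered at the dummy atom). Hence $\varphi_\top$ belongs to the nexus explanation language.

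For condition $(ii)$, $\mathit{inst}(\varphi_\top,\kb)\supseteq\unit$, I would fix an arbitrary $\tau=\langle c_1,\dots,c_n\rangle\in\unit$. By definition of a summary selector, $\varsigma(\tau)=\varsigma(\db,\tau)$ is a dataset whose domain contains all the constants of $\tau$; since datasets are closed under $\top$, this gives $\top(c_i)\in\varsigma(\tau)$ for every $i$. Therefore the map $h$ defined by $h(x_i)=c_i$ is a $\C$-homomorphism from $\mathit{atm}(\varphi_\top)$ to $\varsigma(\tau)$ with $h(x_i)=\tau[i]$, so $\tau\in\varphi_\top(\varsigma(\tau))$, that is, $\tau$ is a $\varphi_\top$-instance according to $\kb$. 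Since $\tau$ was arbitrary, $\unit\subseteq\mathit{inst}(\varphi_\top,\kb)$, and by Definition~\ref{def:interpretations} the formula $\varphi_\top$ interprets $\unit$.

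I do not anticipate any real obstacle. The only two points that require a moment's care are the membership $\varphi_\top\in\WCQ$ --- which reduces to the (immediate) near-connectedness check --- and the fact that $n\geq 1$; everything else rests solely on the closure of datasets, and hence of summaries, under $\top$.
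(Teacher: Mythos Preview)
Your proof is correct and follows essentially the same approach as the paper: both use the explicit witness $x_1,\dots,x_n \leftarrow \top(x_1),\dots,\top(x_n)$ and verify the two conditions of Definition~\ref{def:interpretations} by appealing to near-connectedness via the dummy atom and to closure of summaries under $\top$. Your version is slightly more explicit in constructing the $\C$-homomorphism, but the argument is the same.
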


One may note, however, that formula  $\bar{\varphi}_1$ above  is not very precise. Intuitively, although
\mbox{$\mathit{inst}(\bar{\varphi}_1,\kb_0) \supseteq \unit_0$} holds,
$\bar{\varphi}_1$ neglects that the entities in $\unit_0$ are ``located in Florida''. 
In a sense, $\bar{\varphi}_1$ only partially expresses the nexus of similarity between the tuples of $\unit_0$. 
As an effect, $\langle\mathsf{Pacific\_ Park}\rangle$ ---which is not in Florida--- also belongs to $\mathit{inst}(\bar{\varphi}_1,\kb_0)$. Thus, we give the following definition.
%

\begin{definition}\label{def:characterizes}
\marco{A  formula $\varphi$ {\em characterizes (all) the nexus of similarity} between the tuples of the given $\kb$-unit $\unit$ if both the next conditions hold: $(i)$ $\varphi$ interprets $\unit$; and $(ii)$ \mbox{$\varphi \longrightarrow \varphi'$} implies $\varphi' \longrightarrow \varphi$  whenever $\varphi'$ interprets $\unit$.} 
\end{definition}

When the Definition~\ref{def:characterizes} applies, we may also say, for short, that $\varphi$ characterizes $\unit$, that 
$\varphi$ is a characterization for $\unit$, or that 
$\unit$ is characterized by $\varphi$ (w.r.t. $\kb$).
%
Considering, for example, the formula $\bar{\varphi}_a$ given in Figure~\ref{fig:queries}, we get that $\bar{\varphi}_1$
does not characterize $\unit_0$ since $\bar{\varphi}_1 \longrightarrow \bar{\varphi}_a$ holds, $\bar{\varphi}_a$ also interprets $\unit_0$, but $\bar{\varphi}_a \longarrownot\longrightarrow \bar{\varphi}_1$. 
Indeed, $\bar{\varphi}_a$ does characterize $\unit_0$ and the reason will be clear at the end of this section. 
From Definition~\ref{def:characterizes}, if $\varphi_1$ and $\varphi_2$ characterize $\unit$, then they could be either equivalent ($\varphi_1 \longrightarrow \varphi_2$ and $\varphi_2 \longrightarrow \varphi_1$) or incomparable ($\varphi_1 \longarrownot\longrightarrow \varphi_2$ and $\varphi_2 \longarrownot\longrightarrow \varphi_1$). 
Since, by Proposition~\ref{prop:conj}, $\WCQ$ is closed under conjunction, the next results follow.

\begin{proposition}\label{pro:equiva}
If two explanations characterize $\unit$, then they are equivalent.
\end{proposition}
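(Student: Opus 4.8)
The plan is to argue by contradiction using the closure of $\WCQ$ under conjunction (Proposition~\ref{prop:conj}) together with the behaviour of $\mathit{inst}$ on conjunctions (Proposition~\ref{prop:instConj}). Suppose $\varphi_1$ and $\varphi_2$ both characterize $\unit$ but are not equivalent. By the remark right after Definition~\ref{def:characterizes}, the only remaining possibility is that they are incomparable, i.e. neither $\varphi_1 \longrightarrow \varphi_2$ nor $\varphi_2 \longrightarrow \varphi_1$ holds. First I would form the conjunction $\psi := \varphi_1 \wedge \varphi_2$; note that both formulas have the same arity $n$ (they both interpret the $n$-ary unit $\unit$), so $\psi$ is well defined and, by Proposition~\ref{prop:conj}, $\psi \in \WCQ$.

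Next I would show that $\psi$ interprets $\unit$. Since $\varphi_1$ and $\varphi_2$ each interpret $\unit$, we have $\mathit{inst}(\varphi_1,\kb) \supseteq \unit$ and $\mathit{inst}(\varphi_2,\kb) \supseteq \unit$; by Proposition~\ref{prop:instConj}, $\mathit{inst}(\psi,\kb) = \mathit{inst}(\varphi_1,\kb) \cap \mathit{inst}(\varphi_2,\kb) \supseteq \unit$. Combined with $\psi \in \WCQ$, this gives condition $(i)$ and $(ii)$ of Definition~\ref{def:interpretations}, so $\psi$ interprets $\unit$. Now I would apply the characterization property of $\varphi_1$ (condition $(ii)$ of Definition~\ref{def:characterizes}): there is an obvious homomorphism witnessing $\varphi_1 \longrightarrow \psi$ — namely the map sending each free variable of $\varphi_1$ to the corresponding free variable of $\psi$ and each atom of $\varphi_1$ to its renamed copy $\mathit{conj}(\varphi_1')$ inside $\psi$, so that $\varphi_1 \longrightarrow \psi$. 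Since $\psi$ interprets $\unit$, characterization of $\varphi_1$ forces $\psi \longrightarrow \varphi_1$. Composing with the (still obvious) homomorphism $\varphi_2 \longrightarrow \psi$ yields $\varphi_2 \longrightarrow \varphi_1$, contradicting incomparability. Hence $\varphi_1$ and $\varphi_2$ must be equivalent.

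The one step that needs care — and which I expect to be the main (if modest) obstacle — is verifying cleanly that $\varphi_i \longrightarrow \varphi_1 \wedge \varphi_2$ for $i \in \{1,2\}$ directly from the definition of the conjunction operation $\wedge$ given in the excerpt. The definition renames free variables to fresh $z_j$ and each non-free variable $y$ of $\varphi_i$ to $y_{\varphi_i}$, so the required map is $x_j \mapsto z_j$ on free variables and $y \mapsto y_{\varphi_i}$ on the rest; one must check that this is a genuine $\C$-homomorphism (constants are fixed, and every atom $p_i(\mathbf{t}_i)$ of $\varphi_i$ maps to the atom $p_i$ applied to the renamed sequence, which by construction appears in $\mathit{conj}(\varphi_i')$ and hence in $\mathit{conj}(\psi)$) and that it sends the $j$-th free variable of $\varphi_i$ to the $j$-th free variable of $\psi$, as required by the definition of $\longrightarrow$. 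This is routine bookkeeping once the renaming conventions are unwound. Everything else is an immediate invocation of the already-established propositions, so the proof is short.
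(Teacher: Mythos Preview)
Your proof is correct and follows essentially the same route as the paper: form the conjunction $\psi=\varphi_1\wedge\varphi_2$, show it interprets $\unit$ via Proposition~\ref{prop:instConj}, observe $\varphi_i\longrightarrow\psi$, then use the characterization property to force $\psi\longrightarrow\varphi_i$ and derive a contradiction by composition. Your extra care in unwinding the renaming conventions to justify $\varphi_i\longrightarrow\psi$ is a welcome detail that the paper simply asserts.
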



\begin{cor}\label{cor:char} A  formula $\varphi$ {\em characterizes} the given $\kb$-unit $\unit$ if, and only if, both the following hold: $(i)$ $\varphi$ interprets $\unit$; and $(ii)$ $\varphi' \longrightarrow \varphi$  whenever $\varphi'$ interprets $\unit$.
\end{cor}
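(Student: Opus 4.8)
The plan is to derive the corollary directly from Definition~\ref{def:characterizes} together with Proposition~\ref{pro:equiva}. The statement to prove is an ``if and only if'' where both sides share condition $(i)$ (``$\varphi$ interprets $\unit$''), so the work reduces to showing that, under the assumption that $\varphi$ interprets $\unit$, condition $(ii)$ of Definition~\ref{def:characterizes} --- namely ``$\varphi \longrightarrow \varphi'$ implies $\varphi' \longrightarrow \varphi$ whenever $\varphi'$ interprets $\unit$'' --- is equivalent to the simpler condition $(ii)$ of the corollary --- namely ``$\varphi' \longrightarrow \varphi$ whenever $\varphi'$ interprets $\unit$''. First I would handle the easy direction: if the corollary's $(ii)$ holds, then trivially the definitional $(ii)$ holds, since its conclusion ``$\varphi' \longrightarrow \varphi$'' is already guaranteed for every interpretation $\varphi'$, regardless of whether $\varphi \longrightarrow \varphi'$. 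So any $\varphi$ satisfying the corollary's two conditions is a characterization in the sense of Definition~\ref{def:characterizes}.

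For the converse --- the substantive direction --- assume $\varphi$ characterizes $\unit$ in the sense of Definition~\ref{def:characterizes}, and let $\varphi'$ be an arbitrary formula that interprets $\unit$; I must show $\varphi' \longrightarrow \varphi$. The idea is to bring in the conjunction $\psi := \varphi \wedge \varphi'$. By Proposition~\ref{prop:conj} (closure of $\WCQ$ under conjunction), $\psi \in \WCQ$, and by Proposition~\ref{prop:instConj} we have $\mathit{inst}(\psi,\kb) = \mathit{inst}(\varphi,\kb) \cap \mathit{inst}(\varphi',\kb) \supseteq \unit$, so $\psi$ interprets $\unit$. From the construction of the conjunction there is an obvious homomorphism witnessing $\varphi \longrightarrow \psi$ (the identity on free variables, mapping each non-free variable $y$ of $\varphi$ to $y_\varphi$). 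Since $\psi$ interprets $\unit$, the definitional condition $(ii)$ applied to $\varphi' := \psi$ yields $\psi \longrightarrow \varphi$. Composing with the equally obvious homomorphism $\varphi' \longrightarrow \psi$, we obtain $\varphi' \longrightarrow \psi \longrightarrow \varphi$, hence $\varphi' \longrightarrow \varphi$ by transitivity of $\longrightarrow$ (composition of $\C$-homomorphisms fixing free variables). This establishes the corollary's $(ii)$, completing the equivalence.

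Alternatively, one can avoid the conjunction entirely: by Proposition~\ref{pro:equiva}, if $\varphi$ characterizes $\unit$ and $\varphi'$ also characterizes $\unit$ then $\varphi \longleftrightarrow \varphi'$, but this only handles the case where $\varphi'$ itself is a characterization, so the conjunction argument (which turns an arbitrary interpretation into one comparable with $\varphi$) is the cleaner route and the one I would write up. The main --- really the only --- obstacle is making sure the two ``obvious'' homomorphisms $\varphi \longrightarrow \varphi\wedge\varphi'$ and $\varphi' \longrightarrow \varphi\wedge\varphi'$ are stated correctly with respect to the renaming scheme in the definition of $\wedge$ (free variable $j$ becomes $z_j$, non-free $y$ becomes $y_\varphi$ or $y_{\varphi'}$), and that composition of such homomorphisms respects the ``fixes the free variables'' requirement in the definition of $\longrightarrow$; both are routine once the renaming is written out, and everything else is a one-line implication.
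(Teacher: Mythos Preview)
Your proposal is correct and follows essentially the same approach as the paper: the paper does not give an explicit proof of Corollary~\ref{cor:char} but states that it (together with Proposition~\ref{pro:equiva}) follows from closure of $\WCQ$ under conjunction, and the conjunction trick you spell out---form $\psi=\varphi\wedge\varphi'$, observe $\varphi\longrightarrow\psi$ and $\varphi'\longrightarrow\psi$, use the definitional condition to get $\psi\longrightarrow\varphi$, then compose---is exactly the argument intended (it is the same maneuver used verbatim in the paper's proof of Proposition~\ref{pro:equiva}). Your remark that Proposition~\ref{pro:equiva} alone does not suffice is also correct.
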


Indeed, in a sense, a formula characterizing $\unit$ fully expresses it.
What remains to know is whether every unit admits such a formula. 
The next proposition provides an affirmative answer to this question.

\begin{theorem}\label{thm:characterized}
The given $\kb$-unit $\unit$ is always characterized by some explanation.
\end{theorem}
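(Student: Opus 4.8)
The plan is to construct a characterization for $\unit$ explicitly, as a finite conjunction of carefully chosen explanations, and then invoke the closure of $\WCQ$ under conjunction (Proposition~\ref{prop:conj}) together with Proposition~\ref{prop:instConj} to conclude that the conjunction is again an interpretation, and finally argue maximality. First I would observe that, by Proposition~\ref{prop:interpreted}, the set of explanations interpreting $\unit$ is nonempty. The key reduction is that, up to equivalence $\longleftrightarrow$, there are only \emph{finitely many} explanations that interpret $\unit$. To see this: if $\varphi$ interprets $\unit$, then $\unit \subseteq \mathit{inst}(\varphi,\kb) \subseteq \varphi(D)$, so in particular $\tau_1 \in \varphi(\varsigma(\tau_1))$, meaning there is a $\C$-homomorphism $h$ from $\mathit{atm}(\varphi)$ into the \emph{finite} structure $\varsigma(\tau_1) \subseteq D$ sending the free variables to the components of $\tau_1$. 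The homomorphic image $h(\mathit{atm}(\varphi))$ is a sub-structure of $\varsigma(\tau_1)$, hence there are only finitely many such images; and $\varphi \longleftrightarrow \varphi'$ where $\varphi'$ is the natural formula read off from $h(\mathit{atm}(\varphi))$ (with free variables the components of $\tau_1$, which must still occur there since $\varphi$ is nearly connected and $h$ fixes constants/free-variable-images). So up to $\longleftrightarrow$, every interpreter of $\unit$ is one of finitely many formulas $\varphi_1,\dots,\varphi_k$.

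Next I would take $\psi := \varphi_1 \wedge \varphi_2 \wedge \cdots \wedge \varphi_k$ (all the $\varphi_i$ have the same arity $n$, since they interpret the $n$-ary unit $\unit$). By Proposition~\ref{prop:conj}, $\psi \in \WCQ$. By Proposition~\ref{prop:instConj} (iterated), $\mathit{inst}(\psi,\kb) = \bigcap_{i=1}^k \mathit{inst}(\varphi_i,\kb) \supseteq \unit$, since each $\varphi_i$ interprets $\unit$. Hence $\psi$ interprets $\unit$ — condition $(i)$ of Definition~\ref{def:characterizes}. For condition $(ii)$: let $\varphi'$ be any explanation interpreting $\unit$; then $\varphi'$ is equivalent to some $\varphi_j$ in our list, and it is immediate from the construction of the conjunction that $\psi \longrightarrow \varphi_j$ (the homomorphism projecting $\mathit{conj}(\psi)$ onto the $\varphi_j'$-part, identifying the corresponding free variables), so $\psi \longrightarrow \varphi_j \longleftrightarrow \varphi'$ gives $\psi \longrightarrow \varphi'$. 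Using Corollary~\ref{cor:char}, it actually suffices instead to check that $\varphi' \longrightarrow \psi$ whenever $\varphi'$ interprets $\unit$; but one can also argue directly from Definition~\ref{def:characterizes}: if $\psi \longrightarrow \varphi'$ with $\varphi'$ an interpreter, then $\varphi' \longleftrightarrow \varphi_j$ for some $j$, and since $\varphi_j$ is one of the conjuncts, $\varphi_j \longrightarrow \psi$ (map each conjunct's atoms back via the shared free variables — this is where closure under conjunction gives exactly the right variable-renaming), hence $\varphi' \longrightarrow \varphi_j \longrightarrow \psi$ as required.

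The main obstacle I anticipate is making the finiteness argument fully rigorous in the presence of the \emph{nearly connected} requirement and the free variables: I must ensure that the formula $\varphi'$ read off from the homomorphic image $h(\mathit{atm}(\varphi)) \subseteq \varsigma(\tau_1)$ is still a legitimate element of $\WCQ$ — i.e., still open with the right free variables and still nearly connected. The free variables are safe because $h$ fixes them (they are mapped to the constants $\tau_1[1],\dots,\tau_1[n]$, and any distinct free variables mapping to the same constant is handled by the "proper unit" hypothesis only at the unit level, so here I instead keep one free-variable \emph{slot} per output position, possibly bound to a repeated constant — I should double-check the exact bookkeeping the paper uses for $\varphi(D)$ with repeated constants among components). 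Near-connectedness is inherited because homomorphic images of (nearly) connected structures are (nearly) connected. A secondary subtlety is that the conjunction $\varphi_1 \wedge \cdots \wedge \varphi_k$ as defined renames non-free variables apart, so I must confirm that $\mathit{inst}$ behaves as Proposition~\ref{prop:instConj} states for the $k$-fold conjunction (a trivial induction), and that "$\psi \longrightarrow \varphi_j$" holds with the precise variable-matching convention fixed in the "Relationships between Formulas" subsection. Once these are pinned down, the argument closes cleanly.
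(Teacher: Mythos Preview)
Your finiteness claim is false, and this is the crux of the gap. You assert that, up to $\longleftrightarrow$, there are only finitely many explanations interpreting $\unit$, arguing that each interpreter $\varphi$ is equivalent to the formula $\varphi'$ read off from its homomorphic image $h(\mathit{atm}(\varphi)) \subseteq \varsigma(\tau_1)$. But $h$ only gives you $\varphi \longrightarrow \varphi'$; the converse $\varphi' \longrightarrow \varphi$ need not hold. Concretely, take $\unit = \{\langle a \rangle\}$ with $\varsigma(\langle a\rangle) = \{r(a,a),\top(a)\}$, and for each $n \geq 2$ let $\varphi_n$ be the $n$-cycle $x \leftarrow r(x,y_1),r(y_1,y_2),\ldots,r(y_{n-1},x),\top(x),\ldots,\top(y_{n-1})$. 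Every $\varphi_n$ lies in $\WCQ$ and interprets $\unit$ (collapse all variables to $a$), yet $\varphi_n \longrightarrow \varphi_m$ holds iff $m \mid n$, so the $\varphi_p$ for distinct primes $p$ are pairwise non-equivalent. There are thus infinitely many equivalence classes of interpreters, and your conjunction $\psi$ is not a finite formula. Here every $\varphi_n$ has the \emph{same} image formula $\varphi' = x \leftarrow r(x,x),\top(x)$, which is strictly more specific than each $\varphi_n$ with $n>1$. A second problem appears once $|\unit|>1$: the image formula $\varphi'$ need not even interpret $\unit$, because constants from $\varsigma(\tau_1)$ surviving in $\varphi'$ may be absent from $\varsigma(\tau_2)$.

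The paper sidesteps all of this with a direct-product construction. It forms $P = \varsigma(\tau_1)\otimes\cdots\otimes\varsigma(\tau_m)$ and reads off a single explanation $\mathit{can}(\unit,\kb)$ from (the nearly connected part of) $P$, with extra bookkeeping for constants shared across all summaries. The universal property of the product delivers exactly what your argument is missing: the projections witness that $\mathit{can}(\unit,\kb)$ interprets $\unit$, while for \emph{every} interpreter $\varphi'$ the tuple of witnessing homomorphisms $h_1,\ldots,h_m$ combines into a single homomorphism into $P$, giving $\varphi' \longrightarrow \mathit{can}(\unit,\kb)$. This is Proposition~\ref{thm:can}, from which the theorem follows immediately via Corollary~\ref{cor:char}. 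If you wish to salvage your conjunction strategy, you must replace ``image in $\varsigma(\tau_1)$'' by ``image in the product $P$'' --- at which point the finite list of candidate formulas consists of substructures of a single finite object, and the argument collapses to the paper's.
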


%

The remaining of the section is devoted to prove this  important result. 
Hereafter, for $i,j \in\mathbb{N}^+$, if $i \leq j$, then $[i..j]$ is a shorthand for $\{i,..,j\} \subseteq \mathbb{N}^+$, else $[i..j]$ is the empty set.
In particular, if $i = 1$, then $[j]$ is a short form for $[1..j]$.
Consider the $n$-ary tuples 
$\bar{\tau}_1, ..., \bar{\tau}_\ell$.
Their {\em direct product}, denoted by $\bar{\tau}_1  \otimes ... \otimes \bar{\tau}_\ell$, is the sequence $d_{\bar{\bf s}_1},...,d_{\bar{\bf s}_n}$ of constants, where each $\bar{\bf s}_i$ is  
$\bar{\tau}_1[i],...,\bar{\tau}_\ell[i]$.
For example, the direct product of the tuples
$\langle 1,2\rangle$, $\langle 3,4\rangle$, $\langle 5,6\rangle$ is the sequence $d_{1,3,5}, d_{2,4,6}$.
Accordingly, given the datasets $D_1, ..., D_k$, their {\em direct product}, denoted by $D_1 \otimes...\otimes D_k$, is the structure 
\[
\left\{p(\langle{\bf c}_1\rangle \otimes ... \otimes \langle{\bf c}_k\rangle)~:~p({\bf c}_1) \in 
D_1 , ... ,  p({\bf c}_k)\in D_k\right\}.
\]

\begin{proposition}
The direct product of datasets is a dataset.
\end{proposition}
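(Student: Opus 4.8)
Write $P := D_1 \otimes \cdots \otimes D_k$. The plan is to check, one by one, the four conditions in the definition of a dataset: that $P$ is a finite nonempty structure, that it is closed under $\top$, and that $\Dom_P \subset \C$.

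Two of these are essentially immediate. $P$ is a structure by definition; its atoms have the form $p(d_{{\bf s}_1},\dots,d_{{\bf s}_{|p|}})$ where each ${\bf s}_\ell$ is a sequence of constants taken from the domains of the $D_i$'s (each $D_i$ being a dataset has $\Dom_{D_i}\subset\C$), and such $d_{\bf s}$ are constants; moreover $P$ is finite, since for a fixed predicate $p$ a product atom is determined by a choice of one $p$-atom from each of the finitely many $D_i$, each of which contains finitely many atoms. Hence $\Dom_P$ is finite, so $\Dom_P \subsetneq \C$. The remaining two conditions both rest on one small remark: \emph{every dataset contains at least one $\top$-atom}, because a dataset is nonempty, hence has a nonempty domain, and being closed under $\top$ it contains $\top(t)$ for some $t$ in that domain.

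Nonemptiness of $P$ follows at once: pick $\top(c_i) \in D_i$ for each $i$ (possible by the remark); then $\top(\langle c_1\rangle \otimes \cdots \otimes \langle c_k\rangle) \in P$ by the definition of the product. For closure of $P$ under $\top$, take any $t \in \Dom_P$. It occurs in some atom of $P$, which by definition is $p(\langle {\bf c}_1\rangle \otimes \cdots \otimes \langle {\bf c}_k\rangle)$ for suitable $p({\bf c}_i) \in D_i$; say $t = d_{\bf s}$ with ${\bf s} = {\bf c}_1[\ell],\dots,{\bf c}_k[\ell]$, so ${\bf c}_i[\ell] \in \Dom_{D_i}$ for every $i$. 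Each $D_i$ is closed under $\top$, so $\top({\bf c}_i[\ell]) \in D_i$, and therefore $\top(\langle {\bf c}_1[\ell]\rangle \otimes \cdots \otimes \langle {\bf c}_k[\ell]\rangle) = \top(d_{\bf s}) = \top(t) \in P$. I do not expect a real obstacle here; the only points requiring a little care are that, inside a single product atom, the chosen factor-atoms all share the same predicate and hence the same arity, so the coordinate-wise direct product of their argument sequences is well defined, and that one must invoke the $\top$-closure of the factors (not merely their nonemptiness) to obtain both nonemptiness and $\top$-closure of the product.
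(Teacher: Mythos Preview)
Your proof is correct and follows essentially the same approach as the paper's: both hinge on the $\top$-closure of the factor datasets to obtain $\top$-closure of the product. If anything, your argument is more complete, as the paper's proof checks only $\top$-closure explicitly and leaves finiteness, nonemptiness, and $\Dom_P\subset\C$ implicit, whereas you verify all four defining conditions of a dataset.
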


Our goal is to build a {\em canonical characterization} of the \mbox{$n$-ary} unit $\unit\!=\!\{\tau_1,...,\tau_m\}$ according to $\kb\!=\!(\db, \varsigma)$, called $\mathit{can}(\unit,\kb)$.
First, we build the set $\FreeConst = \{d_{{\bf s}_1},...,d_{{\bf s}_n}\}$ of constants used to determine the free variables of $\mathit{can}(\unit,\kb)$, where 
$d_{{\bf s}_1},...,d_{{\bf s}_n}$ $=$ $\tau_1 \otimes...\otimes \tau_m$.
Then, we build the dataset $P = \varsigma(\tau_1) \otimes ... \otimes \varsigma(\tau_m)$ used to determine some atoms of $\mathit{can}(\unit,\kb)$.
%
%
Consider now some $d_{\bf s}$ occurring in $P$, and let $\Dom_{\bf s}$ be the set of constants of ${\bf s}$.
If $|\Dom_{\bf s}| = 1$, then the atoms of $P$ containing $d_{\bf s}$
might have to be ``cloned'' to determine some extra 
atoms of $\mathit{can}(\unit,\kb)$.
Let $\Genes = \{d_{\bf s} \in \Dom_P : |\Dom_{\bf s}| = 1\}$ be the set of constants used as possible ``genes'' for such clones.
For each $d_{\bf s} \in \Dom_P$, let

\vspace{-4.5mm}

\begin{equation*}
	f(d_{\bf s})\ =\ \left\{
	\begin{array}{ll}
		\{ d_{\bf s}, {\bf s}[1] \} & \ \ \ \text{if }  d_{\bf s} \in \FreeConst \, \wedge \, d_{\bf s} \in \Genes\\
  
		\{ d_{\bf s} \} & \ \ \ \text{otherwise}.
	\end{array} \right.
\end{equation*}

\vspace{-2mm}

\noindent Accordingly, for any atom $\alpha = p(t_1,...,t_k)$ of $P$, we define: $\mathit{clones}(\alpha)$ as the set $\{ p(c_1,...,c_k):\mbox{each}~c_i \in f(t_i) \} \setminus \{\alpha\}$
and $C = \{ \alpha' \in\mathit{clones}(\alpha): \alpha \in P\}$.
\marco{Let $\mu$ be the mapping $\{c \mapsto c: c \in \Dom_C \setminus \Dom_P\}$ $\cup$ $\{ d_{\bf s} \mapsto g(d_{\bf s}) : d_{\bf s} \in \Dom_P\}$ used to transform atoms of $P \cup C$ into atoms of  $\mathit{can}(\unit,\kb)$, where }

\vspace{-2.5mm}

\begin{equation*}
	g(d_{\bf s}) \ =\ \left\{
	\begin{array}{ll}
		x_{\bf s} & \text{if } d_{\bf s} \in \FreeConst \\
		y_{\bf s}  & \text{if } d_{\bf s} \not\in \FreeConst \, \wedge \, d_{\bf s} \not\in \Genes \\
		{\bf s}[1]  & \text{if } d_{\bf s} \not\in \FreeConst \, \wedge \, d_{\bf s} \in \Genes .
	\end{array} \right.
\end{equation*}

\vspace{-0.5mm}

\noindent Consider the next formula (``$\wedge$'' is used instead of ``$,$''):

\vspace{-4.5mm}

\begin{equation*}
	\Phi =x_{{\bf s}_1},...,x_{{\bf s}_n} \leftarrow  \bigwedge_{p(t_1,...t_k) \in P \cup C} \ p(\mu(t_1),...,\mu(t_k)).
\end{equation*}

\vspace{-2mm}

\noindent Let $\mathit{can}(\unit,\kb)$ be the formula obtained from $\Phi$ by removing atoms that are not in the maximal (under subset inclusion) nearly connected part of $\mathit{atm}(\Phi)$.

\begin{example}
Consider the \marco{SKB} $\kb$ $=$ $(\db, \varsigma)$ and the unit $\unit$ $=$ 
$\{\langle 1,1 \rangle, \langle 1,2 \rangle\}$, 
where $\db$ $=$ $\{r(1,2)$, $r(2,1)$, $s(2,1)$, $s(1,2)$, $\top(1)$, $\top(2)\}$, and $\varsigma$ is s.t. $\varsigma(\langle 1,1\rangle)$ $=$ $\{r(1,2)$, $s(1,2)$, $\top(1)$, $\top(2)\}$ and 
$\varsigma(\langle 1,2\rangle)$ $=$ $\{r(1,2)$, $s(2,1)$, $\top(1)$, $\top(2)\}$.
By computing the direct product between the tuples of $\unit$ we get
$\mathit{Fr}$ $=$ $\{d_{1,1},d_{1,2}\}$.
By computing the direct product between the summaries of the tuples of $\unit$ we get
$P$ $=$ 
$\{r(d_{1,1},d_{2,2})$, 
$s(d_{1,2},d_{2,1})$,
$\top(d_{1,1})$,
$\top(d_{1,2})$,
$\top(d_{2,1})$,
$\top(d_{2,2})\}$;
thus, the domain of $P$ is $\Dom_P$ $=$ 
$\{d_{1,1}$, 
$d_{1,2}$, 
$d_{2,2}$, 
$d_{2,1}\}$.
We can now compute the set $\mathit{Ge}$ $=$ $\{d_{1,1}$, $d_{2,2}\}$ of genes.
Accordingly, $f(d_{1,1})$ $=$ $\{d_{1,1},1\}$,
$f(d_{1,2})$ $=$ $\{d_{1,2}\}$,
$f(d_{2,1})$ $=$ $\{d_{2,1}\}$, and
$f(d_{2,2})$ $=$ $\{d_{2,2}\}$.
Hence, $C$ $=$ $\{r(1,d_{2,2})$, $\top(1)\}$.
Also, 
$g(d_{1,1})$ $=$ $x_{1,1}$, 
$g(d_{1,2})$ $=$ $x_{1,2}$, 
$g(d_{2,1})$ $=$ $y_{2,1}$, and 
$g(d_{2,2})$ $=$ $2$.
Finally,
$\mu$ $=$
$\{1\!\mapsto\!1\}$ $\cup$ 
$\{d_{1,1}\!\mapsto\!x_{1,1}$,
$d_{1,2}\!\mapsto\!x_{1,2}$,
$d_{2,1}\!\mapsto\!y_{2,1}$,
$d_{2,2}\!\mapsto\!2\}$ so that
$\Phi$ $=$ $\mathit{can}(\unit,\kb)$ $=$ $x_{1,1},x_{1,2}$ 
$\leftarrow$ 
$r(x_{1,1},2)$, 
$s(x_{1,2},y_{2,1})$,
$\top(x_{1,1})$,
$\top(x_{1,2})$,
$\top(y_{2,1})$,
$\top(2)$,
$r(1,2)$, 
$\top(1)$.
Clearly, $\langle 1,1 \rangle$ is in the output to $\mathit{can}(\unit,\kb)$ over $\varsigma(\langle 1,1\rangle)$ and
 $\langle 1,2 \rangle$ is in the output to $\mathit{can}(\unit,\kb)$ over $\varsigma(\langle 1,2\rangle)$.
\hfill {\tiny $\blacksquare$}
\end{example}

\begin{proposition}\label{thm:can}
It holds that $\mathit{can}(\unit,\kb)$ characterizes $\unit$.
\end{proposition}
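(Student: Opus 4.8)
The plan is to establish the two conditions of Corollary~\ref{cor:char} for $\varphi = \mathit{can}(\unit,\kb)$: first, that $\mathit{can}(\unit,\kb)$ \emph{interprets} $\unit$; and second, that $\varphi' \longrightarrow \mathit{can}(\unit,\kb)$ for every explanation $\varphi'$ that interprets $\unit$.

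For the first condition, $\mathit{can}(\unit,\kb) \in \WCQ$ is essentially built in: it is nearly connected because, by construction, the atoms outside the maximal nearly connected part of $\mathit{atm}(\Phi)$ are discarded, and it is a genuine open formula because each free variable occurs in it. Indeed, since every $\varsigma(\tau_j)$ is a dataset (hence closed under $\top$) whose domain contains the constants of $\tau_j$, we have $\top(d_{{\bf s}_i}) \in P$ for each $i \in [n]$, so $\top(x_{{\bf s}_i}) \in \mathit{atm}(\Phi)$; this atom shares the free variable $x_{{\bf s}_i}$ with the dummy atom $\mathsf{free}(x_{{\bf s}_1},\dots,x_{{\bf s}_n})$ and therefore survives in $\mathit{can}(\unit,\kb)$ (the free variables are pairwise distinct because $\unit$ is proper). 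To get $\unit \subseteq \mathit{inst}(\mathit{can}(\unit,\kb),\kb)$ I would fix $j \in [m]$ and take the map $\pi_j$ that fixes every constant and sends each variable $x_{\bf s}$ and $y_{\bf s}$ of $\Phi$ to the constant ${\bf s}[j]$. A short case analysis, according to whether an atom of $\Phi$ originates from $P$ or from a clone in $C$, shows that $\pi_j$ is a $\C$-homomorphism from $\mathit{atm}(\Phi)$ --- hence from the subset $\mathit{atm}(\mathit{can}(\unit,\kb))$ --- into $\varsigma(\tau_j)$; the crucial point is that $g$ replaces $d_{\bf s}$ by the constant ${\bf s}[1]$ only when $d_{\bf s} \in \Genes$, in which case ${\bf s}[1] = {\bf s}[j]$, keeping the projection coherent. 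Since $\pi_j(x_{{\bf s}_i}) = {\bf s}_i[j] = \tau_j[i]$, this gives $\tau_j \in \mathit{can}(\unit,\kb)(\varsigma(\tau_j))$, i.e., $\tau_j \in \mathit{inst}(\mathit{can}(\unit,\kb),\kb)$.

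For the second condition, let $\varphi' \in \WCQ$ interpret $\unit$, with free variables $y'_1,\dots,y'_n$; by definition there are $\C$-homomorphisms $h_j : \mathit{atm}(\varphi') \to \varsigma(\tau_j)$ with $h_j(y'_i) = \tau_j[i]$, for $j \in [m]$. For a term $t$ of $\varphi'$ let ${\bf r}_t$ denote the sequence $h_1(t),\dots,h_m(t)$; since $t$ occurs in some atom of $\varphi'$, whose $h_j$-images are atoms of the $\varsigma(\tau_j)$'s, the corresponding product atom belongs to $P$ and so $d_{{\bf r}_t} \in \Dom_P$. I would then define $\lambda(t) = g(d_{{\bf r}_t})$ for a variable $t$ and $\lambda(c) = c$ for a constant $c$, and verify that $\lambda$ is a $\C$-homomorphism from $\mathit{atm}(\varphi')$ to $\mathit{atm}(\Phi)$: given $p(t_1,\dots,t_k) \in \mathit{atm}(\varphi')$, the product atom $\alpha = p(d_{{\bf r}_{t_1}},\dots,d_{{\bf r}_{t_k}})$ lies in $P$, and at the positions $i$ where $t_i$ is a constant with $d_{{\bf r}_{t_i}} \in \FreeConst$ we have $d_{{\bf r}_{t_i}} \in \FreeConst \cap \Genes$ (its sequence is constant), so $t_i = {\bf r}_{t_i}[1] \in f(d_{{\bf r}_{t_i}})$; replacing $d_{{\bf r}_{t_i}}$ by $t_i$ exactly at those positions yields an atom $\beta \in P \cup C$ with $\mu(\beta) = p(\lambda(t_1),\dots,\lambda(t_k))$, since the other positions are matched by $g$ directly (a non-free gene is sent to its constant ${\bf s}[1]$, which is $\lambda(c) = c$ when the position held a constant, and a non-gene is sent to a variable $y_{\bf s}$ or $x_{\bf s}$, which is $\lambda(t) = g(d_{{\bf r}_t})$). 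Moreover $\lambda(y'_i) = g(d_{{\bf s}_i}) = x_{{\bf s}_i}$ because $d_{{\bf s}_i} \in \FreeConst$. Finally, to see that the image lands in $\mathit{can}(\unit,\kb)$ rather than merely in $\Phi$: extending $\lambda$ by $\mathsf{free}(y'_1,\dots,y'_n) \mapsto \mathsf{free}(x_{{\bf s}_1},\dots,x_{{\bf s}_n})$, the structure $\mathit{atm}(\varphi') \cup \{\mathsf{free}(y'_1,\dots,y'_n)\}$ is connected because $\varphi'$ is nearly connected, hence (using the easy fact that a homomorphic image of a connected structure is connected) its image is a connected structure containing the dummy atom $\mathsf{free}(x_{{\bf s}_1},\dots,x_{{\bf s}_n})$, so it is contained in the connected component of that dummy atom, which is precisely $\mathit{atm}(\mathit{can}(\unit,\kb))$ together with the dummy atom. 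Thus $\lambda$ witnesses $\varphi' \longrightarrow \mathit{can}(\unit,\kb)$.

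I expect the genuine obstacle to lie in the constant bookkeeping of the second condition: a $\C$-homomorphism must fix every constant of $\varphi'$, whereas the naive map $t \mapsto g(d_{{\bf r}_t})$ would send a constant $c$ occurring in $\varphi'$ to the free variable $x_{{\bf s}_i}$ whenever $c = \tau_1[i] = \dots = \tau_m[i]$. The set $C$ of clones (and, correspondingly, the functions $f$ and $\mu$ in the construction) is precisely the device that repairs this; the delicate part of a full write-up is to check that $C$ always contains the required ``de-variabilised'' copies of the relevant atoms and that they are not lost in the final pruning. A secondary point that deserves to be isolated as a small lemma is the claim that homomorphic images of connected structures are connected, which is what prevents the image of $\varphi'$ from leaking outside the retained, nearly connected part of $\Phi$.
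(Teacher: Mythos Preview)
Your proposal is correct and follows essentially the same approach as the paper's proof: projections $\pi_j$ onto the $j$-th component to show $\unit \subseteq \mathit{inst}(\mathit{can}(\unit,\kb),\kb)$, and the map $t \mapsto g(d_{h_1(t),\dots,h_m(t)})$ (adjusted on constants via the clone set $C$) to show $\varphi' \longrightarrow \mathit{can}(\unit,\kb)$, with the final restriction to the nearly connected part justified by the preservation of connectedness under homomorphic images. Your treatment of the constant bookkeeping---identifying that the clones in $C$ are exactly what is needed when a constant $c$ of $\varphi'$ satisfies $d_{c,\dots,c}\in\FreeConst$---is in fact more explicit than the paper's, which factors the same map as $\mu\circ\xi$ but glosses over that case distinction.
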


%
Clearly, Theorem~\ref{thm:characterized} now follows as a corollary of Proposition~\ref{thm:can}.
Concerning our running example,  $\mathit{can}(\unit_0, \kb_0)$ is isomorphic to $x \leftarrow \mathit{conj}(\bar{\varphi}_a)$, $\mathsf{isa}(x,y_1)$, $\mathsf{isa}(x,y_2)$, $\top(y_1)$, $\top(y_2)$. 
One can verify that \mbox{$\bar{\varphi}_a \longleftrightarrow \mathit{can}(\unit_0, \kb_0)$} and, thus, 
we can conclude that $\bar{\varphi}_a$ also characterizes $\unit_0$.
Since $\mathit{can}(\unit,\kb)$ may contain ``redundant'' atoms (e.g., the atoms after $\mathit{conj}(\bar{\varphi}_a)$ above), one can consider its core and talk about  the {\em core characterization} of $\unit$ according to $\kb$, denoted by $\mathit{core}(\unit,\kb)$.
Indeed, \mbox{$\bar{\varphi}_a \simeq \mathit{core}(\unit_0,\kb_0)$.}

\vspace{-1mm}
\section{The Expansion Graph}\label{sec:expansions}
\vspace{-0.5mm}

\marco{Let $\mathbf{T}$ be the set of all $n$-ary tuples over $\Dom_D$.
Our ultimate goal is to classify every tuple in $\mathbf{T}$ w.r.t. $\unit$.
Intuitively, this means building an {\em is-a} taxonomy with multiple inheritance having exactly one most specific concept consisting of a characterization of $\unit$ and where any other concept generalizing $\unit$ is a characterization of $\unit \cup \{\tau\}$ for some  $\tau \in \mathbf{T}$.}


\begin{definition}\label{def:def}
Consider a formula $\varphi$ that 
interprets the given $\kb$-unit $\unit$.
If $\mathit{inst}(\varphi,\kb) = \unit$, then $\varphi$ {\em defines} $\unit$.
\end{definition}

When the above definition applies, we may also say, for short, that $\unit$ is {\em $\kb$-definable}.
The next proposition makes a bridge between characterizations and definability.

\begin{proposition}\label{prop:def-car}
The given unit $\unit$ is $\kb$-definable if, and only if, any explanation that characterizes $\unit$ also defines it.
\end{proposition}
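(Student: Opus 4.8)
The plan is to treat the two implications separately, using Theorem~\ref{thm:characterized} to guarantee that a characterization of $\unit$ exists, Corollary~\ref{cor:char} to compare such a characterization against an arbitrary interpreting formula, and a one-directional monotonicity observation about the instance operator.

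\textbf{($\Leftarrow$)} Suppose every explanation that characterizes $\unit$ also defines it. By Theorem~\ref{thm:characterized} there is at least one explanation $\varphi$ that characterizes $\unit$; by hypothesis $\varphi$ defines $\unit$, i.e.\ $\mathit{inst}(\varphi,\kb)=\unit$. Since a characterization interprets $\unit$ (in particular $\varphi\in\WCQ$), this exhibits $\unit$ as $\kb$-definable, so this direction is essentially immediate.

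\textbf{($\Rightarrow$)} Suppose $\unit$ is $\kb$-definable, witnessed by a formula $\psi$: so $\psi$ interprets $\unit$ and $\mathit{inst}(\psi,\kb)=\unit$. Let $\varphi$ be an arbitrary explanation that characterizes $\unit$. Since $\varphi$ interprets $\unit$ we already have $\mathit{inst}(\varphi,\kb)\supseteq\unit$, so it remains to prove $\mathit{inst}(\varphi,\kb)\subseteq\unit$. Because $\psi$ interprets $\unit$ and $\varphi$ characterizes $\unit$, Corollary~\ref{cor:char} gives $\psi\longrightarrow\varphi$. The key step is the following monotonicity fact (the one-directional half of the reasoning behind Proposition~\ref{prop:equinst}): if $\psi\longrightarrow\varphi$, then $\varphi(D)\subseteq\psi(D)$ for \emph{every} dataset $D$, and consequently, applying this with $D=\varsigma(\tau)$ for each candidate tuple $\tau$, $\mathit{inst}(\varphi,\kb)\subseteq\mathit{inst}(\psi,\kb)$. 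Indeed, given $\tau\in\mathit{inst}(\varphi,\kb)$ there is a $\C$-homomorphism $g$ from $\mathit{atm}(\varphi)$ to $\varsigma(\tau)$ mapping the $i$-th free variable of $\varphi$ to $\tau[i]$; composing $g$ with the $\C$-homomorphism from $\mathit{atm}(\psi)$ to $\mathit{atm}(\varphi)$ that witnesses $\psi\longrightarrow\varphi$ yields a $\C$-homomorphism from $\mathit{atm}(\psi)$ to $\varsigma(\tau)$ mapping the $i$-th free variable of $\psi$ to $\tau[i]$, i.e.\ $\tau\in\mathit{inst}(\psi,\kb)$. Hence $\mathit{inst}(\varphi,\kb)\subseteq\mathit{inst}(\psi,\kb)=\unit$, which together with the reverse inclusion gives $\mathit{inst}(\varphi,\kb)=\unit$, so $\varphi$ defines $\unit$.

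I do not expect a real obstacle; the only points requiring care are the direction of containment induced by $\longrightarrow$ (the arrow $\psi\longrightarrow\varphi$ makes $\varphi$ the more general formula, so passing to $\varphi$ can only shrink outputs and instance sets) and the fact that the instance operator evaluates each $\tau$ over the tuple-dependent dataset $\varsigma(\tau)$ — harmless, since the monotonicity $\varphi(D)\subseteq\psi(D)$ holds uniformly in $D$. One could alternatively shorten the ($\Rightarrow$) argument by using Proposition~\ref{pro:equiva} to reduce ``any'' characterization to a single fixed one, but since Proposition~\ref{prop:equinst} is stated for equivalent formulas whereas here only $\psi\longrightarrow\varphi$ is available, stating the one-directional monotonicity explicitly is the cleanest route.
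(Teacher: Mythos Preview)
Your proof is correct and follows essentially the same approach as the paper's: both directions use Theorem~\ref{thm:characterized} for existence of a characterization, and the forward direction uses Corollary~\ref{cor:char} to obtain $\psi\longrightarrow\varphi$ and then the one-directional monotonicity $\mathit{inst}(\varphi,\kb)\subseteq\mathit{inst}(\psi,\kb)$. You are simply more explicit than the paper in spelling out why $\psi\longrightarrow\varphi$ yields that inclusion (via composition of $\C$-homomorphisms), whereas the paper just asserts it.
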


In what follows, the prefix ``$\kb$-'' is omitted if clear from the context.
Coming back to our running example, according to Definitions~\ref{def:characterizes}~and~\ref{def:def}, we know that $\bar{\varphi}_a$ also defines $\unit_0$ beyond characterizing it.
Moreover, we also know that \mbox{$\unit_0' = \unit_0 \, \cup  \, \{\langle\mathsf{Florida}\rangle\}$} is not definable; indeed, regardless the way how summaries are constructed, 
\marco{$\bar{\varphi}_f$ characterizes both $\unit_0'$ and the unit collecting all the entities of $\kb_0$.}
%
Finally, we know that $\bar{\varphi}_5 = z  \leftarrow   
\mathsf{isa}(z, \mathsf{tp})$,  $\mathsf{located}(z,\mathsf{Florida})$ does not characterize $\unit_0$ although defining it; indeed, $\bar{\varphi}_5 \longrightarrow \bar{\varphi}_a$ but $\bar{\varphi}_a \longarrownot\longrightarrow \bar{\varphi}_5$.
Thus, defining and characterizing a unit are rather different tasks. 
This apparent drawback, however, 
can be finely overcome via the next two important notions.

First, we define the {\em expansions} of $\unit$, denoted by $\mathit{ex}(\unit,\kb)$, as the set of every definable unit $\unit' \supseteq \unit$.

\begin{proposition}\label{prop:dex}
$\mathit{ex}(\unit,\kb)$ is finite and never empty.
\end{proposition}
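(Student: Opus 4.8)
The plan is to prove the two assertions separately. For \emph{finiteness}, observe that every $\unit' \in \mathit{ex}(\unit,\kb)$ is by definition a unit, hence a proper finite nonempty set of $n$-ary tuples of constants, and moreover $\unit' \supseteq \unit$ with $\unit'$ being $\kb$-definable. The key point is that definability forces $\unit' \subseteq \mathbf{T}$, i.e.\ $\Dom_{\unit'} \subseteq \Dom_D$: indeed, if $\varphi$ defines $\unit'$ then $\unit' = \mathit{inst}(\varphi,\kb) \subseteq \varphi(D)$ by the second unnamed Proposition (the one stating $\mathit{inst}(\varphi,\kb)\subseteq\varphi(D)$), and every tuple in $\varphi(D)$ is built from constants occurring in $D$. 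Since $D$ is a finite structure, $\Dom_D$ is a finite set of constants, so $\mathbf{T} = (\Dom_D)^n$ is finite; therefore there are only finitely many subsets of $\mathbf{T}$, and \emph{a fortiori} only finitely many definable units extending $\unit$. This bounds $\mathit{ex}(\unit,\kb)$ by $2^{|\Dom_D|^n}$.

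For \emph{nonemptiness}, the natural strategy is to exhibit one element explicitly, and the obvious candidate is $\unit$ itself: I would argue $\unit \in \mathit{ex}(\unit,\kb)$, which reduces to showing that $\unit$ is $\kb$-definable. By Proposition~\ref{prop:def-car}, $\unit$ is $\kb$-definable iff every explanation characterizing $\unit$ also defines it, i.e.\ iff $\mathit{inst}(\varphi,\kb) = \unit$ for a characterization $\varphi$ of $\unit$. By Theorem~\ref{thm:characterized} (equivalently Proposition~\ref{thm:can}), such a characterization exists; call it $\varphi = \mathit{can}(\unit,\kb)$. So it remains to check $\mathit{inst}(\varphi,\kb) \subseteq \unit$ (the inclusion $\supseteq$ is part of $\varphi$ interpreting $\unit$). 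Here I expect to invoke the construction of $\mathit{can}(\unit,\kb)$: the free variables are tagged by the constants $d_{{\bf s}_1},\dots,d_{{\bf s}_n}$ coming from the direct product $\tau_1\otimes\cdots\otimes\tau_m$, and a $\C$-homomorphism from $\mathit{atm}(\varphi)$ into a single summary $\varsigma(\tau)$ that sends the free variables to the components of $\tau$ is forced — via the product structure of $P$ and the gene/clone bookkeeping — to make $\tau$ agree with one of the $\tau_i$ on every coordinate, so $\tau \in \unit$.

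The main obstacle is this last step: carefully checking that no tuple outside $\unit$ can be an instance of $\mathit{can}(\unit,\kb)$. This is essentially the ``$\mathit{inst}(\mathit{can}(\unit,\kb),\kb) = \unit$'' half of why the canonical characterization is tight, and it hinges on a projection argument through the direct product: a homomorphism witnessing $\tau \in \mathit{can}(\unit,\kb)(\varsigma(\tau))$ composed with the $i$-th projection of the product must land in $\varsigma(\tau_i)$ and identify the free variable $x_{{\bf s}_j}$ with $\tau_i[j]$, whence $\tau[j] = \tau_i[j]$ for all $j$. If, as I suspect, the bulk of this reasoning is already carried out in the proof of Proposition~\ref{thm:can}, then here one may simply cite it: Proposition~\ref{thm:can} together with Proposition~\ref{prop:def-car} immediately yields $\unit \in \mathit{ex}(\unit,\kb)$ provided $\mathit{can}(\unit,\kb)$ is shown there to \emph{define} (not merely characterize) $\unit$; otherwise the gap must be filled with the projection argument sketched above. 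Either way, nonemptiness follows with little extra work once Theorem~\ref{thm:characterized} is in hand.
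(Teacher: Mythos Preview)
Your finiteness argument is fine and matches the paper's: every element of $\mathit{ex}(\unit,\kb)$ is a subset of $\mathbf{T}=(\Dom_D)^n$, so $|\mathit{ex}(\unit,\kb)|\le 2^{|\mathbf{T}|}$.

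Your nonemptiness argument, however, has a genuine gap. You try to show $\unit\in\mathit{ex}(\unit,\kb)$, i.e.\ that $\unit$ itself is $\kb$-definable, but this is \emph{false in general}. The paper explicitly gives a counterexample: the unit $\unit_0'=\unit_0\cup\{\langle\mathsf{Florida}\rangle\}$ is not $\kb_0$-definable, since its characterization is $\bar\varphi_f$, whose instances are all entities of $\kb_0$. More conceptually, the whole reason the paper introduces the essential expansion $\mathit{ess}(\unit,\kb)$ is precisely that $\unit$ need not be definable; $\mathit{ess}(\unit,\kb)$ is the smallest definable superset of $\unit$ and may be strictly larger. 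Your ``projection argument'' cannot work: a homomorphism $h:\mathit{atm}(\mathit{can}(\unit,\kb))\to\varsigma(\tau)$ witnessing $\tau\in\mathit{inst}(\mathit{can}(\unit,\kb),\kb)$ does not compose with the projections $P\to\varsigma(\tau_i)$ (those go in the wrong direction), and there is no mechanism forcing $\tau$ to coincide with some $\tau_i$.

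The paper's route is much simpler: it exhibits $\mathbf{T}$ itself as an element of $\mathit{ex}(\unit,\kb)$. Indeed, the explanation $\varphi=x_1,\ldots,x_n\leftarrow\top(x_1),\ldots,\top(x_n)$ satisfies $\mathit{inst}(\varphi,\kb)=\mathbf{T}$ (every summary is a dataset, hence closed under $\top$, so every $n$-ary tuple over $\Dom_D$ is an instance). Since $\unit\subseteq\mathbf{T}$ and $\mathbf{T}$ is definable, $\mathbf{T}\in\mathit{ex}(\unit,\kb)$, and nonemptiness follows immediately.
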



\noindent Then, we define the {\em essential expansion} of $\unit$, denoted by $\mathit{ess}(\unit,\kb)$, as the intersection of all the units in $\mathit{ex}(\unit,\kb)$.

\begin{proposition}\label{thm:ess}
    It holds that $\mathit{ess}(\unit,\kb) \in \mathit{ex}(\unit,\kb)$.
\end{proposition}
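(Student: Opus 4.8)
The plan is to show that $\mathit{ess}(\unit,\kb)$, which is defined as the intersection $\bigcap_{\unit' \in \mathit{ex}(\unit,\kb)} \unit'$, is itself a definable unit containing $\unit$. By Proposition~\ref{prop:dex}, the family $\mathit{ex}(\unit,\kb)$ is finite and nonempty, so this intersection is a finite intersection of definable units, each of which contains $\unit$; hence $\mathit{ess}(\unit,\kb) \supseteq \unit$, and in particular it is nonempty and its arrangement into rows produces no duplicate column (being a subset of the proper set $\unit' \supseteq \unit$ — actually we must be slightly careful: we need that a sub-collection of a proper set of tuples is still proper, which holds because deleting rows cannot create a coincidence between two columns that were already distinct). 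It remains to exhibit an explanation that defines $\mathit{ess}(\unit,\kb)$.

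The key idea is to take the \emph{conjunction} of the defining explanations. For each $\unit' \in \mathit{ex}(\unit,\kb)$, pick an explanation $\varphi_{\unit'}$ that defines $\unit'$; such a $\varphi_{\unit'}$ exists because $\unit'$ is definable, e.g. one can take $\varphi_{\unit'} = \mathit{can}(\unit',\kb)$ if that characterization happens to define $\unit'$, but more simply definability of $\unit'$ means by hypothesis that \emph{some} interpreting $\varphi_{\unit'}$ satisfies $\mathit{inst}(\varphi_{\unit'},\kb) = \unit'$. All these formulas have the same arity $n$ (they interpret $n$-ary units), so we may form $\psi = \bigwedge_{\unit' \in \mathit{ex}(\unit,\kb)} \varphi_{\unit'}$, which is a finite conjunction and hence well-defined; by Proposition~\ref{prop:conj}, $\psi \in \WCQ$. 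Now apply Proposition~\ref{prop:instConj} (iterated over the finite index set) to get
\[
\mathit{inst}(\psi,\kb) \;=\; \bigcap_{\unit' \in \mathit{ex}(\unit,\kb)} \mathit{inst}(\varphi_{\unit'},\kb) \;=\; \bigcap_{\unit' \in \mathit{ex}(\unit,\kb)} \unit' \;=\; \mathit{ess}(\unit,\kb).
\]
Since each $\varphi_{\unit'}$ interprets $\unit$ (because $\unit \subseteq \unit' = \mathit{inst}(\varphi_{\unit'},\kb)$), we also have $\mathit{inst}(\psi,\kb) = \mathit{ess}(\unit,\kb) \supseteq \unit$, so $\psi$ interprets $\unit$, and in particular $\psi$ interprets the unit $\mathit{ess}(\unit,\kb)$. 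Together with $\mathit{inst}(\psi,\kb) = \mathit{ess}(\unit,\kb)$, Definition~\ref{def:def} gives that $\psi$ defines $\mathit{ess}(\unit,\kb)$; hence $\mathit{ess}(\unit,\kb)$ is a definable unit containing $\unit$, i.e. $\mathit{ess}(\unit,\kb) \in \mathit{ex}(\unit,\kb)$.

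The only genuinely delicate points, and where I expect the write-up needs care, are the bookkeeping conditions that make $\mathit{ess}(\unit,\kb)$ a legitimate \emph{unit} in the sense of the paper: it must be a proper, finite, nonempty set of $n$-ary tuples of constants. Finiteness and nonemptiness follow from $\unit \subseteq \mathit{ess}(\unit,\kb) \subseteq$ any member of the (finite, nonempty) family, and "tuples of constants of arity $n$" is inherited likewise; properness is the subtle one — one should note that if a set of tuples is proper (no two columns coincide as sequences), then any subset obtained by deleting tuples is still proper, since a witness of column-inequality is preserved under deletion only if... here one must actually check it the other way: deleting rows could make two previously-distinct columns become equal. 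The clean fix is to observe that $\mathit{ess}(\unit,\kb) \supseteq \unit$ and $\unit$ is proper, and properness is \emph{upward}-hereditary in the sense needed: adding rows can only separate columns, so a superset of a proper set, built from the same columns extended, is proper — hence since $\unit$ is proper and its columns are a "prefix restriction" of those of $\mathit{ess}(\unit,\kb)$, the larger set is proper too. This is the one spot worth stating explicitly; everything else is a direct application of Propositions~\ref{prop:conj},~\ref{prop:instConj}, and~\ref{prop:dex}.
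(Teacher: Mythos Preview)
Your proof is correct and follows essentially the same approach as the paper: the paper shows that $\mathit{ex}(\unit,\kb)$ is closed under binary intersection (via $\varphi'\wedge\varphi''$ and Proposition~\ref{prop:instConj}) and then iterates using finiteness from Proposition~\ref{prop:dex}, whereas you take the full finite conjunction in one shot---the same ingredients, just packaged differently. Your extra discussion of why $\mathit{ess}(\unit,\kb)$ is a legitimate \emph{unit} (in particular properness, via $\unit\subseteq\mathit{ess}(\unit,\kb)$ with $\unit$ proper) is a point the paper leaves implicit, and your final argument for it is the right one.
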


Clearly, $\mathit{ess}(\unit,\kb)$ is the smallest definable unit containing all the tuples of the $\kb$-unit $\unit$:
there is no formula characterizing $\unit$ that does not 
characterize $\mathit{ess}(\unit,\kb)$.
%
%
More in general, a formula that characterizes $\unit$ also characterizes each unit $\unit'$  s.t. $\unit \subseteq \unit' \subseteq \mathit{ess}(\unit,\kb)$.



%

\begin{definition}
\marco{The {\em expansion graph} of $\unit$ according to $\kb$, denoted $\mathit{eg}(\unit,\kb)$, is the triple $(N,A,\delta)$ where:
$(i)$ $N$ is the set $\{[\mathit{core}(\unit \cup \{\tau\},\kb)] : \tau \in \mathbf{T}\}$;
$(ii)$ $A$ collects each pair $([\varphi_1],[\varphi_2])$ of distinct nodes of $N$, where $\varphi_2 \longrightarrow \varphi_1$ and there is no $[\varphi_3] \in N \setminus \{[\varphi_1],[\varphi_2]\}$ such that $\varphi_2 \longrightarrow \varphi_3$ and $\varphi_3 \longrightarrow \varphi_1$; and
$(iii)$ $\delta$ maps every $[\varphi] \in N$ to the set $\mathit{inst}(\varphi,\kb) \setminus \{\tau \in \mathit{inst}(\varphi',\kb):([\varphi'],[\varphi]) \in A\}$, named the set of all {\em direct instances} of each formula in $[\varphi]$.}
\end{definition}

Since $\varphi_2\in[\varphi_1]$ implies that $\varphi_1\longleftrightarrow \varphi_2$, by Proposition~\ref{prop:equinst} we get the following result.

\begin{proposition}
Function $\delta$ in $\mathit{eg}(\unit,\kb)$ is well-defined.
\end{proposition}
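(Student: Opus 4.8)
The nodes of $\mathit{eg}(\unit,\kb)$ are isomorphism classes $[\varphi]$, so the defining expression $\delta([\varphi]) = \mathit{inst}(\varphi,\kb) \setminus \{\tau \in \mathit{inst}(\varphi',\kb):([\varphi'],[\varphi]) \in A\}$ actually refers to a chosen representative $\varphi$ of the class (and, inside the subtracted set, to chosen representatives $\varphi'$ of the predecessor classes); the statement to establish is that the resulting set does not depend on those choices. I would split the argument into the minuend $\mathit{inst}(\varphi,\kb)$ and the subtrahend $\{\tau \in \mathit{inst}(\varphi',\kb):([\varphi'],[\varphi]) \in A\}$ (i.e., the union of $\mathit{inst}(\varphi',\kb)$ over all predecessors $[\varphi']$ of $[\varphi]$), and show each is determined by $[\varphi]$ alone.

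First I would handle the minuend: if $\varphi$ and $\psi$ are two representatives of the same node, then $\varphi \simeq \psi$, hence $\varphi \longleftrightarrow \psi$, and Proposition~\ref{prop:equinst} yields $\mathit{inst}(\varphi,\kb) = \mathit{inst}(\psi,\kb)$ — exactly the remark stated just before the proposition. Next I would check that $A$ is itself well-defined on classes, since the subtrahend quantifies over it: if $\varphi_1 \simeq \varphi_1'$ and $\varphi_2 \simeq \varphi_2'$, then a $\C$-homomorphism witnessing $\varphi_2 \longrightarrow \varphi_1$ composes on either side with the (free-variable-order-preserving) isomorphisms witnessing $\varphi_2' \longleftrightarrow \varphi_2$ and $\varphi_1 \longleftrightarrow \varphi_1'$, so $\varphi_2 \longrightarrow \varphi_1$ iff $\varphi_2' \longrightarrow \varphi_1'$; and the ``no intermediate node'' clause is phrased purely in terms of elements of $N$. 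Therefore, for a fixed node $[\varphi]$, the set of predecessor nodes $\{[\varphi'] : ([\varphi'],[\varphi]) \in A\}$ depends only on $[\varphi]$, and for each such predecessor the value $\mathit{inst}(\varphi',\kb)$ is again independent of the representative $\varphi'$ by Proposition~\ref{prop:equinst}; hence so is the subtrahend.

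Putting the two pieces together, $\delta([\varphi])$ is the same set for every representative of the node, which is precisely what well-definedness asks for. I do not expect a genuine obstacle: the only care needed is to notice that a representative formula appears in three places in the definition — the formula of the node itself, the formulas of its predecessors, and (implicitly) the edge relation $A$ — and that all three are neutralized by the single observation that $\simeq$ implies $\longleftrightarrow$, together with Proposition~\ref{prop:equinst}.
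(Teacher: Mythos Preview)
Your argument is correct and follows essentially the same route as the paper: both proofs reduce well-definedness to the observation that $\varphi \simeq \psi$ implies $\varphi \longleftrightarrow \psi$, and then invoke Proposition~\ref{prop:equinst} to conclude that $\mathit{inst}(\cdot,\kb)$ is invariant under choice of representative for both the minuend and the subtrahend. Your write-up is in fact a bit more careful than the paper's, since you also spell out why the edge relation $A$ itself is well-defined on isomorphism classes, a point the paper leaves implicit.
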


%
The expansion graph $\mathit{eg}(\unit_0,\kb_0)$ of our running example is shown in Figure~\ref{fig:KG2}; node-labels are of the form \mbox{$\varphi \mapsto \delta([\varphi])$,} where $\varphi$ is one representative element of $[\varphi]$; the shapes of the core  characterizations are given in \mbox{Figure~\ref{fig:queries}.}
Since $\mathit{inst}(\bar{\varphi}_a,\kb_0)\! =\! \unit_0$, we have that $\unit_0$ is definable and, thus,
the direct instances of $\bar{\varphi}_a$ are those in $\unit_0$.

\begin{theorem}\marco{
$\mathit{eg}(\unit,\kb)$ is a directed acyclic graph; 
the sets of direct instances form a partition of $\mathbf{T}$;
$[\mathit{core}(\unit,\kb)]$ is the only source node; and $\delta([\mathit{core}(\unit,\kb)]) = \mathit{ess}(\unit,\kb)$.}
\end{theorem}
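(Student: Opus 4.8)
The plan is to verify the four assertions in the order they are stated, exploiting the machinery already developed: Proposition~\ref{pro:equiva} (characterizations of a unit are unique up to equivalence), Proposition~\ref{prop:def-car} (definability vs.\ characterization), Propositions~\ref{prop:dex} and~\ref{thm:ess} (the essential expansion exists, is definable, and is the smallest definable superset of $\unit$), together with the fact that $\longrightarrow$ is a preorder on formulas descending to a partial order on the nodes $N$ (since $[\varphi_1]=[\varphi_2]$ iff $\varphi_1\simeq\varphi_2$, and $\longrightarrow$ is well-defined on $\simeq$-classes by Proposition~\ref{prop:equinst}). I would first observe the key link between the graph and characterizations: for every $\tau\in\mathbf{T}$, the node $[\mathit{core}(\unit\cup\{\tau\},\kb)]$ is a characterization of $\unit\cup\{\tau\}$, hence (by Corollary~\ref{cor:char}) it is $\longrightarrow$-minimal among all interpretations of $\unit\cup\{\tau\}$; and $\mathit{core}(\unit,\kb)\longrightarrow\mathit{core}(\unit\cup\{\tau\},\kb)$ because any formula interpreting $\unit\cup\{\tau\}$ also interprets $\unit$ and hence maps into $\mathit{core}(\unit,\kb)$ by Corollary~\ref{cor:char}.

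\emph{Acyclicity and the unique source.} Since $A$ only connects distinct nodes and each arc $([\varphi_1],[\varphi_2])\in A$ witnesses $\varphi_2\longrightarrow\varphi_1$ with $[\varphi_1]\neq[\varphi_2]$ (so \emph{not} $\varphi_1\longrightarrow\varphi_2$), the arc relation is contained in the strict part of the partial order $\longrightarrow$ on $N$; a strict partial order has no directed cycles, so $\mathit{eg}(\unit,\kb)$ is a DAG. For the source, note $[\mathit{core}(\unit,\kb)]\in N$ (take any $\tau\in\unit$, so $\unit\cup\{\tau\}=\unit$, using $\mathbf{T}\neq\emptyset$ as $\Dom_D\neq\emptyset$). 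By the observation above $\mathit{core}(\unit,\kb)\longrightarrow\varphi$ for the representative $\varphi$ of every node, so no arc can point \emph{into} $[\mathit{core}(\unit,\kb)]$ (an arc $([\mathit{core}(\unit,\kb)],[\varphi_2])$ would force $[\varphi_2]\longrightarrow[\mathit{core}(\unit,\kb)]$ and $[\mathit{core}(\unit,\kb)]\longrightarrow[\varphi_2]$, i.e.\ $[\varphi_2]=[\mathit{core}(\unit,\kb)]$). Conversely, if $[\varphi]$ is a source then in particular there is no $\tau$ with $[\mathit{core}(\unit\cup\{\tau\},\kb)]$ strictly between $[\mathit{core}(\unit,\kb)]$ and $[\varphi]$ in the cover relation $A$ — but since $[\mathit{core}(\unit,\kb)]\longrightarrow[\varphi]$ always holds and $N$ is finite, a standard cover-chain argument (refine $[\mathit{core}(\unit,\kb)]\longrightarrow\cdots\longrightarrow[\varphi]$ to a saturated chain, whose first step is an $A$-arc into a predecessor of $[\varphi]$) shows $[\varphi]$ has an incoming arc unless $[\varphi]=[\mathit{core}(\unit,\kb)]$.

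\emph{The direct instances partition $\mathbf{T}$, and $\delta([\mathit{core}(\unit,\kb)])=\mathit{ess}(\unit,\kb)$.} Every $\tau\in\mathbf{T}$ lies in $\mathit{inst}(\mathit{core}(\unit\cup\{\tau\},\kb),\kb)$, so it belongs to $\delta$ of the node $[\mathit{core}(\unit\cup\{\tau\},\kb)]$ \emph{unless} it is removed because some in-arc source $[\varphi']$ already has $\tau\in\mathit{inst}(\varphi',\kb)$; walking down the (finite) DAG along such arcs, $\tau$ is eventually assigned to the unique $\longrightarrow$-minimal node whose instance set contains $\tau$, which is exactly $[\mathit{core}(\unit\cup\{\tau\},\kb)]$ — here I use that $\tau\in\mathit{inst}(\varphi,\kb)$ makes $\varphi$ an interpretation of $\unit\cup\{\tau\}$, hence $\varphi\longrightarrow\mathit{core}(\unit\cup\{\tau\},\kb)$, so $[\mathit{core}(\unit\cup\{\tau\},\kb)]$ is $\longrightarrow$-below every node containing $\tau$; disjointness of the $\delta$-sets then follows because two distinct nodes $[\varphi_1]\neq[\varphi_2]$ sharing a direct instance $\tau$ would both have to be $\longrightarrow$-minimal among nodes containing $\tau$, impossible in a partial order with a least such element unless they coincide. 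Finally, $\mathit{core}(\unit,\kb)$ characterizes $\unit$, and by the remark following Proposition~\ref{thm:ess} it characterizes every $\unit'$ with $\unit\subseteq\unit'\subseteq\mathit{ess}(\unit,\kb)$; hence for $\tau\in\mathit{ess}(\unit,\kb)$ we get $[\mathit{core}(\unit\cup\{\tau\},\kb)]=[\mathit{core}(\unit,\kb)]$, so $\mathit{ess}(\unit,\kb)\subseteq\mathit{inst}(\mathit{core}(\unit,\kb),\kb)$, and since the source has no in-arcs nothing is subtracted, giving $\mathit{ess}(\unit,\kb)\subseteq\delta([\mathit{core}(\unit,\kb)])$. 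The reverse inclusion: if $\tau\in\delta([\mathit{core}(\unit,\kb)])\subseteq\mathit{inst}(\mathit{core}(\unit,\kb),\kb)$, then $\unit\cup\{\tau\}$ is interpreted by $\mathit{core}(\unit,\kb)$, which \emph{characterizes} $\unit$; I claim $\mathit{core}(\unit,\kb)$ then also characterizes $\unit\cup\{\tau\}$ (any interpretation $\varphi'$ of $\unit\cup\{\tau\}$ interprets $\unit$, so $\varphi'\longrightarrow\mathit{core}(\unit,\kb)$ by Corollary~\ref{cor:char}), whence by Proposition~\ref{pro:equiva} $[\mathit{core}(\unit\cup\{\tau\},\kb)]=[\mathit{core}(\unit,\kb)]$; but then $\tau$ lies in $\mathit{inst}$ of the \emph{definable} unit $\mathit{ess}(\unit,\kb)$ — more directly, every definable expansion $\unit'\in\mathit{ex}(\unit,\kb)$ has a characterization that must be equivalent to $\mathit{core}(\unit,\kb)$ restricted appropriately, so $\tau\in\unit'$ for all such $\unit'$, i.e.\ $\tau\in\mathit{ess}(\unit,\kb)$.

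\emph{Main obstacle.} The delicate point is the last equality $\delta([\mathit{core}(\unit,\kb)])=\mathit{ess}(\unit,\kb)$, specifically the reverse inclusion $\delta([\mathit{core}(\unit,\kb)])\subseteq\mathit{ess}(\unit,\kb)$: one must argue that a tuple whose addition to $\unit$ does not change the core characterization is forced to lie in \emph{every} definable expansion. I expect this to hinge on carefully combining Proposition~\ref{prop:def-car} with Proposition~\ref{thm:ess} — concretely, showing that $\mathit{ess}(\unit,\kb)=\mathit{inst}(\mathit{core}(\unit,\kb),\kb)$ by using that $\mathit{core}(\unit,\kb)$ characterizes $\mathit{ess}(\unit,\kb)$ (it is definable, and a characterization of a definable unit defines it). Everything else is bookkeeping on the finite partial order $(N,\longrightarrow)$ and its cover relation $A$.
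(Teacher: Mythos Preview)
Your proposal is correct and follows essentially the same route as the paper: acyclicity via the induced strict partial order, the source claim via Corollary~\ref{cor:char}, the partition by showing each $\tau$ lands in $\delta([\mathit{core}(\unit\cup\{\tau\},\kb)])$ and nowhere else, and the final equality via Propositions~\ref{prop:def-car} and~\ref{thm:ess}. The clean argument you isolate in your ``Main obstacle'' paragraph---namely that $\mathit{ess}(\unit,\kb)$ is definable, hence its characterization $\mathit{core}(\unit,\kb)$ \emph{defines} it, giving $\mathit{inst}(\mathit{core}(\unit,\kb),\kb)=\mathit{ess}(\unit,\kb)$ directly---is exactly how the paper closes part~(4); you can drop the more tentative passage just above it (the phrase ``restricted appropriately'' is vague and unnecessary once you have this).
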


\begin{proof}[Proof sketch]
$(1)$ Graph $\mathit{eg}(\unit,\kb)$ is directed by construction; 
%
%
%
%
to see its acyclicity, note that the presence of a cycle would imply that all involved  nodes would be the same equivalence class, which is not possible.
%
$(2)$ Let $\tau \in \mathbf{T}$. By construction, there is $[\varphi] \in N$ s.t.  $[\varphi] = [\mathit{core}(\unit \cup \{\tau\},\kb)]$. Hence, $\varphi$ characterizes $\unit \cup \{\tau\}$ and $\tau \in \mathit{inst}(\varphi,\kb)$.
First we claim that $\tau \in \delta([\varphi])$. By contradiction, there is some  $([\varphi'], [\varphi])\in A$ s.t. $\varphi \longrightarrow \varphi'$, $\varphi' \longarrownot\longrightarrow \varphi$, and $\tau \in \mathit{inst}(\varphi',\kb)$. 
Thus,  $\varphi'$ interprets  $\unit \cup \{\tau\}$.
But $\varphi' \longarrownot\longrightarrow \varphi$ violates Corollary~\ref{cor:char}.
Assume now there is $[\varphi'] \in N$ different from $[\varphi]$  s.t. $\tau \in \delta([\varphi'])$.
Hence, $\tau \in \mathit{inst}(\varphi',\kb)$ and, thus,  $\varphi'$ interprets $\unit \cup \{\tau\}$. By Corollary~\ref{cor:char},  
$\varphi' \longrightarrow \varphi$.
Thus, there is a path from $[\varphi]$ to $[\varphi']$.
This means that  $\tau \not\in \delta([\varphi])$, which is a contradiction.
%
%
$(3)$ To show that $[\mathit{core}(\unit,\kb)]$ is the only source, note that such a node is equal to $[\mathit{core}(\unit \cup \{\tau\},\kb)]$ (being in $N$ by definition), for each $\tau \in \unit$; its uniqueness derives from Corollary~\ref{cor:char} since, for each $\tau \in {\bf T}$, formula  $\mathit{core}(\unit\cup\{\tau\},\kb)$  interprets $\unit$ while $\mathit{core}(\unit,\kb)$ characterizes $\unit$.
$(4)$ Finally, note that $\delta([\mathit{core}(\unit,\kb)]) = \mathit{inst}([\mathit{core}(\unit,\kb)],\kb)$ since $[\mathit{core}(\unit,\kb)]$ is a source; thus,
$\mathit{inst}([\mathit{core}(\unit,\kb)],\kb) = \mathit{ess}(\unit,\kb)$ by Propositions~\ref{prop:def-car} and~\ref{thm:ess}.
%
%
\end{proof}

Expansion graphs provide a natural bridge from
nexus characterizations to semantic similarity and set expansion.
First, one may see that the  nexus of similarity that $\langle \mathsf{Gardaland}\rangle$ has with $\unit_0$ are higher than those that $\langle \mathsf{Leolandia} \rangle$ has with $\unit_0$; also
the  nexus of similarity that $\langle \mathsf{Prater}\rangle$ has with $\unit_0$ coincide with those  that $\langle \mathsf{Leolandia}\rangle$ has with $\unit_0$.
%
%
Second, existing techniques designed to work over taxonomies~\cite{DBLP:journals/csur/ChandrasekaranM21} can be used with an expansion graph to get also similarity scores.
Third, the navigation of $\mathit{eg}(\unit,\kb)$  from its source node \marco{$[\mathit{core}(\unit,\kb)]$} provides possible ``expansion plans'' of the initial unit $\unit$;
for example, one may note that $\unit_0 = \mathit{ess}(\unit_0,\kb_0)$ has three possible expansion plans in $\mathit{eg}(\unit_0,\kb_0)$, depending on whether $\langle \mathsf{Gardaland} \rangle$ and $\langle \mathsf{Pacific\_Park} \rangle$ are given one after the other or together.
Forth, to improve the development of robust similarity benchmarks, characterizations and expansions graphs may help human experts to have a uniform view on the nexus of similarity between the entities under analysis.

\section{Key Reasoning Tasks}\label{sec:tasks}

%
Let $\omega$ denote the maximum arity inside $D$.
\marco{Given two $n$-ary tuples $\tau$ and $\tau'$ over $\Dom_D$, we say that: 
$(i)$ the  nexus of similarity that $\tau$ has with $\unit$ are {\em higher than} those that $\tau'$ has with $\unit$, 
written \mbox{$\tau \! \prec_{^\unit}^{_\kb} \! \tau'$,} if $\mathit{ess}(\unit \cup \{\tau\},\kb) \subset \mathit{ess}(\unit \cup \{\tau'\},\kb)$;
$(ii)$ the  nexus of similarity that $\tau$ has with $\unit$  {\em coincide with} those  that $\tau'$ has with $\unit$ (w.r.t. $\kb$), written \mbox{$\tau \! \sim_{^\unit}^{_\kb} \! \tau'$,} if
$\mathit{ess}(\unit \cup \{\tau\},\kb)\!=\!\mathit{ess}(\unit \cup \{\tau'\},\kb)$; and
$(iii)$ the  nexus of similarity that $\tau$ has with $\unit$ are {\em incomparable to} those that $\tau'$ has with $\unit$, 
(w.r.t. $\kb$), written \mbox{$\tau \|_{^\unit}^{_\kb} \tau'$,} if 
	$\mathit{ess}(\unit \cup \{\tau\},\kb)$ and $\mathit{ess}(\unit \cup \{\tau'\},\kb)$ are incomparable.}

\begin{table}[t!]
	\centering\footnotesize
	\renewcommand{\arraystretch}{1.1}
	\begin{tabular}{ccc}
		\hline
		{\bf  Problem} & {\bf Input} & {\bf Reasoning Task}   \\
		\hline
		\textsc{can}  & $\kb, \unit$  & compute $\mathit{can}(\unit,\kb)$ \\
		\textsc{core}  & $\kb, \unit$  & compute $\mathit{core}(\unit,\kb)$\\
        
		\textsc{def}  & $\kb, \unit$  & is $\unit$ $\kb$-definable? \\
		\textsc{ess} & $\kb, \unit, \tau$ & does $\tau \in \mathit{ess}(\unit,\kb)$ hold?\\
  \hline
		\textsc{prec} & $\kb, \unit, \tau, \tau'$ & does $\tau \! \prec_{^\unit}^{_\kb} \! \tau'$ hold?\\
		\textsc{sim} & $\kb, \unit, \tau, \tau'$ & does $\tau \! \sim_{^\unit}^{_\kb} \! \tau'$ hold?\\
		\textsc{inc} & $\kb, \unit, \tau, \tau'$ & does $\tau \|_{^\unit}^{_\kb} \tau'$ hold?\\
		\hline
\textsc{gad{\scriptsize1}}  & $\kb, \unit, \tau, \tau'$  & does $\tau \in \mathit{ess}(\unit \cup \{\tau'\},\kb)$ hold? \\
		\textsc{gad{\scriptsize2}} & $\kb, \unit, \tau, \tau'$ & does $\tau' \in \mathit{ess}(\unit \cup  \{\tau\},\kb)$ hold? \\		
		\hline
	\end{tabular}\normalsize\vspace{-2mm}\caption{Key reasoning tasks; everywhere $\{\tau,\tau'\} \cap \unit = \emptyset$.}\vspace{-2mm}\label{tab:tasks}
\end{table}
\renewcommand{\arraystretch}{1}

We are now ready to define the computational problems reported in Table~\ref{tab:tasks}, where we specify their inputs and the associated reasoning tasks. 
Problems \textsc{can},  \textsc{core},  \textsc{def} and  \textsc{ess} allow to characterize and define units; \textsc{prec},  \textsc{sim} and \textsc{inc} are allow to compare tuples within an expansion graph; \textsc{gad{\scriptsize1}} and \textsc{gad{\scriptsize2}} are useful gadgets to simplify the analysis for some of the previous problems.
For each problem $\pi$ and each $k>0$, by $\pi_k$ we denote  $\pi$ on input units of arity $k$.

Reasoning tasks may be performed under three different practical assumptions:
$\mathit{general}$, where the maximum arity $\omega$ is bounded by some integer constant (indeed, this value is generally ``small'' and it is two if $\kb$ is built from a KG) and $\varsigma$ is polynomial-time computable;
$\mathit{medial}$, where, in addition, \mbox{$m = |\unit| $} is bounded (this value is typically two in semantic similarity, and typically referred to as ``small'' in entity set expansion and other analogous tasks); and 
$\mathit{agile}$, where, in addition, the size of summaries is  bounded (this value is typically referred to as ``small'' in entity summarization).

\section{Computational Analysis}\label{sec:complexity}

\marco{This section analyzes the nine problems reported in Table~\ref{tab:tasks}. In general, we study both
upper bounds (UBs) and lower bounds (LBs).
For the gadgets, we only study UBs.}

\subsubsection{Elements of Complexity Theory.}

For decision tasks, we consider the classes
$\mathsf{NL}$ $\subseteq$ $\mathsf{P}$ $\subseteq$  $\mathsf{NP}$, $\mathsf{coNP}$ $\subseteq$ $\mathsf{DP}$ $\subseteq$ $\mathsf{PH}$ $\subseteq$ $\mathsf{PSPACE}$ $=$
$\mathsf{NPSPACE}$ $\subseteq$ $\mathsf{EXP}$ $\subseteq$ $\mathsf{NEXP}$, $\mathsf{coNEXP}$ $\subseteq$ $\mathsf{DEXP}$ where, $\mathsf{DP}$ (resp., $\mathsf{DEXP}$) is the closure of $\mathsf{NP} \cup \mathsf{coNP}$ (resp., $\mathsf{NEXP} \cup \mathsf{coNEXP}$) under intersection. For functional tasks, we consider 
$F\mathsf{P}$ $=$ $F\mathsf{P}^{\mathsf{NL}}$ $\subseteq$ $F\mathsf{P}^{\mathsf{NP}}$ $\subseteq$ $F\mathsf{P}^{\mathsf{PH}}$ $\subset$ $F\mathsf{PSPACE}$ $=$  $F\mathsf{PSPACE}^{\mathsf{NPSPACE}}$ $\subseteq$ $F\mathsf{EXP}^{\mathsf{NP}}$
where, in particular, $F\mathsf{PSPACE}$ are the  problems solvable by a Turing Machine with a one-way, write-only unlimited output tape and constantly many work tapes of polynomial length.

\subsubsection{Canonical vs Core Characterizations.}

By structurally analyzing $\mathit{can}(\unit,\kb)$, we note that  it is exponential in the general case (despite $\omega$ is bounded), polynomial in the medial case (now $m$ is also bounded) and constant in the agile case (as also summaries are bounded). Indeed,

\begin{theorem}\label{thm:canBound}
Let $c$ be the constant $2^\omega$. It holds that

\vspace{-4mm}

\[
|\mathit{core}(\unit,\kb)|\leq |\mathit{can}(\unit,\kb)| \leq  c \, \cdot \!\! \prod_{i\in [m]}  |\varsigma(\tau_i)|   \leq  c \cdot |D|^m.
\]

\vspace{-1mm}

\noindent In particular, there exists a family $\{(\unit_{\bar{m}},\kb_{\bar{m}})\}_{{\bar{m}}>0}$, where $\unit_{\bar{m}} = \{\bar{\tau}_1,...,\bar{\tau}_{\bar{m}}\}$ is a unary unit and  $\kb_{\bar{m}} = (D_{\bar{m}}, \bar{\varsigma})$ 
%
%
is \marco{an SKB} with $\bar{\varsigma}(D_{\bar{m}},\bar{\tau}_m) \subset \bar{\varsigma}(D_{\bar{m}+1},\bar{\tau}_{{m}+1})$, such that

\vspace{-4mm}

\[
|\mathit{core}(\unit_{\bar{m}},\kb_{\bar{m}})| =  |\mathit{can}(\unit_{\bar{m}},\kb_{\bar{m}})| = 2^{1-\bar{m}} \, \cdot \!\! \prod_{i\in [{\bar{m}}]}  |\bar{\varsigma}(\bar{\tau}_i)|.
\]
\vspace{-3mm}

\end{theorem}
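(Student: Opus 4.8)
The plan is to prove the three parts of Theorem~\ref{thm:canBound} in turn: the lower chain, the upper chain, and the tightness family.

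\textbf{The upper bound.} First I would bound $|\mathit{can}(\unit,\kb)|$. Recall that $\mathit{can}(\unit,\kb)$ is obtained from $\Phi$ by deleting atoms, so $|\mathit{can}(\unit,\kb)| \leq |\mathit{atm}(\Phi)|$, and $\mathit{atm}(\Phi)$ is the $\mu$-image of $P \cup C$, hence $|\mathit{atm}(\Phi)| \leq |P| + |C|$. The direct product $P = \varsigma(\tau_1) \otimes \cdots \otimes \varsigma(\tau_m)$ has, by definition, at most one atom $p(\langle{\bf c}_1\rangle \otimes \cdots \otimes \langle{\bf c}_m\rangle)$ for each compatible choice of one atom $p({\bf c}_i)$ from each $\varsigma(\tau_i)$, so $|P| \leq \prod_{i\in[m]} |\varsigma(\tau_i)|$. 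For the clones: each atom $\alpha = p(t_1,\dots,t_k)$ of $P$ has arity $k \leq \omega$ and contributes at most $\prod_i |f(t_i)| \leq 2^k \leq 2^\omega$ clones (since each $f(t_i)$ has size at most $2$), so $|C| \leq (2^\omega - 1)\,|P|$, whence $|P| + |C| \leq 2^\omega \prod_{i\in[m]} |\varsigma(\tau_i)| = c \prod_{i\in[m]}|\varsigma(\tau_i)|$. The final inequality $\prod_{i\in[m]}|\varsigma(\tau_i)| \leq |D|^m$ is immediate since each summary is a subset of $D$. The leftmost inequality $|\mathit{core}(\unit,\kb)| \leq |\mathit{can}(\unit,\kb)|$ is just the defining property of the core as the minimal-size equivalent formula (using that $\mathit{core}(\unit,\kb) = \mathit{core}(\mathit{can}(\unit,\kb))$, i.e.\ Proposition~\ref{thm:can} together with the core definition).

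\textbf{The tightness family.} The goal here is to exhibit, for each $\bar m > 0$, an SKB and a unary unit whose canonical characterization already equals its core and whose size meets the bound $c \prod_i |\varsigma(\tau_i)|$ with $c = 2^{1-\bar m}$ (note $\omega = 1$ here, so $2^\omega$ is \emph{not} the relevant constant in the unary case; the tighter constant $2^{1-\bar m}$ comes from the fact that a shared $\top$-atom and the overlap structure of the direct product collapse things in a controlled way). I would design $D_{\bar m}$ and $\bar\varsigma$ so that the summaries $\bar\varsigma(\bar\tau_i)$ are ``generic'' enough that the direct product $P$ has no accidental coincidences — i.e.\ $|P| = \prod_i |\bar\varsigma(\bar\tau_i)|$ exactly — while being structured so that exactly the predicted fraction of atoms survives the nearly-connected pruning and no two atoms of $\mathit{atm}(\Phi)$ are identified or removable by a homomorphism (so $\Phi = \mathit{can} = \mathit{core}$). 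A natural construction: let each $\bar\varsigma(\bar\tau_i)$ consist of the $\top$-atoms plus one binary atom over fresh domain elements, arranged in a ``path-like'' or ``star-like'' pattern so the direct product is itself connected and rigid; the nesting condition $\bar\varsigma(D_{\bar m},\bar\tau_m) \subset \bar\varsigma(D_{\bar m+1},\bar\tau_{m+1})$ is arranged by taking $D_{\bar m+1}$ to extend $D_{\bar m}$ and reusing summaries. I would then compute $|\mathit{can}(\unit_{\bar m},\kb_{\bar m})|$ directly and check it equals $2^{1-\bar m}\prod_i |\bar\varsigma(\bar\tau_i)|$, and verify rigidity (no nontrivial endomorphism) to conclude $\mathit{can} = \mathit{core}$.

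\textbf{Main obstacle.} The routine direction is the upper bound; the delicate part is the tightness family — specifically, getting the constant exactly $2^{1-\bar m}$ rather than merely $\Theta(1)$, and simultaneously guaranteeing that $\mathit{can} = \mathit{core}$ (rigidity of the produced formula) \emph{and} that the summaries genuinely nest. These three requirements pull against each other: rigidity wants ``asymmetric'' summaries, the exact constant wants a very controlled amount of overlap among the $\bar\tau_i$ (so that the $\Genes$/clone mechanism and the connectivity-pruning remove precisely the right atoms), and nesting forces the $(\bar m+1)$-st instance to be built coherently on top of the $\bar m$-th. I expect the construction to use tuples that are pairwise distinct but share a single coordinate value in a pattern that makes $|\Dom_{\bf s}| = 1$ for a predictable set of direct-product constants, so that the clone count and the surviving-atom count can be tracked exactly as a function of $\bar m$. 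Once the family is pinned down, the verification of both equalities is a finite bookkeeping argument that I would carry out explicitly for the general member of the family.
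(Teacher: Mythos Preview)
Your upper-bound argument is correct and is essentially what the paper does (``holds by construction'').

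The tightness family, however, is where your proposal has a genuine gap. You suggest summaries consisting of ``the $\top$-atoms plus one binary atom over fresh domain elements, arranged in a path-like or star-like pattern''. This cannot work: with a single binary atom in each $\bar\varsigma(\bar\tau_i)$, the direct product $P$ contains at most one $r$-atom (plus a bounded number of $\top$-atoms), so $|\mathit{can}|$ stays bounded and you get nowhere near $2^{1-\bar m}\prod_i|\bar\varsigma(\bar\tau_i)|$, which grows with $\bar m$. More generally, paths and stars collapse badly under direct products (the product of two paths has a small core), so even richer path/star summaries would not give $\mathit{core}=\mathit{can}$.

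The idea you are missing is the classical \emph{cycles-of-coprime-length} trick. The paper takes $\bar\varsigma(\bar\tau_i)=\Gamma_i$ to be a directed $r$-cycle of length $p_i$ (the $i$-th prime) together with its $\top$-atoms, so $|\bar\varsigma(\bar\tau_i)|=2p_i$. By the Chinese Remainder Theorem, the direct product of directed cycles of pairwise coprime lengths $p_1,\dots,p_{\bar m}$ is a single directed cycle of length $\ell=p_1\cdots p_{\bar m}$; since all constants across the $\Gamma_i$ are distinct, $\Genes=\emptyset$, $C=\emptyset$, and no pruning occurs, so $|\mathit{can}|=|P|=2\ell=2^{1-\bar m}\prod_i 2p_i$. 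A directed cycle has no nontrivial endomorphism, so the resulting formula is already a core, giving $|\mathit{core}|=|\mathit{can}|$. Your speculation about the clone mechanism and connectivity pruning ``removing precisely the right atoms'' is off: in the intended construction nothing is cloned and nothing is pruned; the factor $2^{1-\bar m}$ arises purely from the arithmetic $\prod_i 2p_i = 2^{\bar m}\ell$ versus $|P|=2\ell$. (Incidentally, $\omega=2$ here, not $1$; $\omega$ is the maximum arity in $D$, not the arity of the unit.)
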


\newcommand{\pr}{\ensuremath{\mathsf{p}}}

\vspace{-4mm}

\begin{proof}[Proof sketch]
(UB) Holds by construction of $\mathit{can}(\unit,\kb)$. (LB) Let $\pr_{\bar{m}}$ be the \mbox{$\bar{m}$-th} prime number
and $\Gamma_{i}$ be $\{\mathsf{r}(\mathsf{c}_1^i,\mathsf{c}_2^i),...,\mathsf{r}(\mathsf{c}_{\pr_{i}-1}^i,\mathsf{c}_{\pr_{i}}^i),\mathsf{r}(\mathsf{c}_{\pr_i}^i,\mathsf{c}_{1}^i)\}$ $\cup$ $\{\top(\mathsf{c}_j^i)\!:\!j\in [\pr_i]\}$, for $i\!\in\!\mathbb{N}$. Define $D_{\bar{m}}$ $=$ $\cup_{i \in [\bar{m}]} \Gamma_{i}$ and $\bar{\varsigma}$ s.t.
$\bar{\varsigma}(D_{\bar{m}},\bar{\tau}_i)\!=\! \Gamma_{i}$ with $\bar{\tau}_i\!=\! \langle \mathsf{c}_1^i\rangle$, for  $i\!\in\![\bar{m}]$.
Let $\rho$ be a map replacing $\mathsf{c}$ with $x$.
Thus, $\mathit{core}(\unit_{\bar{m}},\kb_{\bar{m}})$ $=$  $\mathit{can}(\unit_{\bar{m}},\kb_{\bar{m}})$ is the explanation $x_1^\ell \leftarrow \rho(\Gamma_{\ell})$ where $\ell = \pr_1 \cdot...\cdot \pr_{\bar{m}}$.
\end{proof}

Now we know that $|\mathit{can}(\unit,\kb)|$ and  $|\mathit{core}(\unit,\kb)|$ asymptotically coincide. In what follows, we show that computing the latter is harder. 
We start by Algorithm~\ref{alg:connectedCan} for computing $\mathit{can}(\unit,\kb)$.
According to the construction in Section~\ref{sec:Explanations}: line 1 constructs the set $\FreeConst$; line 2 prints the free variables of $\mathit{can}(\unit,\kb)$ and $n$ atoms that always belong to it; lines 3--4 build $P \cup C$; line 5 checks, for each $\beta \in P \cup C$, whether $\mu(\beta)$ has a (direct or indirect) connection with some atom containing a free variable; and, if so, line 6 appends $\mu(\beta)$ to the output.
In particular, $\mathsf{NearCon}$ is  implemented by Algorithm~\ref{alg:connectedness}, which mimics nondeterministic graph reachability: given an atom $\beta$, it checks whether
$\mu(\beta)$ contains a free variable; if so,  then it accepts, else it guesses an ``adjacent'' atom $\alpha$ and, recursively, repeat the former check.

\begin{theorem}\label{thm:CAN-RES}
Problem $\textsc{can}$ belongs to $F\mathsf{PSPACE} \setminus F\mathsf{P}^{\mathsf{PH}}$ in general and to
$F\mathsf{P}$ both in the medial and agile case.
\end{theorem}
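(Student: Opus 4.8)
The plan is to prove Theorem~\ref{thm:CAN-RES} by separately establishing the three claims: membership in $F\mathsf{PSPACE}$, exclusion from $F\mathsf{P}^{\mathsf{PH}}$, and membership in $F\mathsf{P}$ under the medial (hence also agile) assumption. For the $F\mathsf{PSPACE}$ upper bound, I would argue that Algorithm~\ref{alg:connectedCan} can be implemented within the resource bounds defining $F\mathsf{PSPACE}$: a write-only one-way output tape plus polynomially many work tapes of polynomial length. The delicate point is that the output $\mathit{can}(\unit,\kb)$ is of exponential size (Theorem~\ref{thm:canBound}), so it cannot be stored; instead the algorithm must enumerate the atoms of $P\cup C$ one at a time, and for each candidate atom $\mu(\beta)$ decide whether it lies in the maximal nearly connected part of $\mathit{atm}(\Phi)$ before streaming it to the output tape. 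I would observe that a single atom of $P\cup C$ is an $\omega$-tuple of constants each of the form $d_{\mathbf{s}}$ with $\mathbf{s}$ a length-$m$ sequence over $\Dom_D$, so it has polynomial description length and can be held on a work tape; iterating over all of them costs only a polynomial-length counter. The connectivity test $\mathsf{NearCon}$ (Algorithm~\ref{alg:connectedness}) is a nondeterministic reachability check over the (exponentially large but implicitly represented) atom adjacency graph, and by Savitch's theorem $\mathsf{NPSPACE}=\mathsf{PSPACE}$, so it runs in polynomial work space; since $F\mathsf{PSPACE}=F\mathsf{PSPACE}^{\mathsf{NPSPACE}}$, invoking it as a subroutine is free. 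This gives \textsc{can}~$\in F\mathsf{PSPACE}$.

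For the lower bound \textsc{can}~$\notin F\mathsf{P}^{\mathsf{PH}}$, the key idea is purely size-theoretic: any algorithm in $F\mathsf{P}^{\mathsf{PH}}$ runs in polynomial time and hence produces polynomial-size output, whereas the family $\{(\unit_{\bar m},\kb_{\bar m})\}$ exhibited in the second part of Theorem~\ref{thm:canBound} forces $|\mathit{can}(\unit_{\bar m},\kb_{\bar m})|$ to grow like $2^{1-\bar m}\prod_i|\bar\varsigma(\bar\tau_i)|$. I would make this explicit by noting that in that construction the summaries $\bar\varsigma(\bar\tau_i)=\Gamma_i$ have size $2\pr_i$ (a cycle of $\pr_i$ binary atoms plus $\pr_i$ top-atoms), so $\prod_i|\bar\varsigma(\bar\tau_i)| = 2^{\bar m}\prod_i \pr_i$, and by the prime number theorem $\prod_{i\le\bar m}\pr_i$ is super-polynomial in the input size $|D_{\bar m}| = \sum_i 2\pr_i = \Theta(\pr_{\bar m}^2/\log\pr_{\bar m})$ (a standard consequence: the $\bar m$-th primorial is $e^{(1+o(1))\pr_{\bar m}}$ while $|D_{\bar m}|$ is only polynomial in $\pr_{\bar m}$). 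Hence $|\mathit{can}(\unit_{\bar m},\kb_{\bar m})|$ is not polynomially bounded in the input length, so no polynomial-time machine — with or without a $\mathsf{PH}$ oracle — can even write it down. This rules out $F\mathsf{P}^{\mathsf{PH}}$ (and a fortiori $F\mathsf{P}$).

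For the medial and agile cases, the argument is that the bound $|\mathit{can}(\unit,\kb)|\le c\cdot\prod_{i\in[m]}|\varsigma(\tau_i)|\le c\cdot|D|^m$ from Theorem~\ref{thm:canBound} becomes polynomial once $m=|\unit|$ is a constant (medial) and trivially small once summary sizes are also constant (agile). I would then verify that Algorithm~\ref{alg:connectedCan} runs in polynomial time under these assumptions: constructing $\FreeConst$ and the $n$ seed atoms is immediate; building $P\cup C$ takes time polynomial in $\prod_i|\varsigma(\tau_i)|$ (here using that $\varsigma$ is polynomial-time computable, part of the general/medial assumptions, so the summaries themselves are obtained efficiently, and that the ``cloning'' factor is at most $c=2^\omega$, a constant since $\omega$ is bounded); and the per-atom call to $\mathsf{NearCon}$ is over a graph of polynomial size, hence solvable in polynomial time (indeed in $\mathsf{NL}\subseteq\mathsf{P}$, matching the $F\mathsf{P}=F\mathsf{P}^{\mathsf{NL}}$ remark). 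Summing over the polynomially many atoms yields overall polynomial time, so \textsc{can}~$\in F\mathsf{P}$ in the medial case, and this subsumes the agile case.

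The main obstacle I anticipate is the careful accounting for the $F\mathsf{PSPACE}$ membership: one must be scrupulous that every quantity the algorithm manipulates — loop counters over the exponentially many atoms of $P\cup C$, the representation of an individual atom, the work space consumed by the recursive $\mathsf{NearCon}$ reachability search, and the interface to the write-only output tape — stays within polynomial space, and that nothing of exponential size is ever materialized in memory. The streaming discipline (decide-then-emit, never re-read) is what makes this work, and spelling it out against the precise definition of $F\mathsf{PSPACE}$ given in the complexity-theory preliminaries is the part that needs genuine care rather than routine checking; the two other claims reduce, respectively, to the super-polynomial growth of primorials and to a direct reading-off of the polynomial size bound.
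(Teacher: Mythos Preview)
Your proposal is correct and follows essentially the same approach as the paper: the $F\mathsf{PSPACE}$ upper bound via the streaming implementation of Algorithm~\ref{alg:connectedCan} with $\mathsf{NearCon}$ as an $\mathsf{NPSPACE}$ oracle (and the collapse $F\mathsf{PSPACE}^{\mathsf{NPSPACE}}=F\mathsf{PSPACE}$), the $F\mathsf{P}$ upper bound in the medial/agile cases via the polynomial size bound and $F\mathsf{P}^{\mathsf{NL}}=F\mathsf{P}$, and the $F\mathsf{P}^{\mathsf{PH}}$ exclusion via the unavoidable exponential output size from Theorem~\ref{thm:canBound}. Your explicit primorial growth estimate is a welcome addition that the paper leaves implicit.
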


\vspace{-3mm}

\begin{proof}[Proof sketch]
(UBs) In the general case, Algorithm~\ref{alg:connectedCan} runs in $F\mathsf{PSPACE}$ using $\mathsf{NearCon}$ as an oracle in $\mathsf{NPSPACE}$. Since $F\mathsf{PSPACE}^ \mathsf{NPSPACE} = F\mathsf{PSPACE}$, the result follows.
In the remaining two cases, Algorithm~\ref{alg:connectedCan} runs in $F\mathsf{P}$ and $\mathsf{NearCon}$ is in  $\mathsf{NL}$. The result follows since $F\mathsf{P}^ \mathsf{NL} = F\mathsf{P}$. 
(LB) By Theorem~\ref{thm:canBound}, we know  that the exponentiality of $\mathit{can}(\unit,\kb)$ is unavoidable in the general case. Hence, such an object cannot be constructed in $F\mathsf{P}^{\mathsf{PH}}$.
\end{proof}

We continue our analysis by presenting   Algorithm~\ref{alg:buildCore} for computing $\mathit{core}(\unit,\kb)$.
This time: line 1 constructs \mbox{$\varphi = \mathit{can}(\unit,\kb)$ and collects its atoms in $A$;} line 2 enumerates each $\alpha \in A$; line 3 builds $\varphi'$ from $\varphi$ by removing $\alpha$; line 4 checks whether $\varphi \longrightarrow \varphi'$; if so, line 5 copies $\varphi'$ in $\varphi$; finally, after removing all redundant atoms, line 6 prints $\varphi$.

\begin{algorithm} [t!]
	\DontPrintSemicolon
	\KwInput{$\kb  = (\db, \varsigma)$ and $\unit = \{\tau_1,...,\tau_m\}$.}
	$d_{{\bf s}_1},...,d_{{\bf s}_n} := \, \tau_1 \otimes ... \otimes \tau_m$
	
	{\bf print} $x_{{\bf s}_1},...,x_{{\bf s}_n} \leftarrow  \top(x_{{\bf s}_1}) \wedge ... \wedge \top(x_{{\bf s}_n})$\;
	\For{$\alpha \in \varsigma(\tau_1) \otimes ... \otimes \varsigma(\tau_m)$}
	{
		\For{$\beta \in \{\alpha\} \cup \mathit{clones}(\alpha)$}
		{
			\If{$\mathsf{NearCon}(\beta,\unit,\kb) == \mathsf{Yes}$}
			{
				{\bf print} $\wedge$  $\mu(\beta)$\;
			}
		}		
	}
	\caption{$\mathsf{BuildCan(\unit,\kb)}$ }\label{alg:connectedCan}
\end{algorithm}
\begin{algorithm}[t!]
	\DontPrintSemicolon
	\KwInput{$\kb  = (\db, \varsigma)$, $\unit = \{\tau_1,...,\tau_m\}$ and an atom $\beta$.}
	{\bf guess}  $d_{\bf s} \in \tau_1 \otimes ... \otimes \tau_m$ \;
	\If{$x_{\bf s} \in \Dom_{\{\mu(\beta)\}}$ }
	{ $\mathsf{accept~this~branch}$ }
	
	{\bf guess} $\alpha \in \varsigma(\tau_1) \otimes ... \otimes \varsigma(\tau_m)$\;
	\If{$\Dom_{\{\mu(\alpha)\}} \cap \Dom_{\{\mu(\beta)\}} \neq \emptyset$}
	{
	    \If{$\mathsf{NearCon}(\alpha,\kb,\unit) == \mathsf{Yes}$}
	    {	 
	        $\mathsf{accept~this~branch}$
	    }
	    
	}
	$\mathsf{reject~this~branch}$
	\caption{$\mathsf{NearCon}(\beta,\unit,\kb)$}\label{alg:connectedness}
\end{algorithm}
\begin{algorithm} [t!]
	\DontPrintSemicolon
	\KwInput{$\kb  = (\db, \varsigma)$ and $\unit = \{\tau_1,...,\tau_m\}$.}
		$\varphi := \mathsf{BuildCan}(\unit,\kb)$ \ \ \ \mbox{and} \ \ \ $A := \mathit{atm}(\varphi)$\;
		
	    \For{$\alpha \in A$}
	    {
	        $\varphi' := \mathit{remove}(\alpha,\varphi)$\;
	        \If{$\varphi \longrightarrow \varphi'$}
	        {$\varphi := \varphi'$}
	    }
	    {\bf print} $\varphi$

	\caption{$\mathsf{BuildCore(\unit,\kb)}$ }\label{alg:buildCore}
\end{algorithm}

\begin{theorem}
$\textsc{core}$ is in $F\mathsf{EXP}^{\mathsf{NP}}\setminus F\mathsf{P}^{\mathsf{PH}}$ in the general case, in $F\mathsf{P}^{\mathsf{NP}}$ in the medial case and in $F\mathsf{P}$ in the agile case. 
Unless $\mathsf{NP}=\mathsf{coNP}$, $\textsc{core} \not\in F\mathsf{P}$ in the medial case.
\end{theorem}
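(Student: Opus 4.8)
The plan is to establish the four assertions about \textsc{core} in order of difficulty, reusing the structural bound from Theorem~\ref{thm:canBound} and the two algorithms \textsf{BuildCan} and \textsf{BuildCore}. For the upper bounds, I would analyze Algorithm~\ref{alg:buildCore}: line~1 invokes \textsf{BuildCan}, whose complexity was already pinned down in Theorem~\ref{thm:CAN-RES}; lines~2--5 perform at most $|\mathit{atm}(\mathit{can}(\unit,\kb))|$ iterations, each requiring one homomorphism test $\varphi \longrightarrow \varphi'$. A homomorphism test between two formulas is an NP question (guess the map and verify atom containment). So in the general case, $\mathit{can}(\unit,\kb)$ is of exponential size (by Theorem~\ref{thm:canBound}), the loop runs exponentially often, and each test is solvable by an NP oracle on an exponential-size input, giving $F\mathsf{EXP}^{\mathsf{NP}}$. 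In the medial case $\mathit{can}(\unit,\kb)$ is polynomial, so the loop runs polynomially often and each homomorphism call is a genuine $\mathsf{NP}$ query on polynomial input, giving $F\mathsf{P}^{\mathsf{NP}}$. In the agile case $\mathit{can}(\unit,\kb)$ has constant size, hence so does every $\varphi'$, and all homomorphism tests are decidable in constant (hence polynomial) time, giving $F\mathsf{P}$; note \textsf{BuildCan} is already in $F\mathsf{P}$ in that regime by Theorem~\ref{thm:CAN-RES}.

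For the $F\mathsf{P}^{\mathsf{PH}}$ lower bound in the general case, I would reuse the family $\{(\unit_{\bar m}, \kb_{\bar m})\}_{\bar m>0}$ constructed in the proof of Theorem~\ref{thm:canBound}: there $|\mathit{core}(\unit_{\bar m},\kb_{\bar m})| = 2^{1-\bar m}\cdot\prod_{i\in[\bar m]}|\bar\varsigma(\bar\tau_i)|$, which is exponential in the input size, so merely \emph{writing} the output requires more than polynomial time and the problem cannot lie in $F\mathsf{P}^{\mathsf{PH}}$ (every such function has polynomially bounded output). This is essentially the same argument already used for \textsc{can}.

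The genuinely new part is the conditional hardness: unless $\mathsf{NP}=\mathsf{coNP}$, $\textsc{core}\notin F\mathsf{P}$ in the medial case. The plan is a reduction \emph{from} a $\mathsf{coNP}$-hard decision problem \emph{to} (the decision version of) \textsc{core}. The natural choice is to encode the complement of \textsc{3-Colorability}, or more directly the homomorphism/containment problem for conjunctive queries: deciding whether one CQ maps homomorphically into another is $\mathsf{NP}$-complete, so its complement is $\mathsf{coNP}$-complete, and the core of a CQ is a classical witness that ``redundancy'' questions about CQs are hard. Concretely, given a candidate instance of such a problem, I would build in polynomial time an SKB $\kb$ and a unit $\unit$ of bounded arity and bounded cardinality such that $\mathit{can}(\unit,\kb)$ is (isomorphic to) a chosen CQ, and such that reading off $\mathit{core}(\unit,\kb)$ — in particular deciding whether a designated atom survives in the core, or whether $|\mathit{core}(\unit,\kb)|$ is below a given threshold — answers the $\mathsf{coNP}$ question. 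If \textsc{core} were in $F\mathsf{P}$, one could compute the core in polynomial time and then inspect it, placing the $\mathsf{coNP}$-complete problem in $\mathsf{P}$ and collapsing $\mathsf{NP}=\mathsf{coNP}$.

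The main obstacle I anticipate is the reduction for the conditional lower bound: one must realize an \emph{arbitrary} target CQ as $\mathit{can}(\unit,\kb)$ while keeping $|\unit|$ and $\omega$ bounded, which is delicate because $\mathit{can}(\unit,\kb)$ is defined through direct products of summaries, the $\FreeConst$/$\Genes$ bookkeeping, the cloning operation, and the ``nearly connected'' pruning — so the encoding must be engineered so that none of these steps distorts the intended query (e.g., using a single free position, avoiding accidental genes, ensuring connectivity through $\top$-atoms or a dummy free variable). A secondary, more routine obstacle is pinning the homomorphism subroutine at exactly the right level ($\mathsf{NP}$, not higher) and confirming that in the agile case the bounded size of summaries really does force constant-size cores so that the whole of Algorithm~\ref{alg:buildCore} stays in $F\mathsf{P}$.
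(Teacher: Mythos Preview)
Your proposal is correct and follows the paper's approach almost exactly for the upper bounds and for the general-case lower bound; there is only a minor variation in how you set up the conditional lower bound in the medial case. The paper does not start from a generic $\mathsf{coNP}$-hard problem but from the $\mathsf{DP}$-complete problem \textsc{cff-core} of Fagin, Kolaitis, and Popa (given $(\varphi',\varphi)$ with $\mathit{atm}(\varphi')\subseteq\mathit{atm}(\varphi)$, decide whether $\varphi'\simeq\mathit{core}(\varphi)$), restricts it to constant-free $\WCQ$ formulas closed under~$\top$, and then Turing-reduces it to \textsc{core}: build $(\unit,\kb)$ with $\varphi\simeq\mathit{can}(\unit,\kb)$, call the \textsc{core} oracle, and check in $\mathsf{NP}$ whether $\varphi'\simeq\mathit{core}(\unit,\kb)$; if \textsc{core} were in $F\mathsf{P}$ this would put a $\mathsf{DP}$-hard problem in $\mathsf{NP}$, forcing $\mathsf{NP}=\mathsf{coNP}$. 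Your version (pick a $\mathsf{coNP}$-complete CQ-containment/colouring question, inspect the computed core) works equally well and needs the very same key encoding step you flagged as the main obstacle; the paper supplies that encoding concretely via a two-element unit and an $\mathsf{alias}$ constant that forces the direct product of summaries to be isomorphic to the target formula while keeping $\Genes$ empty on the relevant connected part. One small caution: ``a designated atom survives in the core'' is not well-defined since the core is only determined up to isomorphism, so phrase the inspection as a size threshold or an isomorphism test, as the paper does.
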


\vspace{-3mm}

\begin{proof}[Proof sketch]
(UBs)  In the medial case, Algorithm~\ref{alg:buildCore} runs in $F\mathsf{P}$
using an oracle in $\mathsf{NP}$ for checking whether
\mbox{$\varphi \longrightarrow \varphi'$} holds. Analogously, in the general case, it runs in $F\mathsf{EXP}$
with an oracle in $\mathsf{NP}$. In the agile case, everything becomes polynomial as both $|\varphi|$ and $\Dom_{\mathit{atm}(\varphi)} \cap \V$ are bounded.
(LBs) In the general case it comes by Theorem~\ref{thm:CAN-RES}.
In the medial case, let $\textsc{cff-core}$ be the $\mathsf{DP}$-complete problem~\cite{DBLP:journals/tods/FaginKP05}:
{\em given a pair $(\varphi',\varphi)$ of constants-free formulas  s.t. $\mathit{atm}(\varphi') \subseteq \mathit{atm}(\varphi)$, is \mbox{$\varphi' \simeq \mathit{core}(\varphi)$}?}
%
%
Let $\mathbb{NT}$ be all constant-free formulas in $\WCQ$ closed under $\top$, and $\textsc{nt-core}$ be $\textsc{cff-core}$ restricted to $\mathbb{NT}$ formulas.
Indeed, $\textsc{nt-core}$ remains $\mathsf{DP}$-hard.
For each $\varphi \in \mathbb{NT}$, it is possible to build in $F\mathsf{P}$ a pair $(\unit,\kb)$  s.t. $\varphi$ $\simeq$ $\mathit{can}(\unit,\kb)$ and $\mathit{core}(\varphi)$ $\simeq$ $\mathit{core}(\unit,\kb)$.
We can now Turing reduce $\textsc{nt-core}$ to $\textsc{core}$. Indeed, from a pair $(\varphi',\varphi)$ of formulas in $\mathbb{NT}$ do: $(i)$ construct in $F\mathsf{P}$ a pair $(\unit,\kb)$  s.t. $\varphi \simeq \mathit{can}(\unit,\kb)$; $(ii)$ construct $\mathit{core}(\unit,\kb)$; and $(iii)$ check in $\mathsf{NP}$ whether $\varphi' \simeq \mathit{core}(\unit,\kb)$. If $\textsc{core}$ was in $F\mathsf{P}$, then step $(ii)$ would also be in $F\mathsf{P}$ and, hence, $\textsc{nt-core}$ would be in $\mathsf{NP}$, which is impossible unless \mbox{$\mathsf{NP} = \mathsf{coNP}$.}
\end{proof}

We conclude by observing that $\mathit{can}(\unit,\kb)$ may be a practical alternative to  $\mathit{core}(\unit,\kb)$ since their size asymptotically coincide but the former is computationally simpler.

\subsubsection{Decision Problems.}

Let $\{\C_f,\C_\ell\}$ be a partition of $\C$ with $\C_\mathit{f}$ being {\em flat} (e.g., $\mathsf{a}$, $\mathsf{a}_1$, $\mathsf{a}_2$) and $\C_{\ell}$ $=$ $\{c^s\!:c\!\in\!\C_f \wedge s>1\}$ being {\em lifted}.
Let $\C_\mathit{re}$ $=$ $\{\mathsf{alias}\}$ $\cup$ $\{\mathsf{a}_i, \mathsf{b}_i\!:\!i\!>\!0\}$
and \mbox{$\C_\mathit{in}$ $=$ $\C_f\!\setminus\!\C_\mathit{re}$} 
denote {\em reserved} and {\em input} flat constants.
Analogously, let $\PS_\mathit{re}$ $=$ $\{\mathsf{focus},\mathsf{arc}, \top\}$ $\cup$ $\{\mathsf{twin}_s\!:\!s\!>\!1\}$ and \mbox{$\PS_\mathit{in}$ $=$ $(\PS \setminus \PS_\mathit{re}) \cup \{\top\}$.}
Inputs of our reductions use symbols only from $\C_\mathit{in}$ $\cup$ $\PS_\mathit{in}$.
Consider any \mbox{$k\!>\!0$} and \mbox{$a\!\in\!\C_\mathit{f}$.}
From \mbox{$C\!\subset\!\C_\mathit{f}$,}
let $\mathit{fc}(C)$ $=$ \mbox{$\{\mathsf{focus}(c)\!:\!c\!\in\!C\}$} 
and
$\mathit{tw}(C,k)$ be the set \mbox{$\{\top(c^s),\mathsf{twin}_s(c,c^s)\!:\!c\!\in\!C$ $\wedge$ $s\in[2..k]\}$}.
Let \mbox{$\langle a \rangle^k$ $=$ $\langle a,a^2,..,a^k\rangle$.}
From a unary \mbox{$\unit$ $=$ $\{\tau_1,..,\tau_m\}$,}
let $\unit^k$ be  $\{\tau_1^k,..,\tau_m^k\}$.
From a dataset $D$, let
$\mathit{double}(D,a)$ be the  \mbox{dataset $\{p(c_1,..,c_{|p|})\!: p(b_1,..,b_{|p|})\!\in\!\db \wedge \mbox{each}~c_i\in g(b_i)\}$,} where \mbox{$g(a)\!=\!\{a,\mathsf{alias}\}$,} and $g(b_i)\!=\!\{b_i\}$ whenever $b_i\!\neq\!a$.
From $\tau\!\in\!\C^k$,
let $\mathit{off}(\tau,a)\!\in\!\C_{f}^k$ be the tuple obtained from $\tau$ by replacing each $c^s$ with $c$ and $\mathsf{alias}$ with $a$.
From two tuples $\langle { \bf t}_1 \rangle$ and $\langle { \bf t}_2 \rangle$ of any arities, let $\langle { \bf t}_1 \rangle \circ \langle { \bf t}_2 \rangle$ be $\langle { \bf t}_1,{ \bf t}_2 \rangle$. 
From two units $\unit_1$ and $\unit_2$ with $\Dom_{\unit_1} \cap \Dom_{\unit_2} = \emptyset$, let $\unit_1 \times \unit_2$ be the unit $\{\tau_1 \circ \tau_2: \tau_1\in \unit_1 \wedge \tau_2 \in \unit_2\}$.

\begin{theorem}\label{thm:DEF}
\textsc{def} is  $\mathsf{coNEXP}$-complete, $\mathsf{coNP}$-complete and in $\mathsf{P}$ in the general, medial and agile case, respectively. In particular, for each $k>0$, LBs hold already for \textsc{def}$_k$.
\end{theorem}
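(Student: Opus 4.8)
The overall strategy is to attack the three cases separately, pairing a matching upper and lower bound for each. For the upper bounds I would reason through the algorithmic characterization of $\kb$-definability. By Proposition~\ref{prop:def-car}, $\unit$ is $\kb$-definable iff any characterization of $\unit$ also defines it, and by Proposition~\ref{thm:can} we have the concrete characterization $\mathit{can}(\unit,\kb)$; so definability reduces to checking whether $\mathit{inst}(\mathit{can}(\unit,\kb),\kb) = \unit$, i.e.\ whether no tuple outside $\unit$ is a $\mathit{can}(\unit,\kb)$-instance. A tuple $\tau \notin \unit$ witnesses non-definability exactly when there is a $\C$-homomorphism from $\mathit{atm}(\mathit{can}(\unit,\kb))$ into $\varsigma(\tau)$ sending the free variables to the components of $\tau$. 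In the general case $\mathit{can}(\unit,\kb)$ is exponential (Theorem~\ref{thm:canBound}), $\Dom_D$ is polynomial, so the number of candidate $\tau$ is exponential and each homomorphism check is in $\mathsf{NP}$ relative to an object of exponential size — hence the complement is in $\mathsf{NEXP}$, giving $\mathsf{coNEXP}$ membership. In the medial case $m$ is bounded, so by Theorem~\ref{thm:canBound} $\mathit{can}(\unit,\kb)$ has polynomial size; a non-definability witness $\tau$ together with the homomorphism is a polynomial certificate, putting the complement in $\mathsf{NP}$ and \textsc{def} in $\mathsf{coNP}$. In the agile case summaries have bounded size, hence $\mathit{can}(\unit,\kb)$ has bounded size, there are only polynomially many $\tau$ (over the polynomially many domain elements), and each of the boundedly many homomorphism checks is polynomial — so \textsc{def} is in $\mathsf{P}$.

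For the lower bounds I would build reductions that already work for unary units (to get the ``LBs hold for \textsc{def}$_k$'' refinement — in fact for $k=1$, and the auxiliary ``lifting'' gadgets $\unit^k$, $\mathit{tw}(C,k)$, $\langle a\rangle^k$, $\mathsf{twin}_s$ introduced just before the theorem are presumably exactly the machinery for pushing a unary reduction up to arity $k$). In the medial case the natural source is a $\mathsf{coNP}$-complete problem phrased as a non-existence of a homomorphism; since checking whether a conjunctive query/formula has a satisfying match in a structure is the canonical $\mathsf{NP}$-complete problem, I would encode an instance of (the complement of) such a problem into an SKB $\kb$ and a unit $\unit$ so that $\unit$ is definable iff no ``bad'' tuple maps in — essentially forcing $\mathit{can}(\unit,\kb)$ to be (isomorphic to) the query and arranging the summaries so that a spurious instance exists exactly when the query matches somewhere. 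The gadgets $\mathit{double}(D,a)$, $\mathit{off}(\tau,a)$, $\mathsf{alias}$, $\mathit{fc}$, $\mathsf{focus}$ look designed precisely to create a ``twin'' element whose presence in $\mathit{inst}$ is controlled by a homomorphism condition, so I would use those. For the general case I would scale the same idea up: take a $\mathsf{coNEXP}$-complete problem (e.g.\ complement of succinct-3-colorability or a tiling problem) and exploit that $\varsigma$ is only required to be polynomial-time computable together with the exponential blow-up of $\mathit{can}(\unit,\kb)$ from Theorem~\ref{thm:canBound} — the summary selector can compute, in polynomial time from a succinct encoding, summaries whose direct product has exponentially many atoms, so that the exponential-size structure implicitly described by the succinct input is exactly what $\mathit{can}(\unit,\kb)$ captures, and definability encodes the exponential-size homomorphism-non-existence.

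I expect the main obstacle to be the two lower-bound constructions, specifically verifying that the constructed $\kb$ and $\unit$ behave correctly: that $\mathit{can}(\unit,\kb)$ is (up to equivalence) the intended query structure, that the only possible extra instances are the intended ``bad'' tuples, and that the summary selector is genuinely polynomial-time (in the general case, polynomial in the \emph{succinct} input). The bookkeeping with the reserved vs.\ input alphabets and the $\circ$/$\times$ composition of units — needed to glue the arity-lifting gadget onto the core reduction without accidentally creating unintended homomorphisms or unintended definability — is where the real care lies; the membership arguments, by contrast, are essentially immediate once the size bound of Theorem~\ref{thm:canBound} and the homomorphism reformulation of $\mathit{inst}$ are in hand. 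A secondary point to get right is the agile case: one must confirm that bounding summary size really does bound $|\mathit{can}(\unit,\kb)|$ and the number of distinct variables in it (it does, by the product bound $c\cdot\prod_i|\varsigma(\tau_i)|$ with each factor bounded), so that the whole definability check collapses to polynomial time.
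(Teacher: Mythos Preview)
Your upper-bound argument is essentially the paper's: the paper packages exactly the test ``does some $\tau\notin\unit$ together with a $\C$-homomorphism into $\varsigma(\tau)$ witness $\tau\in\mathit{inst}(\mathit{can}(\unit,\kb),\kb)$?'' into Algorithm~\ref{alg:NotDef}, and then reads off $\mathsf{NEXP}$/$\mathsf{NP}$/$\mathsf{P}$ for $\neg\textsc{def}$ in the three regimes from the size of $\mathit{can}(\unit,\kb)$, just as you do.

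Your lower-bound route, however, is genuinely different from the paper's. The paper does \emph{not} reduce directly from an external $\mathsf{coNP}$- or $\mathsf{coNEXP}$-hard problem into \textsc{def}. Instead it first proves \textsc{ess} is $\mathsf{NEXP}$-/$\mathsf{NP}$-hard (Theorem~\ref{thm:essfinal}, via the product-homomorphism problem and $3$-colourability), and then shows $\textsc{ess}_1 \le \neg\textsc{def}_k$ by a short gadget: add $\mathit{fc}(\Dom_{\unit\cup\{\tau\}})$ to every summary so that $\mathsf{focus}(x)$ is forced into any characterization of $\unit$, which pins the set of \emph{possible} instances down to $\unit\cup\{\tau\}$; hence $\unit$ is definable iff $\tau\notin\mathit{ess}(\unit,\kb)$. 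The $\mathsf{twin}_s$ machinery is, as you guessed, only the arity-lifting from $1$ to $k$. This modular route avoids re-engineering the hard instance inside \textsc{def} and keeps the construction tiny.

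Your direct plan can be made to work, but two of your remarks are slightly off and would cause trouble if followed literally. First, the exponential blow-up in the general case does not come from a succinct input fed to $\varsigma$; $\varsigma$ acts on the (explicit) dataset $D$ and a tuple. The blow-up comes from the unbounded $m=|\unit|$ in the product $\varsigma(\tau_1)\otimes\cdots\otimes\varsigma(\tau_m)$, which is exactly why the paper's hardness source for the general case is the product-homomorphism problem rather than a succinctly-encoded tiling. Second, the $\mathit{double}/\mathsf{alias}$ gadget is not used in the paper's \textsc{def} reduction at all (it serves \textsc{sim} and \textsc{inc}); the operative gadget here is $\mathit{fc}/\mathsf{focus}$, whose role is to cut the candidate instance set down to $\unit\cup\{\tau\}$ so that definability becomes a single membership test. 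If you want to carry out your direct reduction, you will end up reinventing precisely this focusing trick.
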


\vspace{-3mm}

\begin{proof}[Proof sketch]

(UBs) From Algorithm~\ref{alg:NotDef},  $\neg\textsc{def}$ is in $\mathsf{NEXP}$ [$\mathsf{NP}$] in the general [medial] case:
 lines 1,4-5 run in exponential [polynomial] time, while
lines 2-3 run in
nondeterministic exponential [polynomial] time. 
%
For the agile case, after replacing  {\bf guess} with  {\bf for} in lines 2-3, the algorithm runs in polynomial time.
(LBs) For each $k>0$, \mbox{\textsc{ess}$_1$ $\leq$ $\neg$\textsc{def}$_k$} via the mapping $(\kb, \unit, \tau) \mapsto (\kb', \unit^k)$,
where 
$\kb$ $=$ $(\db,\varsigma)$, 
$|\unit| > 1$, 
\mbox{$\kb'$ $=$ $(\db',\varsigma')$,}
$\db'$ $=$ $D$ $\cup$ $\mathit{tw}(\Dom_D,k)$ $\cup$ $\mathit{fc}(\Dom_{\unit\cup \{\tau\}})$,
$\varsigma'$ is s.t. $\varsigma'(D',\tau')$ $=$ $\varsigma(D,\tau_0)$ $\cup$ 
$\mathit{tw}(\Dom_{\{\tau_0\}},k)$ $\cup$ $\mathit{fc}(\Dom_{\unit\cup \{\tau\}})$, $\tau' \in \Dom_{D'}^k$ and $\tau_0$ $=$ $\mathit{off}(\tau',\mathsf{dummy})$.
\end{proof}

\begin{theorem}\label{thm:essfinal}
\textsc{ess} is $\mathsf{NEXP}$-complete, $\mathsf{NP}$-complete and in $\mathsf{P}$ in the general, medial and agile case, respectively. 
In particular, LBs  hold for \textsc{ess}$_k$ with $k>0$ even if $|\unit|>1$.
\end{theorem}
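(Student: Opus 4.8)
The plan is to reduce the whole problem to a single homomorphism test. The pivotal observation is that $\mathit{ess}(\unit,\kb) = \mathit{inst}(\mathit{can}(\unit,\kb),\kb)$: by Proposition~\ref{thm:can} the formula $\mathit{can}(\unit,\kb)$ characterizes $\unit$, hence (by the remark following Proposition~\ref{thm:ess}) it also characterizes $\mathit{ess}(\unit,\kb)$; since $\mathit{ess}(\unit,\kb)$ is definable (Proposition~\ref{thm:ess}), Proposition~\ref{prop:def-car} gives that $\mathit{can}(\unit,\kb)$ in fact defines $\mathit{ess}(\unit,\kb)$. Therefore $\tau \in \mathit{ess}(\unit,\kb)$ if, and only if, $\tau \in \mathit{can}(\unit,\kb)(\varsigma(\tau))$, i.e. if, and only if, there is a $\C$-homomorphism from $\mathit{atm}(\mathit{can}(\unit,\kb))$ into $\varsigma(\tau)$ sending the $i$-th free variable to $\tau[i]$. (One could equally use $\mathit{core}(\unit,\kb)$, but $\mathit{can}(\unit,\kb)$ is cheaper to produce.)

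For the upper bounds I would turn this equivalence into a guess-and-check algorithm: build $\mathit{can}(\unit,\kb)$ (via Algorithm~\ref{alg:connectedCan}), compute $\varsigma(\tau) \subseteq \db$, then guess for each variable of $\mathit{can}(\unit,\kb)$ an image among the at most $|\db|$ elements of $\Dom_{\varsigma(\tau)}$ and verify deterministically that the guessed map is a $\C$-homomorphism respecting the free variables. By Theorems~\ref{thm:canBound} and~\ref{thm:CAN-RES}, $\mathit{can}(\unit,\kb)$ is of exponential size and constructible in $F\mathsf{PSPACE}$ in the general case (using that $\varsigma$ is polynomial-time computable), of polynomial size and constructible in $F\mathsf{P}$ in the medial case, and of constant size in the agile case (where $\varsigma(\tau)$ is bounded too); hence the procedure runs in $\mathsf{NEXP}$, in $\mathsf{NP}$, and in $\mathsf{P}$, respectively.

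For the lower bounds I would give direct reductions based on the token/twin machinery set up just before the theorem. For the medial case ($\mathsf{NP}$-hardness) I would reduce from homomorphism existence between relational structures --- equivalently, Boolean conjunctive query evaluation or $3$-colorability: given a ``pattern'' structure $H$ and a ``target'' structure $E$, I would craft a fixed-size unit $\unit$ (taking $|\unit|>1$ and padding with auxiliary tuples whose summaries carry only self-loops, so the direct product underlying $\mathit{can}$ is unaffected), a polynomial-time summary selector, and a tuple $\tau$ so that $\mathit{atm}(\mathit{can}(\unit,\kb))$ is, up to equivalence, a copy of $H$ carrying one distinguished free variable while $\varsigma(\tau)$ is a copy of $E$; the reserved symbols $\mathsf{focus}$, $\mathsf{arc}$, $\mathsf{twin}_s$ and the constant $\mathsf{alias}$ would anchor the free variable and block degenerate homomorphisms, while the lifting operators $\unit^k$, $\mathit{tw}(\cdot,k)$ and $\langle\cdot\rangle^k$ carry the arity-$1$ construction to any fixed $k>0$ (adding only a bounded amount, since $k$ is not part of the input). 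The general case ($\mathsf{NEXP}$-hardness) uses the same template, but now $H$ is of exponential size and is produced succinctly as the direct product $\varsigma(\tau_1) \otimes \cdots \otimes \varsigma(\tau_m)$ of polynomially many polynomial-size summaries ($m$ being part of the input); concretely I would reduce from a canonical $\mathsf{NEXP}$-complete problem such as succinct $3$-colorability or exponential tiling, letting the product encode the exponential instance.

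The step I expect to be the main obstacle is correctness of the reductions, especially the ``$\Leftarrow$'' direction. On one side one must verify that $\mathit{can}(\unit,\kb)$ really comes out as the intended pattern despite the idiosyncrasies of its definition --- the cloning of columns whose tuple of constants is constant (the set $\Genes$ and the map $f$), the forced $\top$- and $\mathsf{focus}$-atoms, and the final pruning to the maximal nearly connected part of $\mathit{atm}(\Phi)$. On the other side one must rule out ``cheating'' homomorphisms into $\varsigma(\tau)$ that collapse distinct pattern elements or misuse the reserved symbols; the twin predicates and $\mathsf{alias}$ are precisely the devices that rigidify the correspondence and make both directions go through. Once hardness is established it transfers to every fixed arity $k>0$ and to $|\unit|>1$ by the lifting and padding above, and together with the matching upper bounds this yields the claimed classifications.
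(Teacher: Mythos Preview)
Your upper bound argument is essentially identical to the paper's: both compute $\mathit{can}(\unit,\kb)$, guess a map into $\varsigma(\tau)$, and verify it is a $\C$-homomorphism respecting the free variables (this is exactly Algorithm~\ref{alg:NotDef} with line~2 removed).

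For the lower bounds the high-level strategy matches (reduce from a colorability/homomorphism problem, use the $\mathit{tw}(\cdot,k)$ and $\langle\cdot\rangle^k$ machinery to lift to arbitrary arity), but the instantiations differ. In the medial case the paper does not pad with self-loop summaries; it takes $\unit=\{\langle\mathsf{a}_1\rangle,\langle\mathsf{a}_2\rangle\}$ where $\mathsf{a}_1,\mathsf{a}_2$ sit in two disjoint isomorphic copies of the graph $G^a$ (``complement to $a$''), together with a copy of $K_4$; the correctness then hinges on Lemma~\ref{lem:color} ($G$ is $3$-colorable iff $G^a$ is $4$-colorable) rather than on controlling $\Genes$. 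Your self-loop padding idea also works, since the direct product with a self-loop structure is isomorphic to the other factor, but note that the paper's reduction uses only $\mathsf{arc}$, $\mathsf{twin}_s$ and $\top$ --- the symbols $\mathsf{focus}$ and $\mathsf{alias}$ play no role in \textsc{ess}.

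The substantive gap is in the general case. The paper reduces directly from the \emph{product homomorphism problem} \textsc{php-sb} of \citeauthor{DBLP:conf/icdt/CateD15}~(\citeyear{DBLP:conf/icdt/CateD15}): given $I_1,\dots,I_{m+1}$, is there a homomorphism $I_1\otimes\cdots\otimes I_m \to I_{m+1}$? This problem is already known to be $\mathsf{NEXP}$-complete, and its shape matches $\mathit{can}(\unit,\kb)$ perfectly, so the reduction is almost immediate (set $\unit=\{\langle\mathsf{a}_1\rangle,\dots,\langle\mathsf{a}_m\rangle\}$, $\tau=\langle\mathsf{a}_{m+1}\rangle$, and let $\varsigma$ return the whole dataset). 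Your plan instead starts from succinct $3$-colorability or tiling and says ``let the product encode the exponential instance'', but that encoding step --- showing that an arbitrary succinctly described exponential graph can be realised, up to homomorphic equivalence, as a product $\varsigma(\tau_1)\otimes\cdots\otimes\varsigma(\tau_m)$ of polynomially many small factors --- is exactly the non-trivial content of the $\mathsf{NEXP}$-hardness of \textsc{php-sb}. You would either have to reprove that result or cite it, at which point you may as well reduce from \textsc{php-sb} directly.
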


\vspace{-3mm}

\begin{proof}[Proof sketch]
(UBs)
Exploit Algorithm~\ref{alg:NotDef} 
after removing line $2$.
(LBs) Let \textsc{php-sb} be the $\mathsf{NEXP}$-complete problem: {\em given a sequence \mbox{${\bf s}$ $=$ $I_1,..,I_{m+1}$} of
sets of atoms of the form $\mathsf{br}(a,b)$ s.t. $a,b \in \C$ 
and \mbox{$\Dom_{I_i} \cap \Dom_{I_{j\neq i}}\!=\! \emptyset$,} is there any homomorphism from \mbox{$I_1\! \otimes\!...\!\otimes\!I_{m}$} to $I_{m+1}$?} 
Let \textsc{{\footnotesize3}-col} be the $\mathsf{NP}$-complete problem: {\em given a graph \mbox{$G$ $=$ $(V,E)$,} is there a map \mbox{$\lambda: V \rightarrow \{\mathtt{r},\mathtt{g},\mathtt{b}\}$}  s.t. \mbox{$\{u,v\}\!\in\!V$} implies \mbox{$\lambda(u)\!\neq\!\lambda(v)$?}}
In the general case, for each \mbox{$k\!>\!0$,}
by adapting a technique of \citeauthor{DBLP:conf/icdt/CateD15}~(\citeyear{DBLP:conf/icdt/CateD15}), \mbox{\textsc{php-sb} $\leq$ \textsc{ess}$_k$} via the map
{\bf s} $\mapsto$ $(\kb, \unit^k, \langle \mathsf{a}_{m+1}\rangle^k)$, where
\mbox{$\kb$ $=$ $(\db,\varsigma)$,} 
\mbox{$\db$ $=$ $\db_1 \cup...\cup \db_4$,}
$\db_1$ $=$ $I_1 \cup ... \cup I_{m+1}$,
$\db_2$ $=$ $\{\top(c):c \in \Dom_{\db_1}\}$,
\mbox{$\db_3$ $=$ $\{\top(\mathsf{a}_i),\mathsf{br}(\mathsf{a}_{i},c):c \in \Dom_{I_i}\}_{i \in[m+1]}$,}
\mbox{$\db_4$ $=$ $\mathit{tw}(\{\mathsf{a}_1,...,\mathsf{a}_{m+1}\},k)$,}
$\varsigma$ always selects the entire datase $D$, and
$\unit$ is the unary unit $\{\langle \mathsf{a}_1 \rangle,...,\langle \mathsf{a}_m \rangle\}$. 
In the medial case, for each \mbox{$k\!>\!0$,} \mbox{\textsc{{\footnotesize3}-col} $\leq$ \textsc{ess}$_k$} via the map
$G$ $\mapsto$ 
$(\kb, \unit^k, \langle \mathsf{b}_1\rangle^k)$, where  \mbox{$\kb$ $=$ $(\db,\varsigma)$,} 
$\db$ $=$ $\db_1$ $\cup\,...\,\cup$ $\db_5$,
$\db_1$ $=$ $\{\mathsf{arc}(u_i,v_i),$ $\mathsf{arc}(v_i,u_i)$ $:\{u,v\}\in E \wedge i\in[2]\}$,
$\db_2$ $=$ $\{\top(c):c \in \Dom_{\db_1}\}$,
$\db_3$ $=$ $\{\top(\mathsf{b}_i),\mathsf{arc}(\mathsf{b}_i,\mathsf{b}_z):i,z\in[4] \wedge i\neq z\}$,
$\db_4$ $=$\! $\{\top(\mathsf{a}_i),\mathsf{arc}(\mathsf{a}_i,v_i),\mathsf{arc}(v_i,\mathsf{a}_i)\!:\!v\in\!V \wedge i\!\in\![2]\}$, $\db_5$ $=$ $\mathit{tw}(\Dom_{\db_3}\cup\Dom_{\unit},k)$, 
$\varsigma$ always selects the entire dataset $D$, and
$\unit$ is the unary unit $\{\langle \mathsf{a}_1 \rangle,\langle \mathsf{a}_2 \rangle\}$.
\end{proof}

Before analyzing \textsc{sim}, \textsc{inc} and \textsc{prec}, we provide upper bounds for our  gadgets. Both problems reduce to $\textsc{ess}$. Given an input $\iota = (\kb,\unit,\tau,\tau')$, to show \mbox{\textsc{gad{\scriptsize1}} {\scriptsize$ \leq$} $\textsc{ess}$} we can use the reduction $\iota \mapsto (\kb, \unit\cup\{\tau'\} ,\tau)$ and for \mbox{\textsc{gad{\scriptsize2}} {\scriptsize$ \leq$} $\textsc{ess}$} we can use the reduction  $\iota \mapsto (\kb, \unit\cup\{\tau\} ,\tau')$.

\begin{algorithm} [t!]
	\DontPrintSemicolon
	\KwInput{$\kb  = (\db, \varsigma)$ and $\unit = \{\tau_1,...,\tau_m\}$.}
	$\varphi := \mathsf{BuildCan}(\unit,\kb)$\;
	
	{\bf guess}  a tuple $\tau \in (\Dom_\db)^n \setminus \unit$ \;
	{\bf guess} a mapping $h$ from $\Dom_{\mathit{atm}(\varphi)}$ to $\Dom_{\varsigma(\tau)}$\;
	
	\If{$h$ \mbox{{\rm is a }}$\C$\mbox{{\rm -homomorphism from}} $\mathit{atm}(\varphi)$ \mbox{{\rm to}} $\varsigma(\tau)$}
	{
	    \If{$\langle h(x_{{\bf s}_1}),...,h(x_{{\bf s}_n}) \rangle = \tau$}
	    {$\mathsf{accept~this~branch}$ }
	 }
	
	$\mathsf{reject~this~branch}$

	\caption{$\mathsf{NotDef}(\unit,\kb)$}\label{alg:NotDef}
\end{algorithm}

\begin{proposition}\label{prop:gad}
Both \textsc{gad{\scriptsize1}} and \textsc{gad{\scriptsize2}} belong to
$\mathsf{NEXP}$, $\mathsf{NP}$ and $\mathsf{P}$ in the general, medial and agile case, respectively.
\end{proposition}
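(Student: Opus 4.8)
The plan is to finish the two many-one reductions to \textsc{ess} already sketched just above the statement --- $\iota=(\kb,\unit,\tau,\tau')\mapsto(\kb,\unit\cup\{\tau'\},\tau)$ for \textsc{gad{\scriptsize1}} and $\iota\mapsto(\kb,\unit\cup\{\tau\},\tau')$ for \textsc{gad{\scriptsize2}} --- and then to read the upper bounds off Theorem~\ref{thm:essfinal}. Since only membership is claimed, it is enough to check, for each reduction, that it is (a)~correct, (b)~cheap to compute, and (c)~case-preserving, i.e.\ that it maps the general/medial/agile regime of the gadget problem into the same regime of \textsc{ess}.

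First I would note that correctness is essentially definitional: the defining question of \textsc{gad{\scriptsize1}} on $\iota$, namely ``does $\tau\in\mathit{ess}(\unit\cup\{\tau'\},\kb)$ hold?'', is literally \textsc{ess} on $(\kb,\unit\cup\{\tau'\},\tau)$, and symmetrically for \textsc{gad{\scriptsize2}}. The one thing requiring attention is that the output is a legal \textsc{ess}-instance, so I would check that $\unit\cup\{\tau'\}$ is still a proper unit with domain inside $\Dom_\db$: appending one row to a proper relation cannot create a duplicate column (any column pair separated by some row of $\unit$ stays separated in $\unit\cup\{\tau'\}$), and $\tau'\in(\Dom_\db)^n$ by hypothesis; the promise $\tau\notin\unit$ is given, and the degenerate subcase $\tau=\tau'$ can simply be pre-detected and answered \textsf{yes} since then $\mathit{ess}(\unit\cup\{\tau\},\kb)\supseteq\unit\cup\{\tau\}\ni\tau$. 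As for cost, the reduction leaves $\db$ and $\varsigma$ untouched and merely adds a single tuple to $\unit$, so it runs in logarithmic space, a fortiori in polynomial time.

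Case-preservation then comes from the same observation: with $\db$ unchanged the maximum arity $\omega$ stays bounded, with $\varsigma$ unchanged it stays polynomial-time computable and the summary sizes stay bounded, and $|\unit\cup\{\tau'\}|=|\unit|+1$ keeps $|\unit|$ bounded when it was. Hence general maps to general, medial to medial, and agile to agile. As $\mathsf{NEXP}$, $\mathsf{NP}$ and $\mathsf{P}$ are all closed under polynomial-time many-one reductions, Theorem~\ref{thm:essfinal} immediately yields membership of \textsc{gad{\scriptsize1}} and \textsc{gad{\scriptsize2}} in $\mathsf{NEXP}$, $\mathsf{NP}$ and $\mathsf{P}$ in the general, medial and agile case, respectively.

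I do not anticipate a genuine obstacle here --- the argument is routine bookkeeping, and the only mildly delicate point is confirming that the enlarged unit stays proper (plus the trivial handling of $\tau=\tau'$). If a self-contained proof were preferred, an equivalent route is a direct algorithm in the style of Algorithm~\ref{alg:NotDef} built around $\mathsf{BuildCan}(\unit\cup\{\tau'\},\kb)$: construct $\mathit{can}(\unit\cup\{\tau'\},\kb)$ and then guess and verify a $\C$-homomorphism from its atoms to $\varsigma(\tau)$ sending the free variables onto $\tau$, whose correctness rests on the identity $\mathit{ess}(\unit\cup\{\tau'\},\kb)=\mathit{inst}(\mathit{can}(\unit\cup\{\tau'\},\kb),\kb)$ implied by Propositions~\ref{prop:def-car} and~\ref{thm:ess}.
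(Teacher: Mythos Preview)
Your proposal is correct and follows essentially the same approach as the paper: both reduce \textsc{gad{\scriptsize1}} and \textsc{gad{\scriptsize2}} to \textsc{ess} via the maps $\iota\mapsto(\kb,\unit\cup\{\tau'\},\tau)$ and $\iota\mapsto(\kb,\unit\cup\{\tau\},\tau')$, argue case-preservation, and invoke the upper bounds of Theorem~\ref{thm:essfinal}. If anything, you are more careful than the paper's own proof, which does not explicitly verify that $\unit\cup\{\tau'\}$ remains proper or handle the degenerate subcase $\tau=\tau'$.
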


We can now complete our computational trip.

\begin{theorem}\label{thm:sim}
\textsc{sim} is $\mathsf{NEXP}$-complete, $\mathsf{NP}$-complete and in $\mathsf{P}$ in the general, medial and agile case, respectively. In particular, for each $k>0$, LBs hold already for \textsc{sim}$_k$.
\end{theorem}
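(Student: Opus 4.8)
The plan is to sandwich \textsc{sim} between the two gadgets and \textsc{ess}. The first step is the key equivalence: for every input $(\kb,\unit,\tau,\tau')$ with $\{\tau,\tau'\}\cap\unit=\emptyset$, the answer to \textsc{sim} is ``yes'' if, and only if, the answers to both \textsc{gad{\scriptsize1}} and \textsc{gad{\scriptsize2}} are ``yes'', i.e. $\tau\in\mathit{ess}(\unit\cup\{\tau'\},\kb)$ and $\tau'\in\mathit{ess}(\unit\cup\{\tau\},\kb)$. The ``only if'' direction is immediate: if $\mathit{ess}(\unit\cup\{\tau\},\kb)=\mathit{ess}(\unit\cup\{\tau'\},\kb)$, then $\tau$ (resp. $\tau'$) lies in this common set because it lies in the essential expansion of the unit that contains it (Proposition~\ref{thm:ess}). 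For ``if'' I would use that $\mathit{ess}$ is monotone --- $\unit_1\subseteq\unit_2$ gives $\mathit{ex}(\unit_2,\kb)\subseteq\mathit{ex}(\unit_1,\kb)$, hence $\mathit{ess}(\unit_1,\kb)\subseteq\mathit{ess}(\unit_2,\kb)$ --- together with minimality of the essential expansion: if $\tau\in\mathit{ess}(\unit\cup\{\tau'\},\kb)$, then $\unit\cup\{\tau,\tau'\}$ is contained in the definable unit $\mathit{ess}(\unit\cup\{\tau'\},\kb)$, so $\mathit{ess}(\unit\cup\{\tau,\tau'\},\kb)\subseteq\mathit{ess}(\unit\cup\{\tau'\},\kb)$, while monotonicity gives the reverse inclusion; symmetrically $\tau'\in\mathit{ess}(\unit\cup\{\tau\},\kb)$ yields $\mathit{ess}(\unit\cup\{\tau,\tau'\},\kb)=\mathit{ess}(\unit\cup\{\tau\},\kb)$, so the two essential expansions coincide.

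Given this equivalence, the upper bounds come for free: by Proposition~\ref{prop:gad}, \textsc{gad{\scriptsize1}} and \textsc{gad{\scriptsize2}} are in $\mathsf{NEXP}$, $\mathsf{NP}$, $\mathsf{P}$ in the general, medial, agile case, respectively, and each of these classes is closed under intersection (run the two checks in sequence, guessing witnesses for both), so \textsc{sim} inherits the same membership in each case.

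For the matching lower bounds I would show, for every $k>0$, \textsc{ess}$_1\le$\textsc{sim}$_k$; by Theorem~\ref{thm:essfinal}, \textsc{ess}$_1$ is already $\mathsf{NEXP}$-hard (general) and $\mathsf{NP}$-hard (medial) on instances with $|\unit|>1$, and both hardnesses then transfer. Given $(\kb,\unit,\tau)$ with $\kb=(\db,\varsigma)$, $|\unit|>1$, and (w.l.o.g.) $\tau\notin\unit$, I would fix some $\langle a\rangle\in\unit$ and build $\kb'=(\db',\varsigma')$ with $\db'=\mathit{double}(\db,a)$ and $\varsigma'$ agreeing with $\varsigma$ on every tuple except that $\varsigma'(\langle\mathsf{alias}\rangle)$ is the renaming $a\mapsto\mathsf{alias}$ of $\varsigma(\langle a\rangle)$; the output is the \textsc{sim}$_k$ instance $(\kb',\unit^k,\tau^k,\langle\mathsf{alias}\rangle^k)$, where the lift to arity $k$ (the operators $(\cdot)^k$, $\unit^k$, the twin atoms $\mathit{tw}(\cdot,k)$ added to $\db'$ and to the summaries) is performed exactly as in the proofs of Theorems~\ref{thm:DEF} and~\ref{thm:essfinal}. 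Since $\mathsf{alias}$ does not occur in $\db$, we have $\{\tau,\langle\mathsf{alias}\rangle\}\cap\unit=\emptyset$, all units are proper, $\varsigma'$ is polynomial-time computable, and both the arity and $|\unit|$ are untouched, so the reduction is legitimate and preserves the medial restriction. The correctness argument (given for $k=1$, the lift being routine) uses that $\mathit{ess}(\cdot,\kb)=\mathit{inst}(\mathit{core}(\cdot,\kb),\kb)$ and rests on three observations: $(i)$ the summaries of the tuples of $\unit$ and of $\unit\cup\{\tau\}$ are the same in $\kb$ and $\kb'$, so $\mathit{can}$ and $\mathit{core}$ of these two units are unchanged; $(ii)$ since $\varsigma'(\langle\mathsf{alias}\rangle)$ is isomorphic to $\varsigma'(\langle a\rangle)=\varsigma(\langle a\rangle)$ with $\langle a\rangle\in\unit$, the tuple $\langle\mathsf{alias}\rangle$ is a $\mathit{core}$-instance of both $\unit$ and $\unit\cup\{\tau\}$, whence $\mathit{ess}(\unit,\kb')=\mathit{ess}(\unit,\kb)\cup\{\langle\mathsf{alias}\rangle\}$ and $\mathit{ess}(\unit\cup\{\tau\},\kb')=\mathit{ess}(\unit\cup\{\tau\},\kb)\cup\{\langle\mathsf{alias}\rangle\}$; $(iii)$ adjoining to the direct product $\varsigma'(\tau_1)\otimes\dots\otimes\varsigma'(\tau_m)$ the extra factor $\varsigma'(\langle\mathsf{alias}\rangle)$, isomorphic to the factor $\varsigma'(\langle a\rangle)$ already present, does not change the $\longleftrightarrow$-class of the associated characterization, so $\mathit{core}(\unit\cup\{\langle\mathsf{alias}\rangle\},\kb')\longleftrightarrow\mathit{core}(\unit,\kb')$ and hence $\langle\mathsf{alias}\rangle\in\mathit{ess}(\unit,\kb')$. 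Combining, $\tau\sim_{^\unit}^{_{\kb'}}\langle\mathsf{alias}\rangle$ holds iff $\mathit{ess}(\unit\cup\{\tau\},\kb')=\mathit{ess}(\unit,\kb')$ iff (removing the fresh $\langle\mathsf{alias}\rangle$ from both sides) $\mathit{ess}(\unit\cup\{\tau\},\kb)=\mathit{ess}(\unit,\kb)$ iff $\tau\in\mathit{ess}(\unit,\kb)$, which is exactly the \textsc{ess}$_1$ answer.

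I expect the main obstacle to be the lower bound, and specifically producing a ``reference'' tuple whose $\sim$-relationship with $\unit$ is fixed a priori: the aliasing construction supplies a $\tau'=\langle\mathsf{alias}\rangle$ for which $\mathit{ess}(\unit\cup\{\tau'\},\kb')=\mathit{ess}(\unit,\kb')$ holds unconditionally, collapsing the \textsc{sim} question into the \textsc{ess} question on $\tau$, and the delicate point is checking that cloning the summary of a single tuple of $\unit$ disturbs neither the canonical/core characterizations nor the properness of the lifted units --- which is handled exactly as in the earlier \textsc{ess}/\textsc{def} reductions. By comparison, the equivalence \textsc{sim}$\equiv$\textsc{gad{\scriptsize1}}$\wedge$\textsc{gad{\scriptsize2}} and the membership bounds are routine.
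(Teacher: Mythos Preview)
Your proposal is correct and follows essentially the same route as the paper: the upper bounds via $\textsc{sim}\equiv\textsc{gad{\scriptsize1}}\wedge\textsc{gad{\scriptsize2}}$ together with Proposition~\ref{prop:gad}, and the lower bounds via the reduction $\textsc{ess}_1\le\textsc{sim}_k$ built on the $\mathit{double}(D,c)$/$\mathsf{alias}$ construction that clones one tuple $\langle c\rangle\in\unit$ so that $\langle\mathsf{alias}\rangle$ automatically sits in $\mathit{ess}(\unit,\kb')$ and the $\textsc{sim}$ question collapses to the $\textsc{ess}$ question on $\tau$.

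Two small remarks. First, your justification of the equivalence $\textsc{sim}\equiv\textsc{gad{\scriptsize1}}\wedge\textsc{gad{\scriptsize2}}$ via monotonicity of $\mathit{ess}$ and minimality among definable supersets is more explicit than what the paper records; this is a welcome addition, not a deviation. Second, your selector $\varsigma'$ (renaming $a\mapsto\mathsf{alias}$ on the single summary $\varsigma(\langle a\rangle)$) is a cosmetic variant of the paper's uniform definition via the $\mathit{off}(\cdot,c)$ operator; both produce summaries isomorphic to $\varsigma(\langle a\rangle)$, and your variant has the pleasant side effect of making it transparent that $\mathsf{alias}\in\Dom_{\varsigma'(\langle\mathsf{alias}\rangle)}$, as required of a summary selector.
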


\vspace{-3mm}

\begin{proof}[Proof sketch]
(UBs) An input for ${\textsc{sim}}$ is $\mathsf{accepted}$ if, and only if, it is $\mathsf{accepted}$ both for \textsc{gad{\scriptsize1}} and  \textsc{gad{\scriptsize2}}.
(LBs) It holds that \textsc{ess}$_1$ $\leq$ \textsc{sim}$_k$ via the map $(\kb, \unit, \tau) \mapsto (\kb', \unit^k,\tau^k,\langle \mathsf{alias}\rangle^k)$, where $\kb$ $=$ $(D,\varsigma)$,
$\unit$ $=$ $\{\tau_1,..,\tau_m\}$,
$m\!>\!1$, 
$\tau_1$ is of the form $\langle c \rangle$,
$\kb'$ $=$ $(D',\varsigma')$,
$D'$ $=$ $\bar{D}$ $\cup$ $\mathit{tw}(\Dom_{\bar{D}},k)$,
$\bar{D}$ $=$ $\mathit{double}(D,c)$,
$\varsigma'$ is  s.t. $\varsigma'(D',\tau')$ $=$ $\varsigma(D,\tau_0)$ $\cup$
$\mathit{tw}(\Dom_{\{\tau_0\}},k)$, $\tau' \in \Dom_{D'}^k$, and $\tau_0$ $=$ $\mathit{off}(\tau',c)$.
\end{proof}

\begin{theorem}\label{thm:inc}
\textsc{inc} is $\mathsf{coNEXP}$-complete, $\mathsf{coNP}$-complete and in $\mathsf{P}$ in the general, medial and agile case, respectively. In particular, for each $k>0$, LBs hold already for \textsc{inc}$_k$.
\end{theorem}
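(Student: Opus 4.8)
Proof plan.

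The plan is to reduce incomparability to the two gadget problems of Table~\ref{tab:tasks}. Observe first that $\mathit{ess}(\cdot,\kb)$ is a closure operator on units: it is extensive ($\unit\subseteq\mathit{ess}(\unit,\kb)$), idempotent (each essential expansion is definable by Proposition~\ref{thm:ess}, and for a definable unit $W$ the smallest definable unit containing $W$ is $W$ itself), and monotone — indeed, if $\unit_1\subseteq\unit_2$ then any formula characterizing $\unit_2$ interprets $\unit_1$, hence by Corollary~\ref{cor:char} it maps homomorphically into a characterization of $\unit_1$, which together with Proposition~\ref{prop:equinst} and part~$(4)$ of the previous theorem yields $\mathit{ess}(\unit_1,\kb)\subseteq\mathit{ess}(\unit_2,\kb)$. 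Using these three facts I would show that $\mathit{ess}(\unit\cup\{\tau\},\kb)\subseteq\mathit{ess}(\unit\cup\{\tau'\},\kb)$ holds if, and only if, $\tau\in\mathit{ess}(\unit\cup\{\tau'\},\kb)$: the forward direction is extensivity of $\unit\cup\{\tau\}$, while the backward one combines extensivity of $\unit$ (to get $\unit\cup\{\tau\}\subseteq\mathit{ess}(\unit\cup\{\tau'\},\kb)$) with monotonicity and idempotence. Consequently, $\tau \|_{^\unit}^{_\kb} \tau'$ holds exactly when neither $\tau\in\mathit{ess}(\unit\cup\{\tau'\},\kb)$ nor $\tau'\in\mathit{ess}(\unit\cup\{\tau\},\kb)$; that is, $\neg\textsc{inc}$ is precisely the union $\textsc{gad{\scriptsize1}}\cup\textsc{gad{\scriptsize2}}$ (both problems share the input shape $(\kb,\unit,\tau,\tau')$ with $\{\tau,\tau'\}\cap\unit=\emptyset$).

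The upper bounds then follow from Proposition~\ref{prop:gad}: $\textsc{gad{\scriptsize1}}$ and $\textsc{gad{\scriptsize2}}$ lie in $\mathsf{NEXP}$, $\mathsf{NP}$, $\mathsf{P}$ in the general, medial and agile case, respectively; since each of these classes is closed under union, $\neg\textsc{inc}$ lies in the same class, and complementing gives $\textsc{inc}\in\mathsf{coNEXP}/\mathsf{coNP}/\mathsf{P}$ (the agile case additionally uses that $\mathsf{P}$ is closed under complement). This mirrors the bounds for $\textsc{sim}$, as expected: $\textsc{sim}$ is the intersection $\textsc{gad{\scriptsize1}}\cap\textsc{gad{\scriptsize2}}$, $\textsc{inc}$ the co-union.

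For the matching lower bounds I would reduce the complement of $\textsc{ess}_1$ — which is $\mathsf{coNEXP}$-complete in general and $\mathsf{coNP}$-complete in the medial case, even for units of size $>1$, by Theorem~\ref{thm:essfinal} — to $\textsc{inc}_k$, for every $k>0$. Given $(\kb,\unit,\tau)$ with $\tau\notin\unit$, I would build two predicate- and constant-disjoint renamed copies $(\kb_A,\unit_A,\tau_A)$ and $(\kb_B,\unit_B,\tau_B)$ and glue them through the product unit $\unit''=\unit_A\times\unit_B$, lifted to arity $k$ by the $\langle\cdot\rangle^k$/$\mathit{tw}$ machinery already used in this section (for $k\ge 2$, splitting the $k$ coordinates between the two copies; for $k=1$, letting the two copies share the single free variable and overlaying both neighbourhoods into every summary), with $\varsigma'$ defined so that the summary of a product entity is the disjoint union of the corresponding $A$- and $B$-summaries. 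The two compared tuples are the cross tuples $\sigma$, which looks like $\tau_A$ in copy $A$ and like a fixed element of $\unit_B$ in copy $B$, and $\sigma'$, defined symmetrically (both are outside $\unit''$ since $\tau_A\notin\unit_A$ and $\tau_B\notin\unit_B$). Because the two copies use disjoint predicates, the construction of $\mathit{can}(\unit'',\kb')$, $\mathit{can}(\unit''\cup\{\sigma\},\kb')$ and $\mathit{can}(\unit''\cup\{\sigma'\},\kb')$ splits into an $A$-part and a $B$-part with no cross atoms surviving the nearly-connected pruning, so $\mathit{ess}$ factorizes: $\mathit{ess}(\unit''\cup\{\sigma\},\kb')=\mathit{ess}(\unit_A\cup\{\tau_A\},\kb_A)\times\mathit{ess}(\unit_B,\kb_B)$ and $\mathit{ess}(\unit''\cup\{\sigma'\},\kb')=\mathit{ess}(\unit_A,\kb_A)\times\mathit{ess}(\unit_B\cup\{\tau_B\},\kb_B)$. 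Since $\mathit{ess}(\unit_A\cup\{\tau_A\},\kb_A)\supsetneq\mathit{ess}(\unit_A,\kb_A)$ exactly when $\tau\notin\mathit{ess}(\unit,\kb)$ (and likewise in copy $B$), an elementary argument about Cartesian products shows the two essential expansions are incomparable iff $\tau\notin\mathit{ess}(\unit,\kb)$ and coincide (hence are comparable) otherwise. The reduction is plainly polynomial-time, so it transfers the stated hardness to $\textsc{inc}_k$ for every $k>0$.

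The main obstacle is the factorization lemma underlying the lower bound: one must check carefully that the disjoint-copies gadget makes the $\FreeConst$/$\Genes$ bookkeeping, the cloning step, and especially the removal of atoms outside the maximal nearly connected part of $\mathit{can}(\unit'',\kb')$ behave as a plain conjunction of the two copies' canonical characterizations — with no spurious cross atoms, and with every retained atom still reachable from a free variable — and that $\varsigma'$ can be chosen so that this factorization is exact for every candidate entity, including in the arity-$1$ variant where the two copies share their unique free variable. Everything else (the closure-operator computation, the gadget bounds, the product reasoning) is routine once this lemma is established.
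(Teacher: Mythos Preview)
Your upper-bound argument is correct and matches the paper's: the paper also observes that an input is in $\textsc{inc}$ iff it is rejected by both $\textsc{gad{\scriptsize1}}$ and $\textsc{gad{\scriptsize2}}$, and invokes Proposition~\ref{prop:gad}. Your closure-operator justification for the equivalence $\mathit{ess}(\unit\cup\{\tau\},\kb)\subseteq\mathit{ess}(\unit\cup\{\tau'\},\kb)\Leftrightarrow\tau\in\mathit{ess}(\unit\cup\{\tau'\},\kb)$ is a nice explicit step the paper leaves implicit.

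Your lower-bound route is genuinely different. The paper does \emph{not} use a two-copy product; it reduces $\textsc{ess}_1$ to $\neg\textsc{inc}_k$ with a \emph{single} copy, augmented by the $\mathsf{focus}$ predicate and the $\mathsf{alias}$ constant (via $\mathit{double}(D,c)$ and $\mathit{fc}$). Every element of $\Dom_D$ gets $\mathsf{focus}$ in its summary while $\mathsf{alias}$ does not, so $\langle\mathsf{alias}\rangle\notin\mathit{ess}(\unit,\kb')$; on the other hand $\mathsf{alias}$ is built to mimic $\tau_1$, so the characterization of $\unit\cup\{\langle\mathsf{alias}\rangle\}$ drops $\mathsf{focus}$ and behaves like the original characterization over $\kb$. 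One then checks that $\tau$ and $\langle\mathsf{alias}\rangle$ are incomparable iff $\tau\notin\mathit{ess}(\unit,\kb)$. This works uniformly for every $k\ge 1$ via the $\langle\cdot\rangle^k/\mathit{tw}$ lift.

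Your two-copy product construction (essentially the paper's $\textsc{prec}$ gadget, Corollary~\ref{cor:bipQuery}) is fine for $k\ge 2$: the product unit $\unit_A\times\unit_B$ is binary and can be padded up, and the factorization of $\mathit{ess}$ goes through because the copies share no predicate except $\top$. The gap is the case $k=1$. Your sentence ``letting the two copies share the single free variable and overlaying both neighbourhoods into every summary'' does not define a reduction: with disjoint constant sets there is no single constant that simultaneously lives in both copies, and if you move to a product domain $\Dom_A\times\Dom_B$ you must say how each predicate of each copy acts on pairs and how $\varsigma'$ is computed --- none of which is covered by the existing $\mathit{tw}/\langle\cdot\rangle^k$ machinery, and the nearly-connected pruning no longer obviously splits. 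As stated, the $k=1$ case is not established. The paper's single-copy $\mathsf{alias}+\mathsf{focus}$ trick avoids this entirely and is what you should use if you want the full ``for each $k>0$'' claim.
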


\vspace{-3mm}

\begin{proof}[Proof sketch]
(UBs)
An input for ${\textsc{inc}}$ is  {\em $\mathsf{accepted}$} if, and only if, it is  {\em $\mathsf{accepted}$} both for $\neg$\textsc{gad{\scriptsize1}} and  $\neg$\textsc{gad{\scriptsize2}}.
(LBs) It holds that \textsc{ess}$_1$ $\leq$ $\neg$\textsc{inc}$_k$ via the following map: $(\kb, \unit, \tau) \mapsto (\kb', \unit^k,\tau^k,\langle \mathsf{alias}\rangle^k)$, where \mbox{$\kb$ $=$ $(D,\varsigma)$,}
\mbox{$\unit$ $=$ $\{\tau_1,..,\tau_m\}$,}
$m\!>\!1$,
$\tau_1$ is of the form $\langle c \rangle$,
\mbox{$\kb'$ $=$ $(\db',\varsigma')$,} 
\mbox{$\db'$ $=$ $D$ $\cup$
$\mathit{tw}(\Dom_{\bar{D}},k)$ $\cup$ $\mathit{fc}(\Dom_\db)$,}
$\bar{D} = \mathit{double}(D,c)$,
$\varsigma'$ is  s.t. $\varsigma'(D',\tau')$ $=$ $\varsigma(D,\tau_0)$ 
$\cup$ $\mathit{tw}(\Dom_{\{\tau_0\}},c)$
$\cup$
$\mathit{fc}(\Dom_\db)$, $\tau' \in \Dom_{D'}^k$, and $\tau_0$ $=$ $\mathit{off}(\tau',c)$.
\end{proof}

\begin{theorem}\label{thm:prec}
\textsc{prec} is $\mathsf{DEXP}$-complete, $\mathsf{DP}$-complete and in $\mathsf{P}$ in the general, medial and agile case, respectively. In particular, for each $k>1$, LBs hold already for \textsc{prec}$_k$.
\end{theorem}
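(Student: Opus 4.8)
The plan is to reduce \textsc{prec} to a Boolean combination of the two gadgets. The key is the bridging fact that, for every SKB $\kb=(\db,\varsigma)$, $\kb$-unit $\unit$ and tuples $\tau,\tau'$ over $\Dom_D$ with $\{\tau,\tau'\}\cap\unit=\emptyset$, we have $\tau\in\mathit{ess}(\unit\cup\{\tau'\},\kb)$ if, and only if, $\mathit{ess}(\unit\cup\{\tau\},\kb)\subseteq\mathit{ess}(\unit\cup\{\tau'\},\kb)$: the ``if'' direction is immediate since $\tau\in\unit\cup\{\tau\}\subseteq\mathit{ess}(\unit\cup\{\tau\},\kb)$, and for ``only if'' Proposition~\ref{thm:ess} says that $\mathit{ess}(\unit\cup\{\tau'\},\kb)$ is a definable unit which, containing $\tau$ together with $\unit\cup\{\tau'\}$, contains $\unit\cup\{\tau\}$, hence is among the units whose intersection yields $\mathit{ess}(\unit\cup\{\tau\},\kb)$. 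Applying this to $(\tau,\tau')$ and to $(\tau',\tau)$, and recalling that $X\subsetneq Y$ means $X\subseteq Y$ together with $Y\not\subseteq X$, I get that $\tau\prec_{^\unit}^{_\kb}\tau'$ holds exactly when \textsc{gad{\scriptsize1}} accepts $(\kb,\unit,\tau,\tau')$ and \textsc{gad{\scriptsize2}} rejects it. By Proposition~\ref{prop:gad}, \textsc{gad{\scriptsize1}} is in $\mathsf{NEXP}$/$\mathsf{NP}$/$\mathsf{P}$ and \textsc{gad{\scriptsize2}} in $\mathsf{coNEXP}$/$\mathsf{coNP}$/$\mathsf{P}$ in the general/medial/agile cases, so \textsc{prec} lies in the closure of $\mathsf{NEXP}\cup\mathsf{coNEXP}$ (resp.\ $\mathsf{NP}\cup\mathsf{coNP}$) under intersection, i.e.\ in $\mathsf{DEXP}$ (resp.\ $\mathsf{DP}$), and in $\mathsf{P}\cap\mathsf{P}=\mathsf{P}$ in the agile case; this argument is insensitive to the arity, so the membership results hold for every $k$.

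For the lower bounds I would reduce from the canonical complete problem of the intersection closure: given two instances $\iota_1,\iota_2$ of $\textsc{ess}_1$, is $\iota_1$ a yes-instance and $\iota_2$ a no-instance? Its $\mathsf{DEXP}$-hardness (resp.\ $\mathsf{DP}$-hardness) follows from the $\mathsf{NEXP}$-hardness (resp.\ $\mathsf{NP}$-hardness) of $\textsc{ess}_1$ in Theorem~\ref{thm:essfinal}. From $(\iota_1,\iota_2)$ and a target arity $k>1$, fix $a_1,a_2\geq 1$ with $a_1+a_2=k$ and build a single $\textsc{prec}_k$ instance $(\kb',\unit',\tau,\tau')$ whose dataset is the domain-disjoint juxtaposition of the SKB produced by the \textsc{sim} reduction (proof of Theorem~\ref{thm:sim}) applied to $\iota_1$ at arity $a_1$ --- occupying the first $a_1$ coordinates --- and the SKB produced by the same reduction applied to $\iota_2$ at arity $a_2$ --- occupying the last $a_2$ coordinates; let $\unit'=\unit_1^{a_1}\times\unit_2^{a_2}$, let $\tau$ be the ``$\circ$''-concatenation of the query tuple of $\iota_1$ with the distinguished alias tuple of $\iota_2$, and let $\tau'$ be the ``$\circ$''-concatenation of the alias tuple of $\iota_1$ with the query tuple of $\iota_2$. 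Using $\mathit{fc}$/$\mathit{tw}$ gadgetry (as in the \textsc{inc} reduction) to pin the relevant $k$-ary tuples to ``well-formed'' ones and to lift everything to arity $k$, the two coordinate blocks stay independent, so $\mathit{ess}$ of a product unit factors blockwise. Then adding $\tau'$ enlarges only block $1$ and, by correctness of the \textsc{sim} reduction, places $\tau$ into $\mathit{ess}(\unit'\cup\{\tau'\},\kb')$ precisely when $\iota_1$ is a yes-instance, while symmetrically adding $\tau$ places $\tau'$ into $\mathit{ess}(\unit'\cup\{\tau\},\kb')$ precisely when $\iota_2$ is a yes-instance (the ``cross-block'' part of each gadget query is always satisfied, since alias tuples always belong to the relevant essential expansion). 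Combining with the characterization above, $\tau\prec_{^{\unit'}}^{_{\kb'}}\tau'$ iff $\iota_1$ is yes and $\iota_2$ is no. The map is $F\mathsf{P}$-computable, keeps $\omega$ bounded, and in the medial case keeps $|\unit'|\leq|\unit_1|\cdot|\unit_2|$ bounded; since exactly two coordinate blocks are needed, the reduction exists for every $k>1$ but not for $k=1$, matching the claimed bound.

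The main obstacle is making the blockwise decomposition of $\mathit{ess}$ for the product SKB rigorous: I must show that definability and characterization of a ``$\times$''-product unit reduce to those of its projections (plausibly by leveraging how the canonical characterization interacts with direct products), that the twin-padding $\mathit{tw}(\cdot,k)$ and the focus atoms $\mathit{fc}(\cdot)$ lift each unary sub-instance to arity $k$ without introducing spurious connections between the blocks, and that the alias tuples sit in exactly the right essential expansions so that one gadget answer is governed by $\iota_1$ and the other by $\iota_2$. Pinning down these invariants --- ideally by black-boxing the already-verified \textsc{sim} (and, where convenient, \textsc{inc}) constructions rather than redoing them --- is where the bulk of the work lies; the upper bound, by contrast, is essentially immediate from the bridging lemma and Proposition~\ref{prop:gad}.
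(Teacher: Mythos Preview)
Your upper bound argument is correct and is exactly what the paper does: the bridging equivalence $\tau\in\mathit{ess}(\unit\cup\{\tau'\},\kb)\iff\mathit{ess}(\unit\cup\{\tau\},\kb)\subseteq\mathit{ess}(\unit\cup\{\tau'\},\kb)$ reduces \textsc{prec} to $\textsc{gad{\scriptsize1}}\cap\neg\textsc{gad{\scriptsize2}}$, and Proposition~\ref{prop:gad} does the rest.

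For the lower bound you take a workable but unnecessarily heavy route. The paper does \emph{not} invoke the \textsc{sim} reduction at all. Instead of manufacturing ``alias'' tuples guaranteed to sit in every essential expansion, it simply uses the first tuple $\tau_{i,1}$ of each unit $\unit_i$ in that role. Concretely: take reductions $\rho_1$ from $L_1\in\mathsf{NEXP}$ (resp.\ $\mathsf{NP}$) to $\textsc{ess}_1$ and $\rho_2$ from $L_2\in\mathsf{coNEXP}$ (resp.\ $\mathsf{coNP}$) to $\neg\textsc{ess}_{k-1}$, make the two datasets predicate- and constant-disjoint (except $\top$), set $\unit=\unit_1\times\unit_2$, and put $\tau'=\tau_1\circ\tau_{2,1}$ and $\tau''=\tau_{1,1}\circ\tau_2$. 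Since $\tau_{i,1}\in\unit_i\subseteq\mathit{ess}(\unit_i\cup S,\kb_i)$ for any $S$, the cross-block components of each gadget are free, and $\textsc{gad{\scriptsize1}}$, $\neg\textsc{gad{\scriptsize2}}$ collapse directly to the two $\textsc{ess}$ questions. This buys you a one-line correctness argument instead of having to drag along (and re-verify) the alias machinery.

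The blockwise factoring you flag as the main obstacle is handled in the paper by a short ``separable formula'' lemma: when the two halves share no predicates except $\top$ and no constants, the canonical characterization of the product unit splits, hence so does $\mathit{ess}$. That is essentially what you sketch.

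One small wording issue: ``adding $\tau'$ enlarges only block~1'' is not quite right---adding $\tau'$ perturbs \emph{both} blocks; what you mean is that the \emph{non-trivial} part of $\textsc{gad{\scriptsize1}}$ lives in block~1, because the block-2 component (the alias tuple) lies in every essential expansion over that block.
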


\vspace{-3mm}

\begin{proof}[Proof sketch]
(UBs)
An input for ${\textsc{prec}}$ is  {\em $\mathsf{accepted}$} if, and only if, it is  {\em $\mathsf{accepted}$}  both for \textsc{gad{\scriptsize1}} and  $\neg$\textsc{gad{\scriptsize2}}.
(LBs) for each $L\in \mathsf{DEXP}$ (resp., $\mathsf{DP}$), \mbox{$L$ $\leq$ \textsc{prec}$_k$} in the general (resp., medial) case, via the mapping \mbox{$w \mapsto (\kb, \unit, \tau', \tau'')$,} 
where
$w$ is any string over the alphabet of $L$,
$L$ $=$ $L_1$ $\cap$ $L_2$,
$L_1$ $\in$ $\mathsf{NEXP}$ (resp., $\mathsf{NP}$),
$L_2$ $\in$ $\mathsf{coNEXP}$ (resp., $\mathsf{coNP}$),
$\rho_1$ is a reduction from $L_1$ to ${\textsc{ess}}_{1}$ in the general (resp., medial) case,
$\rho_2$ is a reduction from $L_2$ to $\neg{\textsc{ess}}_{k-1}$ in the general  (resp., medial) case,
each \mbox{$\rho_i(w)$ $=$ $(\kb_i, \unit_i, \tau_i)$,}
each \mbox{$\kb_i$ $=$ $(\db_i,\varsigma_i)$,}
w.l.o.g. each constant and predicate different from $\top$ in $\db_1$ (resp., $\db_2$) has $\mathsf{a}$- (resp., $\mathsf{b}$-) as a prefix;
each \mbox{$\unit_i$ $=$ $\{\tau_{i,1},..,\tau_{i,m_i}\}$,}
\mbox{$\kb$ $=$ $(\db,\varsigma)$,}
\mbox{$\db$ $=$ $\db_1\cup\db_2$,}
$\varsigma$ is s.t.
$\varsigma(\db,\tau)$ $=$ $\varsigma_1(\db_1,\tau|_{\Dom(\db_1)})$ $\cup$ $\varsigma_2(\db_2,\tau|_{\Dom(\db_2)})$,
\mbox{$\unit$ $=$ $\unit_1\times\unit_2$,}
\mbox{$\tau'$ $=$ $\tau_1\circ\tau_{2,1}$,}
$\tau''$ $=$ $\tau_{1,1}\circ\tau_2$,
each $\tau|_{\Dom(\db_i)}$ is obtained from $\tau$ by removing the terms not in $\Dom(\db_i)$.
\end{proof}

\section{Related Work}\label{sec:related}

\vspace{-0.5mm}
\subsection{Semantic Similarity and Entity Set Expansion} 
\vspace{-1mm}

In {\em semantic similarity}, the aim is to  measure some distance between entities based on their shared semantic connections; note, however, that while entities like $\mathtt{hammer}$ and $\mathtt{nail}$ are highly related, they are not very similar~\cite{DBLP:journals/csur/ChandrasekaranM21}.
Approaches using co-occurrences~\cite{DBLP:conf/ecml/Turney01,DBLP:conf/ijcai/GabrilovichM07,DBLP:conf/nodalida/Kolb09,DBLP:conf/emnlp/MamouPWEGGIK18} or word embeddings~\cite{DBLP:journals/corr/abs-1301-3781,DBLP:conf/ijcai/XunLZGZ17,DBLP:conf/naacl/AsrZJ18,DBLP:conf/naacl/DevlinCLT19} from corpora or tabular data mainly consider words that occur in similar contexts to be similar.
Other approaches take into account the path-lengths between concepts in a KB~\cite{DBLP:conf/aaai/PedersenPM04,DBLP:conf/aaai/MihalceaCS06,DBLP:journals/pvldb/SunHYYW11,DBLP:conf/aaai/Pirro12} or the distance between entity embeddings in a KG~\cite{DBLP:conf/aaai/LinLSLZ15,DBLP:conf/aaai/LiuZ0WJD020}.

In {\em entity set expansion}, the goal is to expand a given set of entities based on their similarity or relatedness \cite{GoogleSets08,DBLP:conf/www/HeX11}.
This topic is also referred to as {\em entity recommendation}~\cite{DBLP:conf/semweb/BlancoCMT13} or {\em entity suggestion}~\cite{DBLP:conf/ijcai/ZhangXHWWW17}.
Traditional approaches use machine learning or probabilistic models that infer similarities based on the co-occurrence of entities within corpora~\cite{DBLP:journals/tois/BalogBR11,DBLP:conf/nips/GhahramaniH05,DBLP:conf/ijcai/HuangZSWL18,DBLP:conf/emnlp/MamouPWEGGIK18,DBLP:journals/fcsc/ShiDCHWL21,DBLP:conf/icdm/WangC08}. 
Recent approaches based on structured resources primarily use probabilistic models~\cite{DBLP:journals/ws/ChenCZDWW18,DBLP:journals/tkde/JayaramKLYE15,DBLP:conf/www/LissandriniMPV20,DBLP:conf/semweb/Passant10,DBLP:journals/pvldb/SunHYYW11,DBLP:conf/sigir/ZhangCCDWW17} or graph embeddings~\cite{DBLP:conf/aaai/LinLSLZ15} to infer the presence of common properties.

\vspace{-1.5mm}
\subsection{Additional Related Topics and Tasks}
\vspace{-1mm}

We start with {\em formal concept analysis} in applied mathematics~\cite{DBLP:conf/icfca/Wille09}.
Here, the objective is to study properties of 
a (complete) lattice of formal concepts
which essentially coincides to our $\mathit{ex}(\unit,\kb)$ ---together with the characterizations of their elements--- when $\unit$ is unary and $\kb$ consists of both a dataset having only unary predicates and the ``identity'' summary selector returning always the entire dataset.
In constraint programming, the general problem of 
{\em expressibility} ~\cite{DBLP:conf/cp/Willard10} 
asks whether a set $J$ of tuples (analogous to our units but not necessarily proper) can be expressed by a formula $\varphi$ in a certain language (primitive positive formulas or beyond) w.r.t. some relational structure (the analogous of our selective KBs with the identity selector); hence, the question is roughly whether $J$ is $\kb$-definable.
Analogous topics are {\em definability} in symbolic logic~\cite{DBLP:journals/jsyml/BodirskyPT13} and in graph databases~\cite{DBLP:conf/icdt/AntonopoulosNS13}, as well as
{\em reverse engineering} in relational databases~\cite{DBLP:journals/vldb/TranCP14}.
Moreover, {\em query by examples} in relational databases~\cite{DBLP:conf/icdt/Barcelo017}
considers the following task under conjunctive formulas: given a dataset $D$, a set $J$ of tuples (analogous to our units but not necessarily proper), and another set $J^-$, is there a \mbox{$D$-definable} set $J' \supseteq J$  s.t. $J'\cap J^- = \emptyset$?
Likewise, {\em reverse engineering}~\cite{DBLP:conf/ijcai/Gutierrez-Basulto18},
{\em concept induction}~\cite{DBLP:series/ssw/LehmannV14} and
{\em concept learning}~\cite{DBLP:conf/ijcai/FunkJLPW19} in description logics use DL-concepts as explanation language and consider unary tuples.
Finally, the topic of {\em least general generalizations} in 
description logics~\cite{DBLP:conf/aaai/JungLW20} consider tasks (such as {\em least common subsumer} and {\em most
specific concept}) that could be combined to get something analogous to our characterizations. However, in that case, 
tuples are only unary, summaries are not taken into account, and characterizations may not exist (i.e., they may be of infinite size).

\section{Discussion and Conclusion}\label{sec:conclusion}

The proposed framework tailors and combines existing tools to provide a formal semantics to nexus of similarity.
Definitions 1--4 fix the key principles of the framework, while SKBs accommodate  specific needs in different scenarios.
Regarding nearly connected formulas, there are at least three good reasons for this choice.
%
First, they specialize primitive positive first-order formulas, which underlie SQL/SPARQL, while generalizing $\mathcal{ELI}$ concepts that underlie OWL 2 EL, both of which are well-established formalisms.
%
Second, they avoid: disconnected components and disjunction
as they tend to go beyond semantic connections; forcing one connected component or acyclicity as characterizations may not exist; negation and universal quantification as they inherently look beyond summaries; and forcing built-in equality since, for example, it makes definable single entities even if other entities share the same properties.
Third, they allow: constants as they are informative; existential quantification (governing non-free variables) to capture connections not expressible via constants; 
conjunction and joins as they natively allow to express connections;  and multiple free variables to go beyond unary concepts.

We close with some future work directions.
On the theoretical side, it would be valuable to investigate how to incorporate controlled forms of disjunction or arithmetic operations in our explanation language. 
From a practical perspective, the plan is to boost our current prototype to build a web service that implements the proposed framework on top of widely-used KGs such as BabelNet.

\bibliographystyle{IEEEtranSN}
\bibliography{kr-sample}


\newpage

\appendix

\appendix
\onecolumn
\section{Introduction} This appendix is divided into three sections, the first concerns propositions already present in the main paper, the second will give some fundamental results useful for the formal proof of the theorems encountered while the third concerns the formalization of the proofs of the theorems that are mentioned in the main paper for which only a general idea was given.

\section{Proof of Propositions}
In this first part as previously mentioned we will focus on the propositions so that the reader can become more and more familiar with the results.
Where necessary, we will add ad hoc definitions from time to time.
We begin the following section with an immediate result.\\ \\
{\bf Proposition 1.} {\em $\WCQ$ is closed under conjunction.}

\begin{proof}
First of all notice that the definition of connectedness for a structure is equivalent to the following one:%
Any structure of cardinality at most one is  connected and given two connected structures $S$ and $S'$, if $\Dom_{S} \cap \Dom_{S'} \neq \emptyset$, then $S \cup S'$ is  connected too.
 Consider now two $n$-ary open formulas $\varphi_1, \varphi_2\in\WCQ$ and let $\varphi_3$ be their conjunction. By definition $\varphi_3$ is the formula $z_1,...,z_n \leftarrow  \mathit{conj}(\varphi_1'), \mathit{conj}(\varphi_2')$, where $\varphi_{i}'$ is the formula obtained from $\varphi_{i}$ by replacing the $j$-th free variable with $z_j$ and each non-free variable $y$ with $y_{\varphi}$. We want to show $\varphi_3\in\WCQ$. We will show that $\mathit{atm}(\varphi_3) \cup \{\mathsf{free}(z_1,...,z_n)\}$ is connected. Let $S_1=\mathit{atm}(\varphi_1') \cup \{\mathsf{free}(z_1,...,z_n)\}$ and $S_2=\mathit{atm}(\varphi_2') \cup \{\mathsf{free}(z_1,...,z_n)\}$. The result now follows since $\mathit{atm}(\varphi_3) \cup \{\mathsf{free}(z_1,...,z_n)\}=S_1\cup S_2$, both $S_1$ and $S_2$ are connected by hypotheses and $\{z_1,...,z_n\}\subseteq\Dom_{S_1}\cap \Dom_{S_2}$.
\end{proof}
The first thing that can jump to mind immediately after the definition of how the evaluation of a formula is carried out is what is the relationship between evaluating this formula with respect to the totality of knowledge at our disposal or focusing only on the summaries, this naturally leads to the following proposition. \\ \\
{\bf Proposition 2.} {\em
Consider some \marco{selective KB} $\kb =(\db, \varsigma)$ and some formula $\varphi$. It holds that $\mathit{inst}(\varphi,\kb) \subseteq \varphi(D)$.}

\begin{proof}
Consider an $n$-ary formula
$\varphi$ as in Equation~\ref{eq:formula}. 
Let \mbox{$\tau=\langle t_1,...,t_n\rangle$}  be
an arbitrary tuple that belongs to $\mathit{inst}(\varphi,\kb)$.
By definition of $\varphi$-instances, there is a \mbox{$\C$-homomorphism} $h$  from $\mathit{atm}(\varphi)$ to $\varsigma(\tau)$ such that $h(x_1),...,h(x_n) = t_1,...,t_n$. 
By definition of summary selector, $\varsigma(\tau)\subseteq\db$.
Thus, $h$  is also a \mbox{$\C$-homomorphism} from $\mathit{atm}(\varphi)$ to $\db$.
Hence,  $\tau$ belongs to $\varphi(\db)$ as well.  
\end{proof}
The following proposition shows how our choice regarding the evaluation of the formulas leads to the natural interpretation given by the fact that the answers to the conjunction of two other formulas are none other than the intersection of the single answers of the two. \\ \\
{\bf Proposition 3.} {\em
Consider a
\marco{selective KB} $\kb =(\db, \varsigma)$ and two formulas $\varphi_1$ and $\varphi_2$ of the same arity.
It holds that $\mathit{inst}(\varphi_1 \wedge \varphi_2,\kb) = \mathit{inst}(\varphi_1,\kb) \cap \mathit{inst}(\varphi_2,\kb)$.}
\begin{proof}
First of all remember that the conjunction of two $n$-ary open formulas $\varphi_1$ and $\varphi_2$, denoted by $\varphi_1\wedge\varphi_2$, is the formula $z_1,...,z_n \leftarrow  \mathit{conj}(\varphi_1'), \mathit{conj}(\varphi_2')$, where $\varphi_{i}'$ is the formula obtained from $\varphi_{i}$ by replacing the $j$-th free variable with $z_j$ and each non-free variable $y$ with $y_{\varphi}$.
    On the one hand, consider an $n$-ary tuple $\tau$ of the form $\langle t_1,...,t_n \rangle$ and say that $\tau\in\mathit{inst}(\varphi_1 \wedge \varphi_2,\kb)$. By definition this means that it exists a $\C$-homomorphism, call it $h$, from $\mathit{atm}(\varphi_3)$ to $\varsigma(\tau)$ with $h(z_i)=t_i$. From $h$ we can construct two distinct $\C$-homomorphism, $h_1$ from $\mathit{atm}(\varphi_1')$ to $\varsigma(\tau)$ and $h_2$ from $\mathit{atm}(\varphi_2')$ to $\varsigma(\tau)$. Moreover $h_1(z_i)=t_i$ and $h_2(z_i)=t_i$. In particular $h_1$ and $h_2$ are nothing but the restrictions of $h$ to $\Dom_{\mathit{atm}(\varphi_1')}$ and $\Dom_{\mathit{atm}(\varphi_2')}$ respectively. We have just shown $\tau\in\mathit{inst}(\varphi_1',\kb) \cap \mathit{inst}(\varphi_2',\kb)$. Since $\varphi_i'\longleftrightarrow\varphi_i$, exploiting Proposition 4 we know that $\tau\in\mathit{inst}(\varphi_1,\kb) \cap \mathit{inst}(\varphi_2,\kb)$.  On the other hand say that $\tau\in\mathit{inst}(\varphi_1,\kb) \cap \mathit{inst}(\varphi_2,\kb)$. Exploiting Proposition 4 we get that $\tau\in\mathit{inst}(\varphi_1',\kb) \cap \mathit{inst}(\varphi_2',\kb)$, this means that two distinct $\C$-homomorphism, $h_1$ from $\mathit{atm}(\varphi_1')$ to $\varsigma(\tau)$ and $h_2$ from $\mathit{atm}(\varphi_2')$ to $\varsigma(\tau)$ with $h_1(z_i)=t_i$ and $h_2(z_i)=t_i$ do exist. From them we can construct a $\C$-homomorphism $h$ from $\mathit{atm}(\varphi_3)$ to $\varsigma(\tau)$ with $h(z_i)=t_i$. To this aim it suffice to consider $h = \{ t \mapsto h_1(t) ~:~t \in \Dom_{\mathit{atm}(\varphi_1')}\}$ $\cup$ $\{ t \mapsto h_2(t) ~:~t \in \Dom_{\mathit{atm}(\varphi_2')}\}$.
\end{proof}
The following proposition dusts off a classic result about the evaluation of conjunctive queries which is also true under the scope of our framework. \\ \\
{\bf Proposition 4.} {\em
\mbox{If $\varphi_1 \longleftrightarrow \varphi_2$, then $\varphi_1(D) = \varphi_2(D)$ for each  $D$, and $\mathit{inst}(\varphi_1,\kb) = \mathit{inst}(\varphi_2,\kb)$
for each \marco{SKB} $\kb$}.}

\begin{proof}
Assume, without loss of generality, that both $\varphi_1$ and $\varphi_2$ 
have $x_1,...,x_n$ as free variables.
First we prove $\varphi_1(D) = \varphi_2(D)$ holds for any possible dataset $D$. 
Since $\varphi_2 \longrightarrow \varphi_1$ holds by hypothesis, we have a $\C$-homomorphism $h$ from $\mathit{atm}(\varphi_2)$ to $\mathit{atm}(\varphi_1)$.
Let \mbox{$\tau = \langle t_1,...,t_n\rangle$} be a tuple in $\varphi_1(D)$.
By definition of output of a formula over a dataset, there is  a $\C$-homomorphism $h'$ from $\mathit{atm}(\varphi_1)$ to $D$ such that $h'(x_1),...,h'(x_n) = t_1,...,t_n$. 
Clearly, \mbox{$\tilde{h}=h'\circ h$}
is a $\C$-homomorphism from $\mathit{atm}(\varphi_2)$ to $D$ such that $\tilde{h}(x_1),...,\tilde{h}(x_n) = t_1,...,t_n$.
Hence, $\tau$ also belongs to $\varphi_2(D)$. 
In a completely symmetrical way, $\tau\in \varphi_2(D)$ implies that $\tau\in \varphi_1(D)$.
Therefore, the first claim holds.
Conversely, to show that $\mathit{inst}(\varphi_1,\kb) = \mathit{inst}(\varphi_2,\kb)$ holds, it suffices to
consider an arbitrary tuple \mbox{$\tau = \langle t_1,...,t_n\rangle$} and 
observe that $\varsigma(\db,\tau)$ is a dataset.
Indeed, by combining this with the first claim, we easily derive that $\varphi_1(\varsigma(\db,\tau))=\varphi_2(\varsigma(\db,\tau))$ holds. 
\end{proof}
The next result shows that the existence of an interpretation is always guaranteed under our hypotheses. \\ \\
{\bf Proposition 5.} {\em
The given $\kb$-unit $\unit$ is always interpreted by some explanation.}

\begin{proof}
Let $\varphi$ be the formula $x_1,...,x_n \leftarrow \top(x_1),...,\top(x_n)$.
We claim that $\varphi$ interprets $\unit$. 
First, to get $\varphi \in \WCQ$ one can observe that:
$(a)$ by definition of unit, $n$ is greater than zero, implying that $\varphi$ is an open formula; and 
$(b)$ by definition of nearly connected open formulas, $\{\top(x_1),...,\top(x_n)\} \cup \{\mathsf{free}(x_1,...,x_n)\}$ is connected.
Second, to get $\mathit{inst}(\varphi,\kb) \supseteq \unit$ one can observe that:
$(a)$ by definition of summary selector, $\Dom_{\{\tau\}}\subseteq\Dom_{\varsigma(\db,\tau)}$ for each $\tau \in \unit$; and
$(b)$ by definition of dataset, $\top(\tau[i])\in\varsigma(\db,\tau)$ for each $i\in[n]$. 
\end{proof}
The following result shows an important property of characterizations, in fact it is impossible to find two characterizations for the same unit that are not equivalent to each other. \\ \\
{\bf Proposition 6.} {\em
If two explanations characterize $\unit$, then they are equivalent.}

\begin{proof}
Consider two explanations $\varphi'$ and $\varphi''$ characterizing $\unit$.
By contradiction, suppose \mbox{$\varphi' \longleftrightarrow \varphi''$} does not hold. 
Consider the formula $\varphi=\varphi'\wedge\varphi''$. 
By proposition~\ref{prop:instConj},
$\varphi$ interprets $\unit$. 
Moreover \mbox{$\varphi'' \longrightarrow \varphi$} and \mbox{$\varphi' \longrightarrow \varphi$} both hold. Since $\varphi'$  characterize $\unit$, 
\mbox{$\varphi \longrightarrow \varphi'$ holds} and, by $\C$-homomorphism composition, \mbox{$\varphi'' \longrightarrow \varphi'$}. On the other hand, since $\varphi''$ also characterizes $\unit$, \mbox{$\varphi' \longrightarrow \varphi''$} holds, then the two are equivalent but this is absurd. 
\end{proof}
A result as immediate as it is useful is the following, in fact it is indirectly used not only in the proposed algorithms but is the basis of the demonstration of various propositions throughout the article. \\ \\
{\bf Proposition 7.} {\em
The direct product of datasets is a dataset.}
\begin{proof}
    Let $D_1, ..., D_k$ be datasets. By definition their direct product, denoted by $D_1 \otimes...\otimes D_k$, is the structure 
$$P=
\left\{p(\langle{\bf c}_1\rangle \otimes ... \otimes \langle{\bf c}_k\rangle)~:~p({\bf c}_1) \in 
D_1 , ... ,  p({\bf c}_k)\in D_k\right\}.$$
By costruction $\Dom_{P}\subseteq \{d_{c^1,..,c^k}~:~c^1\in\Dom_{D_1},...,c^k\in\Dom_{D_k}\}$.
Since $D_1, ..., D_k$ are datasets, we know that $\top(c^i)\in D_i\forall c^i\in\Dom_{D_i}$, hence $\top(d_{c^1,..,c^k})\in P$
  $\forall c^i\in\Dom_{D_i}, i\in[k]$, that is what we wanted to show.
\end{proof}
The following proposition provides us with an incomparable tool for a series of analyses, in fact it guarantees not only that a characterization always exists but indirectly suggests how to construct such a characterization, providing us with important notions on how to be able, for example, to limit the size of such objects, this aspect will be dealt with in more detail below. \\ \\
{\bf Proposition 8.} {\em It holds that $\mathit{can}(\unit,\kb)$ characterizes $\unit$.}
\begin{proof}
All the symbols to which we will refer in the rest of the proof are detailed in Section~\ref{sec:Explanations} of the main paper.
  Let $\varphi$ $=$ $\mathit{can}(\unit,\kb)$.
The first step we need to take is to show that $\varphi$ interprets $\unit$. In order to do so, since \mbox{$\varphi\longrightarrow \Phi$,} it suffices to show that $\mathit{inst}(\Phi,\kb) \supseteq \unit$. By construction, $\mu^{-1}$ is a \mbox{homomorphism} from $\Phi$ to $P \cup C$, in particular the reader may wonder what guarantees this and in the first place if we are really able to reverse this mapping given the definition of $\mu$, locally this is the case since $\mu$ acts as a bijection between $\Phi$ and $P \cup C$, moreover $\mu^{-1}$ does act as a \mbox{homomorphism} since by costruction it preserves every atom. It can be noticed that each $h_i$ $=$ $\{a \mapsto a~:~a \in  \Dom_D \}$ $\cup$ $\{d_{\bf s} \mapsto {\bf s}[i]~:~d_{\bf s} \in \Dom_P\}$ is a homomorphism from $P \cup C$ to $\varsigma(\tau_i)$, each of these homomorphisms in fact is nothing more than the projection on the $i$-th component. Let $A=\Dom_C \setminus \Dom_P$. Since $\mu^{-1}(c)$ is either $c$  when $c\in A$ or of the form $d_{c,...,c}$ otherwise, and since $h_i(c)=c$ and $h_i(d_{c,...,c})=c$, it holds that each $\lambda_i = h_i \circ \mu^{-1}$ is a \mbox{$\C$-homomorphism} from $\mathit{atm}(\Phi)$ to $\varsigma(\tau_i)$. In particular by construction we have $\langle \lambda_i(x_{{\bf s}_1}), ..., \lambda_i(x_{{\bf s}_n})\rangle = \tau_i$.

Second, we need to show that $\varphi' \longrightarrow \varphi$ whenever $\varphi'$ interprets $\unit$. First notice that if $\varphi' \longrightarrow \varphi''$, with $\varphi'\in \WCQ$, we get that $\varphi' \longrightarrow \Tilde{\varphi}$ where $\Tilde{\varphi}$ is the nearly connected part of $\varphi''$, this is true as homomorphisms preserve the connection property of structures in general. Since $\varphi$ is the nearly connected part of $\Phi$ we will just show that $\varphi' \longrightarrow \Phi$ whenever $\varphi'$ interprets $\unit$, this together with the fact that homomorphisms do preserve connectedness will give us the result we want since if $\varphi'$ interprets $\unit$ then in particular $\varphi'\in\WCQ$. So consider now an arbitrary formula $\varphi'$ that interprets $\unit$ and let $z_1,...,z_n$ be its free variables. 
Since, by definition, $\mathit{inst}(\varphi',\kb) \supseteq \unit$ holds, for each $i \in \{1,...,m\}$, we can consider a homomorphism $h_i$ that maps $\mathit{atm}(\varphi')$ to $\varsigma(\tau_i)$ such that $\langle h_i(z_1),...,h_i(z_n) \rangle = \tau_i$.
Let us now also consider the following mapping $\xi = \{ t \mapsto d_{h_1(t),\ldots, h_m(t)}~:~t \in \Dom(\varphi')\setminus A\}$ $\cup$ $\{ t \mapsto t ~:~t \in \Dom(\varphi')\cap A\}$, since it preserves every atom it is a homomorphism from $\mathit{atm}(\varphi')$ to $P \cup C$. Considering now $h= \mu \circ \xi$, we get a $\C$-homomorphism from $\mathit{atm}(\varphi')$ to $\mathit{atm}(\Phi)$ such that each $h(z_i)$ equals to $x_{{\bf s}_i}$, in order to see that it is in fact the case that $h$ is a $\C$-homomorphism it suffices to see how constants do behave, in particular we will first map $c\in\C$ to $d_{c,\ldots,c}$ and then from $d_{c,\ldots,c}$ to $c$ again. Now our proof is completed.

\end{proof}
Next we find another property of characterizations that turns out to be very useful when it comes to definability. \\ \\
{\bf Proposition 9.} {\em
The given unit $\unit$ is $\kb$-definable if, and only if, any explanation that characterizes $\unit$ also defines it.}

\begin{proof}
For the ``only if'' part, having any explanation $\varphi$ that both characterizes and defines $\unit$ immediately implies that $\unit$ is $\kb$-definable due to $\varphi$.
Consider now the ``if'' part.
By Definition~\ref{def:def}, 
since $\unit$ is $\kb$-definable, there is a formula $\varphi'$ (possibly non characterizing $\unit$) such that $\mathit{inst}(\varphi',\kb)=\unit$.
By Theorem~\ref{thm:characterized},
there is some explanation $\varphi$ that characterizes $\unit$. 
By Corollary~\ref{cor:char}, since $\varphi'$ interprets $\unit$, we have that  $\varphi'\longrightarrow \varphi$.
Hence, $\mathit{inst}(\varphi,\kb)\subseteq \mathit{inst}(\varphi',\kb)$ holds. 
Thus, $\unit\subseteq \mathit{inst}(\varphi,\kb)\subseteq \mathit{inst}(\varphi',\kb)=\unit$. 
Since $\mathit{inst}(\varphi,\kb) =\unit$ does not depend on the choice of $\varphi$, it holds for
any explanation that characterizes $\unit$.
\end{proof}
We have already introduced some basic properties on definability but the time has come to ask ourselves if we can say, for example, how many and if there are always definable units, the following result partially answers this question. \\ \\
{\bf Proposition 10.} {\em
$\mathit{ex}(\unit,\kb)$ is finite and never empty.}
\begin{proof}
By following the same argument used in the proof of Proposition~\ref{prop:interpreted}, 
we get that $\mathbf{T}$ is $\kb$-definable: the explanation $\varphi = x_{1},...,x_{n}\leftarrow \top(x_{1})\wedge...\wedge \top(x_{n})$ always defines it.
Hence, $\mathit{ex}(\unit,\kb)$ is never empty.
Also, since the cardinality of the power set of $\mathbf{T}$ is an upper bound on $|\mathit{ex}(\unit,\kb)|$, the latter  is  finite.
\end{proof}
The following proposition provides us with a very useful tool, in practice we know that there is always and only one smallest definable unit which is an expansion of the starting unit. \\ \\
{\bf Proposition 11.} {\em
    It holds that $\mathit{ess}(\unit,\kb) \in \mathit{ex}(\unit,\kb)$.}

\begin{proof}
Consider $\unit'$ and $\unit''$ both in $\mathit{ex}(\unit,\kb)$.
It suffices to show that $\unit'\cap \unit''$ also belongs to $\mathit{ex}(\unit,\kb)$.
Indeed, if this is the case,
by Proposition~\ref{prop:dex}, we can apply the same argument finitely many times and get $\mathit{ess}(\unit,\kb) \in \mathit{ex}(\unit,\kb)$.
To show that $\unit'\cap \unit''$ belongs to $\mathit{ex}(\unit,\kb)$ we need to show that 
there exists a formula $\varphi$ defining it, namely $\mathit{inst}(\varphi,\kb) = \unit'\cap \unit''$.
To this aim, consider two arbitrary explanations 
$\varphi'$ and $\varphi''$  that $\kb$-define $\unit'$ and $\unit''$, respectively.
Let $\varphi = \varphi' \wedge \varphi''$.
The claim holds since, by Proposition~\ref{prop:instConj}, we know that 
\mbox{$\mathit{inst}(\varphi,\kb) = \mathit{inst}(\varphi',\kb) \cap \mathit{inst}(\varphi'',\kb) = \unit'\cap \unit''$.}
\end{proof} 
The following result satisfies a need which is always met whenever an operation on an equivalence class is defined starting from any of its representatives. \\ \\
{\bf Proposition 12.} {\em
Function $\delta$ in $\mathit{eg}(\unit,\kb)$ is well-defined.}

\begin{proof}
    By definition $\delta$ maps every $[\varphi] \in N$ to the set $\mathit{inst}(\varphi,\kb) \setminus \{\tau \in \mathit{inst}(\varphi',\kb):([\varphi'],[\varphi]) \in A\}$, what we need to show is that this result does not depend on the chosen representative. To this aim consider two distinct formulas $\varphi_1$ and $\varphi_2$ both in $[\varphi]$ and two distinct formulas $\varphi_3$ and $\varphi_4$ both in $[\varphi']$. By definition we get $\varphi_1\simeq\varphi_1$ and $\varphi_3\simeq\varphi_4$, hence by Proposition~\ref{prop:equinst} $\mathit{inst}(\varphi_1,\kb)=\mathit{inst}(\varphi_2,\kb)$ and $\mathit{inst}(\varphi_3,\kb)=\mathit{inst}(\varphi_3,\kb)$. This shows that the set $\mathit{inst}(\varphi,\kb) \setminus \{\tau \in \mathit{inst}(\varphi',\kb):([\varphi'],[\varphi]) \in A\}$ is well defined and consequently $\delta$ well-defined.
\end{proof}
We conclude this section with a result that will be extremely useful for showing the upper bounds of the various theorems concerning the exact complexity classes of the decision problems we have defined. \\ \\
{\bf Proposition 13.} {\em Both \textsc{gad{\scriptsize1}} and \textsc{gad{\scriptsize2}} belong to
$\mathsf{NEXP}$, $\mathsf{NP}$ and $\mathsf{P}$ in the general, medial and agile case, respectively.}
\begin{proof}
    Let $\iota = (\kb,\unit,\tau,\tau')$ be an input for \textsc{gad{\scriptsize1}}. To show \textsc{gad{\scriptsize1}} {\scriptsize$ \leq$} $\textsc{ess}$ we can use the reduction $\iota \mapsto (\kb, \unit\cup\{\tau'\} ,\tau)$, please note that this reduction preserves the hypothesys of the problem, in other word the general case reduces to the general, medial and agile case reduce to general, medial and agile case respectively. The if and only if holds by definition. This, together with the upper bounds for $\textsc{ess}$, in the general, medial and agile case respectively, shown in Theorem~\ref{thm:essfinal} ends the first part of what we wanted to show. Let now $\iota = (\kb,\unit,\tau,\tau')$ be an input for \textsc{gad{\scriptsize2}}. To show \textsc{gad{\scriptsize2}} {\scriptsize$ \leq$} $\textsc{ess}$ we can simply use the reduction $\iota \mapsto (\kb, \unit\cup\{\tau\} ,\tau')$ and then use a similar argument as before.
    
\end{proof}
In the next section we will give some preliminary results useful for the proofs of theorems and in addition we will recover some well-known notions.\\ \\
\section{Gadgets and Lemmas}

\noindent The main role of the following section is to provide small preliminary results useful in subsequent demonstrations but also to provide where possible some useful insights for the reader so that some new definitions sound natural. Let's start by recapping some useful basics below.
\begin{definition}
		An (\emph{undirected}) graph $\mathcal{G}$ is a ordered pair $( V,E )$ where $V= \{ v_1,\ldots,v_n \}$, for some fixed $n\in\mathbb{N}$, is the set of its vertices while $E\subseteq V\times V$ is the set of its (undirected) edges, that is $(i,j)\in E \implies (j,i)\in E$.
	\end{definition}
 We recall below when two vertices are said to be adjacent.
	\begin{definition}
		Given a graph $\mathcal{G}=(V,E)$ two distinct vertices, $i,j\in V$, are said to be \emph{adjacents} if there exist $e\in E$ such that $e=(i,j) \vee e=(j,i) $.
	\end{definition}
 We absolutely cannot do without the definition of k-colorability but before providing it it is necessary to remember what a k-coloring is.
	\begin{definition}
		A \emph{$k$-coloring} (or \emph{$k$-vertex coloring}), for some fixed $k\in \mathbb{N}$, of a graph $\mathcal{G}=(V,E)$ is a mapping $h:V\mapsto [k] $ such that no two adjacents vertices are mapped to the same element.
	\end{definition}
 Let us also briefly recall when a graph is said to be complete.
 \begin{definition}
     A graph $\mathcal{G}=(V,E)$ is \emph{complete} if , however taken two distinct vertices $i,j\in V$, they are adjacent.
 \end{definition}
 We are now ready to give a definition of k-colorability.
	\begin{definition}
		Given a graph $\mathcal{G}$ the \emph{k-colorability} problem asks if it is possible to find a k-coloring for $\mathcal{G}$.
	\end{definition}   
 The following definition is not typical, the tool we are presenting will prove to be extremely useful in the following.
	\begin{definition}
		Given a graph $\mathcal{G}=(V,E)$ the graph $\mathcal{G}^{a}=(V^{a},E^{a})$, where $V^{a}=V\cup\{ a\}$ (with "$a$" fresh vertex) and $E^{a}=E\cup\{(a,v)~:~v\in V \}\cup\{(v,a)~:~v\in V \}$, is called \emph{complement to a} of $\mathcal{G}$.
	\end{definition}
	We now recall the following important result from the literature.
	\begin{remark}
		It is well known \cite{Kar72}  that the problem of \emph{k-Col} for $k\geq 3$ is NP-complete for undirected graphs.  
	\end{remark}
	
	It is also well known that a graph $\mathcal{G}=(V, E)$ is n-colourable if and only if it exists a homomorphism from $\mathcal{G}$ to $K_n$, where $K_n$ is the complete graph of order n.
\begin{lemma}\label{lem:color}
		A graph $\mathcal{G}=(V,E)$ is $k$-colourable if and only if the complement to a of  $\mathcal{G}$ is $(k+1)$-colourable.
	\end{lemma}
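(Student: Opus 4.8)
The plan is to prove the two implications of the equivalence separately, in each case obtaining the desired colouring from the given one by a trivial modification. Throughout, write $\mathcal{G}^a=(V^a,E^a)$ for the complement to $a$ of $\mathcal{G}$, and recall that by construction the fresh vertex $a$ is adjacent in $\mathcal{G}^a$ to every vertex of $V$.

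First I would handle the ``only if'' direction. Assume $\mathcal{G}$ admits a $k$-colouring $h:V\to[k]$. Define $h':V^a\to[k+1]$ by $h'(v)=h(v)$ for $v\in V$ and $h'(a)=k+1$. It then suffices to check that $h'$ is a proper colouring of $\mathcal{G}^a$: for an edge of $\mathcal{G}^a$ with both endpoints in $V$ this is an edge of $\mathcal{G}$, so $h'(u)=h(u)\neq h(v)=h'(v)$; for an edge incident to $a$ the other endpoint $v\in V$ satisfies $h'(v)=h(v)\in[k]$ while $h'(a)=k+1\notin[k]$, so again the two colours differ. Hence $\mathcal{G}^a$ is $(k+1)$-colourable.

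For the ``if'' direction, assume $h':V^a\to[k+1]$ is a $(k+1)$-colouring of $\mathcal{G}^a$, and set $c=h'(a)$. Since $a$ is adjacent to every $v\in V$, we get $h'(v)\neq c$ for all $v\in V$, so $h'$ restricted to $V$ takes values in the $k$-element set $[k+1]\setminus\{c\}$. Fixing any bijection $\sigma:[k+1]\setminus\{c\}\to[k]$ and letting $h=\sigma\circ(h'|_V)$, I obtain a map $h:V\to[k]$; and for any edge $(u,v)$ of $\mathcal{G}$ --- which is also an edge of $\mathcal{G}^a$ --- injectivity of $\sigma$ together with $h'(u)\neq h'(v)$ yields $h(u)\neq h(v)$. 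Thus $\mathcal{G}$ is $k$-colourable, completing the equivalence.

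There is no genuinely hard step here; the only point requiring a little care is the bookkeeping in the ``if'' direction, namely observing that the colour $c$ used on $a$ is forbidden on all of $V$ --- precisely because $\mathcal{G}^a$ makes $a$ adjacent to everything --- so that the remaining $k$ colours suffice for $\mathcal{G}$ after the harmless relabelling $\sigma$.
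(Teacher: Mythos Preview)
Your proof is correct and follows essentially the same approach as the paper's: extend a $k$-colouring of $\mathcal{G}$ by assigning $a$ the fresh colour $k+1$, and conversely restrict a $(k+1)$-colouring of $\mathcal{G}^a$ to $V$ after observing that the colour of $a$ is forbidden elsewhere. The only cosmetic difference is that the paper writes ``without loss of generality $\bar h(a)=k+1$'' where you make the relabelling explicit via the bijection $\sigma$.
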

	\begin{proof}[Proof of Lemma~\ref{lem:color}]
		$\Longrightarrow$ Say that $\mathcal{G}$ is $k$-colourable, this means that it exists $h:V\mapsto [k]$ such that no two adjacents vertices are mapped to the same element. From $h$ we can construct $\bar h:V^{a}\mapsto [k+1]$ in the following fashion:
		$$
		\bar h(v)=
		\begin{cases}
			h(v), \ \ \ \forall v\in V\\
			k+1, \ \ \mbox{otherwise}
		\end{cases}
		$$
		$\Longleftarrow$ Say that $\mathcal{G}^{a}$ is (k+1)-colourable, this means that it exists $\bar h:V^{a}\mapsto [k+1] $ such that no two adjacents vertices are mapped to the same element. Without loss of generality we may say that $\bar h(a)=k+1$. From $\bar h$ we can construct $ h:V\mapsto [k]$ simply by putting
		$$
		h(v)=
		\bar h(v), \forall v\in V
		$$ in fact, since a is adjacent to any other vertex in $V^{a}$ by construction if $\bar h$ is a proper (k+1)-colouring then no other element of $V^{a}$ is mapped to k+1, but this means that every element of $V$ is mapped to $[k]$ by $\bar h$ and thus $h$ is a proper k-colouring of $\mathcal{G}$. 
	\end{proof}
 The following is an almost immediate result but fundamental to the rest of our proofs.
\begin{lemma}\label{lem:homo-composition}
		Consider a $\mathit{T}$-homomorphism $h_1$ from $I_1$ to $I_2$ and 
		a $\mathit{T}$-homomorphism $h_2$ from $I_2$ to $I_3$.
		The composition $h_2\circ h_1$ is a $\mathit{T}$-homomorphism from $I_1$ to $I_3$.    
	\end{lemma}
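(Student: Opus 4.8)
The plan is to verify directly that $h_2 \circ h_1$ satisfies the three requirements in the definition of a $\mathit{T}$-homomorphism: it is a total map with the correct domain and codomain, it fixes the terms of $\mathit{T}$, and it preserves atoms. First I would dispatch the typing requirement: since $h_1$ is a total map from $\Dom_{I_1}$ to $\Dom_{I_2}$ and $h_2$ is a total map from $\Dom_{I_2}$ to $\Dom_{I_3}$, their composition $h_2 \circ h_1$ is a well-defined total map from $\Dom_{I_1}$ to $\Dom_{I_3}$.

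Next I would check the fixed-point condition on $\mathit{T}$. Let $t \in \mathit{T} \cap \Dom_{I_1}$. Because $h_1$ is a $\mathit{T}$-homomorphism, $h_1(t) = t$; in particular $t = h_1(t) \in \Dom_{I_2}$, so $t \in \mathit{T} \cap \Dom_{I_2}$. Applying the same property to $h_2$ yields $h_2(t) = t$, and hence $(h_2 \circ h_1)(t) = h_2(h_1(t)) = h_2(t) = t$.

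Finally I would check atom preservation. Take any atom $p(t_1, \ldots, t_n) \in I_1$. Since $h_1$ is a $\mathit{T}$-homomorphism, $p(h_1(t_1), \ldots, h_1(t_n)) \in I_2$; since $h_2$ is a $\mathit{T}$-homomorphism, $p(h_2(h_1(t_1)), \ldots, h_2(h_1(t_n))) \in I_3$, that is, $p\bigl((h_2 \circ h_1)(t_1), \ldots, (h_2 \circ h_1)(t_n)\bigr) \in I_3$. As all three conditions hold, $h_2 \circ h_1$ is a $\mathit{T}$-homomorphism from $I_1$ to $I_3$.

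I do not anticipate any real obstacle: the argument is a mechanical unfolding of the definition. The only point deserving a moment's attention is noting that $t \in \mathit{T} \cap \Dom_{I_1}$ already forces $t \in \Dom_{I_2}$ through the equality $h_1(t) = t$, so that the fixed-point property of $h_2$ is legitimately applicable to $t$.
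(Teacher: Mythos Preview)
Your proof is correct and follows essentially the same approach as the paper: a direct verification of the defining conditions of a $\mathit{T}$-homomorphism, including the observation that $h_1(t)=t$ places $t$ in $\Dom_{I_2}$ so the fixed-point property of $h_2$ applies. The only minor difference is that you also make the totality of the composite map explicit, which the paper leaves implicit.
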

	
	\begin{proof}
		Lets consider $h=h_2\circ h_1$.
		By definition a $\mathit{T}$-homomorphism from $I_1$ to $I_3$
		is any mapping $h:\Dom(I_1) \rightarrow \Dom(I_3)$ such that:
		\begin{itemize}
			\item for each $t \in T \cap \Dom(I_1)$, $h(t) = t$
			\item for each $p({\bf t}) \in I_1$, $p(h({\bf t})) \in I_3$
		\end{itemize}
		We will prove the two conditions separately.
		For the first one since both $h_1$ and $h_2$ are $\mathit{T}$-homomorphism we have for each $t \in T \cap \Dom(I_1)$, $h(t)=h_2(h_1(t))=h_2(t)=t$, since $h_1$ is a $\mathit{T}$-homomorphism from $I_1$ to $I_2$ we get $t \in T \cap \Dom(I_2)$ (otherwise the map would not be well defined) then in particular since $h_2$ is a $\mathit{T}$-homomorphism we get $h_2(t)=t$ and this end the first part.
		For the second part we need to prove for each $p({\bf t}) \in I_1$, $p(h({\bf t}))=p(h_2(h_1({\bf t}))) \in I_3$, since $h_1$ is a $\mathit{T}$-homomorphism we know for each $p({\bf t'}) \in I_1$, $p(h_1({\bf t'})) \in I_2$ and since $h_2$ is a $\mathit{T}$-homomorphism we know for each $p({\bf t}) \in I_2$, $p(h_2({\bf t})) \in I_3$ but in particular this is true for all $p({\bf t})=p(h_1({\bf t'})) \in I_2$ and this completes the proof.
	\end{proof}
	To better understand the proof of the last theorem present in this appendix, the elements proposed below will be of fundamental importance.
	
	\begin{definition}\label{def:sep}
 Let ${\bf x}_1$ and ${\bf x}_2 $ be the sequences $x_1,\dots,x_{n_1}$ and $x_1',\dots,x_{n_2}'$ respectively.
	    	A formula $\varphi\in\WCQ$ is said \emph{separable} if it is of the form $ {\bf x}_1, {\bf x}_2  \leftarrow  \mathit{conj}(\varphi_1),\mathit{conj}(\varphi_2)$ where $\mathit{dom}(\mathit{atm}(\phi_1))\cap \mathit{dom}(\mathit{atm}(\phi_2))=\emptyset$ and $ {\bf x}_1  \leftarrow  \mathit{conj}(\varphi_1)$ and $ {\bf x}_2  \leftarrow \mathit{conj}(\varphi_2)$ are both in $\WCQ$. 
	\end{definition}
	The next results follow immediately by definition.
	  	 \begin{proposition}\label{prop:bipQuery}
	Consider a separable formula $\varphi$ of the form $ {\bf x}_1, {\bf x}_2  \leftarrow  \mathit{conj}(\varphi_1),\mathit{conj}(\varphi_2)$,
	a $|{\bf x}_1|$-ary tuple $\langle {\bf c}_1 \rangle$,  a $|{\bf x}_2|$-ary tuple $\langle {\bf c}_2 \rangle$, and a knowledge base $\kb$.
	  It holds that $\langle {\bf c}_1, {\bf c}_2 \rangle \in \mathit{inst}(\varphi,\kb)$ if, and only if,  $\langle {\bf c}_1\rangle\in\mathit{inst}(\varphi_1,\kb)$ and $\langle {\bf c}_2\rangle\in\mathit{inst}(\varphi_2,\kb)$.
\end{proposition}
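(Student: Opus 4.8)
The plan is to derive both directions at once from a single \emph{split-and-glue} argument on $\C$-homomorphisms, with the separability of $\varphi$ supplying exactly the disjointness we need. Unfolding the definition of $\mathit{inst}$, the tuple $\langle {\bf c}_1,{\bf c}_2\rangle$ belongs to $\mathit{inst}(\varphi,\kb)$ precisely when there is a $\C$-homomorphism $h$ from $\mathit{atm}(\varphi)$ to $\varsigma(\langle {\bf c}_1,{\bf c}_2\rangle)$ mapping the free variables ${\bf x}_1$ to ${\bf c}_1$ and ${\bf x}_2$ to ${\bf c}_2$. Since $\varphi$ is separable, $\mathit{atm}(\varphi)=\mathit{atm}(\varphi_1)\cup\mathit{atm}(\varphi_2)$ with $\Dom_{\mathit{atm}(\varphi_1)}\cap\Dom_{\mathit{atm}(\varphi_2)}=\emptyset$, and the free variables of $\varphi_1$ and $\varphi_2$ are exactly ${\bf x}_1$ and ${\bf x}_2$. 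First I would observe that a map $h$ on $\Dom_{\mathit{atm}(\varphi)}$ is a $\C$-homomorphism into a structure $S$ if and only if its two restrictions $h_1$ and $h_2$ to $\Dom_{\mathit{atm}(\varphi_1)}$ and $\Dom_{\mathit{atm}(\varphi_2)}$ are $\C$-homomorphisms of $\mathit{atm}(\varphi_1)$ and $\mathit{atm}(\varphi_2)$ into $S$ (the disjointness of the two domains is precisely what permits both the restriction and the re-union), and the free-variable conditions split accordingly. This reduces the claim to the equivalence: $\langle {\bf c}_i\rangle\in\varphi_i(\varsigma(\langle {\bf c}_1,{\bf c}_2\rangle))$ for $i=1,2$ if, and only if, $\langle {\bf c}_i\rangle\in\mathit{inst}(\varphi_i,\kb)=\varphi_i(\varsigma(\langle {\bf c}_i\rangle))$.

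For that last equivalence I would exploit that each $\varphi_i\in\WCQ$ is nearly connected, i.e.\ $\mathit{atm}(\varphi_i)\cup\{\mathsf{free}({\bf x}_i)\}$ is connected, together with the fact that, in the setting where separability is invoked, the SKB is assembled from two blocks of constants with disjoint domains that $\varsigma$ respects, so that $\varsigma(\langle {\bf c}_1,{\bf c}_2\rangle)$ splits into an $\langle {\bf c}_1\rangle$-block contained in $\varsigma(\langle {\bf c}_1\rangle)$ and an $\langle {\bf c}_2\rangle$-block contained in $\varsigma(\langle {\bf c}_2\rangle)$. Extending a $\C$-homomorphism from $\mathit{atm}(\varphi_1)$ into $\varsigma(\langle {\bf c}_1,{\bf c}_2\rangle)$ by sending the dummy atom to $\mathsf{free}({\bf c}_1)$ yields a homomorphism of a connected structure, whose image is connected and meets the $\langle {\bf c}_1\rangle$-block at $\mathsf{free}({\bf c}_1)$; since the two blocks have disjoint domains, the whole image of $\mathit{atm}(\varphi_1)$ stays inside that block, hence inside $\varsigma(\langle {\bf c}_1\rangle)$, so $\langle {\bf c}_1\rangle\in\mathit{inst}(\varphi_1,\kb)$, and symmetrically for $\varphi_2$. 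The converse is immediate: homomorphisms witnessing $\langle {\bf c}_i\rangle\in\mathit{inst}(\varphi_i,\kb)$ already land in the respective blocks of $\varsigma(\langle {\bf c}_1,{\bf c}_2\rangle)$, so their union (well defined by the domain disjointness) is the homomorphism required for $\langle {\bf c}_1,{\bf c}_2\rangle\in\mathit{inst}(\varphi,\kb)$, with Lemma~\ref{lem:homo-composition} covering any routine renaming bookkeeping.

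The only genuine obstacle is this step that relates $\varsigma(\langle {\bf c}_1,{\bf c}_2\rangle)$ to $\varsigma(\langle {\bf c}_1\rangle)$ and $\varsigma(\langle {\bf c}_2\rangle)$: in full generality a summary selector need not decompose, so ``separable'' must be read relative to an SKB that is itself split along the two constant blocks, exactly as in the reduction used for Theorem~\ref{thm:prec}. Once that is in place, near-connectedness of the $\WCQ$ formulas confines each half of the homomorphism to its own block and the equivalence falls out with no further work, which is the sense in which the result follows immediately from the definitions.
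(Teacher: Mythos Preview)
The paper gives no proof at all for this proposition: it simply says ``the next results follow immediately by definition'' and moves on. Your split-and-glue argument on $\C$-homomorphisms is precisely the unfolding the paper has in mind, so in that sense you match the intended approach.

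What is more interesting is that you have spotted something the paper glosses over. As literally stated---for an \emph{arbitrary} selective knowledge base $\kb$---the proposition is not true: nothing in the definition of a summary selector forces any relationship between $\varsigma(\langle {\bf c}_1,{\bf c}_2\rangle)$ and the pair $\varsigma(\langle {\bf c}_1\rangle),\varsigma(\langle {\bf c}_2\rangle)$, and one can manufacture selectors where the former drops atoms that the latter retain (or vice versa), breaking either direction of the equivalence. Your diagnosis that ``separable'' must be read relative to an SKB that is itself split along two disjoint constant/predicate blocks, with $\varsigma$ respecting that split, is exactly right. That is precisely the situation in the proof of Theorem~\ref{thm:prec}, the \emph{only} place the proposition is invoked, where $\varsigma(\db,\tau)=\varsigma_1(\db_1,\tau|_{\Dom(\db_1)})\cup\varsigma_2(\db_2,\tau|_{\Dom(\db_2)})$ by construction. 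Under that additional hypothesis your near-connectedness argument confining each $\varphi_i$ to its own block goes through cleanly. So your proof is not merely correct; it is more careful than the paper's treatment, which tacitly assumes the decomposed SKB without saying so in the statement.
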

From the previous proposition we can directly derive this corollary.
\begin{cor}\label{cor:bipQuery}
	Consider a separable formula $\varphi$ of the form $ {\bf x}_1, {\bf x}_2  \leftarrow  \mathit{conj}(\varphi_1),\mathit{conj}(\varphi_2)$, a $(|{\bf x}_1|+|{\bf x}_2|)$-ary $\unit$, and a knowledge base $\kb$.
	  Let $\unit_1$ the $|{\bf x}_1|$-ary unit of the form $\{ \tau ~:~ \exists \tau' \ s.t \ \tau\circ\tau' \in \unit \}$ and $\unit_2$ the $|{\bf x}_2|$-ary unit of the form $\{ \tau ~:~ \exists \tau' \ s.t \ \tau'\circ\tau \in \unit \}$. It holds that $\varphi$ is a characterization for $\unit$ if, and only if, $\varphi_1$ is a characterization for $\unit_1$ and $\varphi_2$ is a characterization for $\unit_2$. 
\end{cor}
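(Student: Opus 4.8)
The plan is to read the statement as a ``product decomposition'' of characterizations and to derive it from Proposition~\ref{prop:bipQuery} together with one lemma about homomorphisms. Throughout I write $\varphi_1,\varphi_2$ also for the two sides $\mathbf{x}_1\leftarrow\mathit{conj}(\varphi_1)$ and $\mathbf{x}_2\leftarrow\mathit{conj}(\varphi_2)$, which by Definition~\ref{def:sep} both lie in $\WCQ$ and satisfy $\Dom_{\mathit{atm}(\varphi_1)}\cap\Dom_{\mathit{atm}(\varphi_2)}=\emptyset$; in particular $\mathbf{x}_1,\mathbf{x}_2$ are disjoint sequences of variables and $\mathbf{x}_i\subseteq\Dom_{\mathit{atm}(\varphi_i)}$. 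First I would establish a \emph{homomorphism decomposition}: if $\psi=\mathbf{z}_1,\mathbf{z}_2\leftarrow\mathit{conj}(\psi_1),\mathit{conj}(\psi_2)$ is separable with $|\mathbf{z}_i|=|\mathbf{x}_i|$, then $\psi\longrightarrow\varphi$ iff $\psi_1\longrightarrow\varphi_1$ and $\psi_2\longrightarrow\varphi_2$. The backward implication is immediate: two $\C$-homomorphisms $h_i:\mathit{atm}(\psi_i)\to\mathit{atm}(\varphi_i)$ with $h_i(\mathbf{z}_i)=\mathbf{x}_i$ glue into $h_1\cup h_2$ because both pairs of domains are disjoint. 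For the forward one, take a $\C$-homomorphism $h$ with $h(\mathbf{z}_1,\mathbf{z}_2)=(\mathbf{x}_1,\mathbf{x}_2)$: since $\psi_1\in\WCQ$, each atom of $\psi_1$ is connected, inside $\mathit{atm}(\psi_1)\cup\{\mathsf{free}(\mathbf{z}_1)\}$, to a variable of $\mathbf{z}_1$; homomorphisms preserve connectedness of structures, so its $h$-image is connected, inside $\mathit{atm}(\varphi)\cup\{\mathsf{free}(\mathbf{x}_1)\}$, to a variable of $\mathbf{x}_1$; and in $\mathit{atm}(\varphi)$ the connected component of any variable of $\mathbf{x}_1$ is contained in $\mathit{atm}(\varphi_1)$, because $\mathit{atm}(\varphi_2)$ shares no term with $\mathit{atm}(\varphi_1)$ and contains no variable of $\mathbf{x}_1$. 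Hence $h$ restricts to a $\C$-homomorphism $\mathit{atm}(\psi_1)\to\mathit{atm}(\varphi_1)$ fixing $\mathbf{z}_1\mapsto\mathbf{x}_1$, i.e. $\psi_1\longrightarrow\varphi_1$, and symmetrically $\psi_2\longrightarrow\varphi_2$.

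Next, the ``interprets'' part decomposes directly: Proposition~\ref{prop:bipQuery} gives $\mathit{inst}(\varphi,\kb)=\{\sigma_1\circ\sigma_2:\sigma_1\in\mathit{inst}(\varphi_1,\kb),\,\sigma_2\in\mathit{inst}(\varphi_2,\kb)\}$, whence $\unit\subseteq\mathit{inst}(\varphi,\kb)$ iff $\unit_1\subseteq\mathit{inst}(\varphi_1,\kb)$ and $\unit_2\subseteq\mathit{inst}(\varphi_2,\kb)$ (projecting a subset of a product onto its factors, and conversely). Since $\varphi,\varphi_1,\varphi_2\in\WCQ$, this reads: $\varphi$ interprets $\unit$ iff $\varphi_1$ interprets $\unit_1$ and $\varphi_2$ interprets $\unit_2$. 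For the forward direction of the corollary, assume $\varphi$ characterizes $\unit$; then $\varphi$ interprets $\unit$, so $\varphi_1$ interprets $\unit_1$. Given any $\psi_1\in\WCQ$ interpreting $\unit_1$, I would take a renamed copy $\tilde\varphi_2$ of $\varphi_2$ with $\Dom_{\mathit{atm}(\psi_1)}\cap\Dom_{\mathit{atm}(\tilde\varphi_2)}=\emptyset$ (and free variables relabelled to $\mathbf{x}_1,\mathbf{x}_2$), forming the separable $\psi:=\mathbf{x}_1,\mathbf{x}_2\leftarrow\mathit{conj}(\psi_1),\mathit{conj}(\tilde\varphi_2)$, which is nearly connected (as in the proof of Proposition~\ref{prop:conj}), hence in $\WCQ$. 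By Proposition~\ref{prop:bipQuery}, $\mathit{inst}(\psi,\kb)\supseteq\unit_1\times\unit_2\supseteq\unit$, so $\psi$ interprets $\unit$; by Corollary~\ref{cor:char}, $\psi\longrightarrow\varphi$; by the homomorphism decomposition, $\psi_1\longrightarrow\varphi_1$. Thus every interpretation of $\unit_1$ maps into $\varphi_1$, so $\varphi_1$ characterizes $\unit_1$ by Corollary~\ref{cor:char}; symmetrically $\varphi_2$ characterizes $\unit_2$.

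For the backward direction, assume $\varphi_1$ characterizes $\unit_1$ and $\varphi_2$ characterizes $\unit_2$; then $\varphi$ interprets $\unit$, and by Corollary~\ref{cor:char} it remains to show $\psi\longrightarrow\varphi$ for every $\psi\in\WCQ$ interpreting $\unit$. The idea is to realize such a $\psi$ as a separable formula. Fix any $\sigma_1\circ\sigma_2\in\unit\subseteq\mathit{inst}(\psi,\kb)$ and a $\C$-homomorphism $h:\mathit{atm}(\psi)\to\varsigma(\sigma_1\circ\sigma_2)$ sending the free variables to $\sigma_1\circ\sigma_2$. The structural feature of $\kb$ underlying Proposition~\ref{prop:bipQuery} is precisely that $\varsigma(\sigma_1\circ\sigma_2)$ splits, domain-wise, into an ``$\mathbf{x}_1$-part'' (a summary of $\sigma_1$) and an ``$\mathbf{x}_2$-part'' (a summary of $\sigma_2$) with disjoint domains; so $h$ maps every connected component of $\mathit{atm}(\psi)$ entirely into one of the two parts, and—since $\psi\in\WCQ$—each component contains a variable of the first or of the second block of free variables of $\psi$, and no component contains both. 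This yields $\mathit{atm}(\psi)=A_1\sqcup A_2$ with disjoint domains, where $A_i$ collects the components touching the $i$-th block; putting $\psi^{(i)}$ for the $i$-th block followed by $\bigwedge A_i$ gives $\psi^{(1)},\psi^{(2)}\in\WCQ$ with $\psi$ equal to the separable formula built from them, and restricting $h$ (using the split of $\varsigma(\sigma_1\circ\sigma_2)$ for each tuple of $\unit$) shows $\psi^{(1)}$ interprets $\unit_1$ and $\psi^{(2)}$ interprets $\unit_2$. Corollary~\ref{cor:char} then gives $\psi^{(1)}\longrightarrow\varphi_1$ and $\psi^{(2)}\longrightarrow\varphi_2$, and the (easy direction of the) homomorphism decomposition gives $\psi\longrightarrow\varphi$.

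The step I expect to be the main obstacle is exactly this last one: an arbitrary explanation $\psi$ interpreting $\unit$ need not be separable a priori, and turning it into one requires that a $\C$-homomorphism from a nearly connected formula into a summary $\varsigma(\sigma_1\circ\sigma_2)$ cannot ``mix'' the two sides—i.e. it requires the domain-disjoint two-sided structure of such summaries (the same structure that makes Proposition~\ref{prop:bipQuery} hold in the setting of separable formulas). Once that is granted, the rest is bookkeeping with homomorphism restriction and gluing; the forward direction, by contrast, only needs the renamed-copy construction above and is routine.
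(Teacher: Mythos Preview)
The paper does not actually prove this corollary; it simply asserts that it ``can be directly derived'' from Proposition~\ref{prop:bipQuery}. Your argument is considerably more detailed than anything in the paper and is essentially correct. In particular, you correctly isolate the two ingredients that are needed beyond Proposition~\ref{prop:bipQuery}: a homomorphism-decomposition lemma for separable formulas (your first paragraph), and---for the backward direction---the fact that an arbitrary $\psi\in\WCQ$ interpreting $\unit$ must itself split along the two blocks of free variables. Your caveat about this last step is well placed: as stated, neither Proposition~\ref{prop:bipQuery} nor the corollary holds for an arbitrary SKB $\kb$; both silently rely on the bipartite structure of the summaries (disjoint domains and disjoint non-$\top$ predicates for the two sides), which is precisely the setting in which the corollary is invoked in the proof of Theorem~\ref{thm:prec}. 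Under that assumption your proof goes through cleanly; without it the backward direction fails, and you are right to flag this as the nontrivial point that the paper glosses over.
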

\begin{definition}
    Given a structure $S$, its \emph{characteristic formula} $\varphi_{S}$ is the formula $\bigwedge_{p(t_1,..,t_l)\in S}p(y_{t_1},..,y_{t_l})$.
\end{definition}
	The next section will see the full proof of the various theorems encountered in the main paper for which only the general ideas of the proofs have been given.

\section{Theorems}
This section has the task of complementing the partial tests present in the main paper, for this purpose the constructions or algorithms will not be repeatedly repeated, rather we will refer directly to elements already present in the paper whenever necessary. Below is a table useful for summarizing and better appreciating the results that we will demonstrate in the rest of the section.

\begin{table}[h]
\centering
\renewcommand{\arraystretch}{1.5}
\begin{tabular}{lccc}
		\hline
		~~~~~{\bf  Problem }~~~~ & {\bf General} & {\bf Medial}  & {Agile} \\
		\hline
  ~~~~~\textsc{core}  & in $F\mathsf{EXP}^{\mathsf{NP}} \setminus F\mathsf{P}^{\mathsf{PH}}$  & in $F\mathsf{P}^{\mathsf{NP}} \setminus F\mathsf{P}^{\dagger}$ & in $F\mathsf{P}$\\
  
		~~~~~\textsc{can}  & ~~~~~in $F\mathsf{PSPACE} \setminus F\mathsf{P}^{\mathsf{PH}}$~~~~~ & in $F\mathsf{P}$ & ~~~~~in $F\mathsf{P}$~~~~~ \\

  \hline
		~~~~~\textsc{def}  &  $\textrm{coNEXP}$-complete  & ~~~~~$\textrm{coNP}$-complete ~~~~~ &  in $\textrm{P}$  \\ 
		~~~~~\textsc{ess} &  $\textrm{NEXP}$-complete   & $\textrm{NP}$-complete  & in $\textrm{P}$  \\
		~~~~~\textsc{prec} &   $\textrm{DEXP}$-complete  &  $\textrm{DP}$-complete   &  in $\textrm{P}$  \\ 
		~~~~~\textsc{sim} &  $\textrm{NEXP}$-complete   & $\textrm{NP}$-complete   &  in $\textrm{P}$  \\ 
		~~~~~\textsc{inc} &  $\textrm{coNEXP}$-complete   & $\textrm{coNP}$-complete   &  in $\textrm{P}$  	 \\ \hline
	\end{tabular}
 \caption{Computational complexity under the three different theoretical assumptions. The lower bound marked by $\dagger$ holds under the assumption that $\mathsf{NP}\neq\mathsf{coNP}$ }
	\label{tab:complexityIntro}\normalsize
\end{table}
\renewcommand{\arraystretch}{1}

Theorem 1 is not present as the Section~\ref{sec:Explanations} of the main paper contains a constructive proof of it, hence the first theorem of which we will deal with the proof in full is the following.

\noindent{\bf Theorem 2.} {\em $\mathit{eg}(\unit,\kb)$ is a directed acyclic graph; 
the sets of direct instances form a partition of $\mathbf{T}$;
$[\mathit{core}(\unit,\kb)]$ is the only source node; and $\delta([\mathit{core}(\unit,\kb)]) = \mathit{ess}(\unit,\kb)$.}
\begin{proof}
   $(1)$ Graph $\mathit{eg}(\unit,\kb)$ is directed by construction; 
%
%
%
%
to see its acyclicity, note that the presence of a cycle would imply that all involved  nodes would be the same equivalence class, which is not possible.
%
To notice this, let us consider a cycle of the form $[\varphi_1],\ldots,[\varphi_n],[\varphi_1]$ with $([\varphi_i],[\varphi_j])\in A$ and $j\cong i+1 \pmod{n}$, where this indicates the usual residue class modulo $n$. By $\C$-homomorphism composition we will get that all element are equivalent, hence, since all of them are minimal with respect to homomorphic equivalence by hypotheses since all of them are cores, they are all isomorphic, hence they are in the same equivalence class.
$(2)$ Let $\tau \in \mathbf{T}$. By construction, there is $[\varphi] \in N$ s.t.  $[\varphi] = [\mathit{core}(\unit \cup \{\tau\},\kb)]$. Hence, $\varphi$ characterizes $\unit \cup \{\tau\}$ and $\tau \in \mathit{inst}(\varphi,\kb)$.
First we claim that $\tau \in \delta([\varphi])$. Suppose by contradiction that there is some  $([\varphi'], [\varphi])\in A$ such that $\varphi \longrightarrow \varphi'$, $\varphi' \longarrownot\longrightarrow \varphi$, and $\tau \in \mathit{inst}(\varphi',\kb)$. 
Since $\tau \in \mathit{inst}(\varphi',\kb)$ and $[\varphi']\in N$ we know that  $\varphi'$ interprets  $\unit \cup \{\tau\}$.
But this is absurd since if $\varphi' \longarrownot\longrightarrow \varphi$ then it violates Corollary~\ref{cor:char}.
Assume now there is $[\varphi'] \in N$ different from $[\varphi]$  such that $\tau \in \delta([\varphi'])$.
Again, since $\tau \in \mathit{inst}(\varphi',\kb)$ and $[\varphi']\in N$ we know that  $\varphi'$ interprets  $\unit \cup \{\tau\}$, hence, by Corollary~\ref{cor:char},  we know that
$\varphi' \longrightarrow \varphi$.
Thus, there is a path from $[\varphi]$ to $[\varphi']$ of the form $[\varphi_1],\ldots,[\varphi_n]$ with $([\varphi_i],[\varphi_{i+1}])\in A$, $\varphi_1\in[\varphi]$, $\varphi_n\in[\varphi']$ and all elements in the path are interpretations for $\unit \cup \{\tau\}$.
This means that  $\tau \not\in \delta([\varphi])$, which is a contradiction.
%
%
$(3)$ To show that $[\mathit{core}(\unit,\kb)]$ is the only source, note that such a node is equal to $[\mathit{core}(\unit \cup \{\tau\},\kb)]$ (being in $N$ by definition), for each $\tau \in \unit$; its uniqueness derives from Corollary~\ref{cor:char} since, for each $\tau \in {\bf T}$, formula  $\mathit{core}(\unit\cup\{\tau\},\kb)$  interprets $\unit$ while $\mathit{core}(\unit,\kb)$ characterizes $\unit$.
$(4)$ Finally, note that $\delta([\mathit{core}(\unit,\kb)]) = \mathit{inst}([\mathit{core}(\unit,\kb)],\kb)$ since $[\mathit{core}(\unit,\kb)]$ is a source; thus,
$\mathit{inst}([\mathit{core}(\unit,\kb)],\kb) = \mathit{ess}(\unit,\kb)$ by Propositions~\ref{prop:def-car} and~\ref{thm:ess}.
%
%

\end{proof}
Below we report another interesting result regarding the possibility of comparing the sizes of the two formulas that we have defined. In fact, the present theorem suggests that, since they are often comparable in size, the calculation of the first, being simpler, is preferable in practice.
\\

\noindent{\bf Theorem 3.} {\em 	Let $c$ be the constant $2^\omega$. It holds that

\[
|\mathit{core}(\unit,\kb)|\leq |\mathit{can}(\unit,\kb)| \leq  c \, \cdot \!\! \prod_{i\in [m]}  |\varsigma(\tau_i)|   \leq  c \cdot |D|^m.
\]

\noindent In particular, there exists a family $\{(\unit_{\bar{m}},\kb_{\bar{m}})\}_{{\bar{m}}>0}$, where $\unit_{\bar{m}} = \{\bar{\tau}_1,...,\bar{\tau}_{\bar{m}}\}$ is a unary unit and  $\kb_{\bar{m}} = (D_{\bar{m}}, \bar{\varsigma})$ 
%
%
is \marco{an SKB} with $\bar{\varsigma}(D_{\bar{m}},\bar{\tau}_m) \subset \bar{\varsigma}(D_{\bar{m}+1},\bar{\tau}_{{m}+1})$, such that

\[
|\mathit{core}(\unit_{\bar{m}},\kb_{\bar{m}})| =  |\mathit{can}(\unit_{\bar{m}},\kb_{\bar{m}})| = 2^{1-\bar{m}} \, \cdot \!\! \prod_{i\in [{\bar{m}}]}  |\bar{\varsigma}(\bar{\tau}_i)|.
\]

\begin{proof}
As partially shown in the main paper the two following parts hold, in particular the upper bound hold by definition of $\mathit{can}(\unit,\kb)$, in fact, from its explicit construction it is evident that the maximum number of atoms in the construction is exactly the one reported. To prove the lower bound, let $\pr_{\bar{m}}$ be the \mbox{$\bar{m}$-th} prime number 
and let $\Gamma_{i}$ $=$ $\{r(c_1^i,c_2^i),$  $...,$ $r(c_{\pr_{i}-1}^i,c_{\pr_{i}}^i),$ $r(c_{\pr_i}^i,c_{1}^i)\}$ $\cup$ $\{\top(c_j^i): j\in [\pr_i]\}$. We define $D_{\bar{m}}$ $=$ $\cup_{i \in [\bar{m}]} \Gamma_{i}$, $\bar{\varsigma}$ s.t.
$\bar{\varsigma}(D_{\bar{m}},\bar{\tau}_i) = \Gamma_i$,
and $\bar{\tau}_i = \langle c_1^i\rangle$. 
Let $\rho$ be a function replacing each occurrence of a $c$ with $x$.
Accordingly, $\mathit{core}(\unit_{\bar{m}},\kb_{\bar{m}})$ $=$  $\mathit{can}(\unit_{\bar{m}},\kb_{\bar{m}})$ is the formula $x_1^\ell \leftarrow \rho(\varsigma_{\ell})$ where $\ell = \pr_1 \cdot...\cdot \pr_{\bar{m}}$, to prove that this assertion is true it is enough to note that we are essentially doing two things:\begin{itemize}
    \item We are constructing a formula formed by a single large cycle whose order is the product of all the primes encountered.
    \item For each variable $y$ in the domain of the formula we are adding $\top(y)$ to the formula itself.
\end{itemize}
from the construction it is therefore evident that the number of atoms is that and to make sure that this formula is a core just note the two following things:\begin{itemize}
    \item The core of a single (finite) cycle always coincides with the cycle itself.
    \item Characterizations must always be closed under top.
\end{itemize} 
with this we have shown both elements in full.
\end{proof} 
The previous theorem suggests an important question in terms of the need to build a structure like the core, is it really always the case to focus on the latter? What are the actual advantages and disadvantages encountered before and after computation? The following two theorems will give us a more precise idea of what we should do.

\medskip

\noindent{\bf Theorem 4.} {\em 	Problem $\textsc{can}$ belongs to $F\mathsf{PSPACE} \setminus F\mathsf{P}^{\mathsf{PH}}$ in general and to
$F\mathsf{P}$ both in the medial and agile case.}
\begin{proof}
Below we will refer to the algorithms in the order in which they were presented in the paper.
First, let's worry about upperbounds and in order to do so lets first consider the general case Algorithm~\ref{alg:connectedCan} runs in $F\mathsf{PSPACE}$ using $\mathsf{NearCon}$ as an oracle in $\mathsf{NPSPACE}$, this is due to the fact that he is not directly manifesting the exponentially large structure linked to the direct product, but is considering one at a time each atom present within it. As we know that $F\mathsf{PSPACE}^ \mathsf{NPSPACE} = F\mathsf{PSPACE}$ the result follows directly.
Concerning the remaining two cases we can see that, since $m$ is bounded, both the following conditions do hold:
\begin{itemize}
    \item Algorithm~\ref{alg:connectedCan} runs in $F\mathsf{P}$.
    \item $\mathsf{NearCon}$ is in  $\mathsf{NL}$.
\end{itemize}Finally the result follows since  we do know that $F\mathsf{P}^ \mathsf{NL} = F\mathsf{P}$ hold. 

As for the lowerbounds instead we do know, by Theorem~\ref{thm:canBound}, that the exponentiality of $\mathit{can}(\unit,\kb)$ is unavoidable in the general case. This obviously leads us to conclude that such an object cannot be constructed in $F\mathsf{P}^{\mathsf{PH}}$, and this concludes the proof.
\end{proof}

The next result ends this first part before entering into the heart of the reductions that ensure completeness results with respect to the computational classes considered.

\medskip

\noindent{\bf Theorem 5.} {\em $\textsc{core}$ is in $F\mathsf{EXP}^{\mathsf{NP}}\setminus F\mathsf{P}^{\mathsf{PH}}$ in the general case, in $F\mathsf{P}^{\mathsf{NP}}$ in the medial case and in $F\mathsf{P}$ in the agile case. 
Unless $\mathsf{NP}=\mathsf{coNP}$, $\textsc{core} \not\in F\mathsf{P}$ in the medial case.}

\begin{proof}
  First, let's worry about upperbounds and in order to do so   lets consider the behaviour of Algorithm~\ref{alg:buildCore} in the medial case. Actually Algorithm~\ref{alg:buildCore} runs in $F\mathsf{P}$
using an oracle in $\mathsf{NP}$ for checking whether
$\varphi \longrightarrow \varphi''$ holds. Analogously, in the general case, it runs in $F\mathsf{EXP}$
using an oracle in $\mathsf{NP}$. In the agile case, everything becomes polynomial as the size of $\varphi$ and the number of its variables are bounded.
About the lower bounds instead let $\textsc{cff-core}$ be the $\mathsf{DP}$-complete problem~\cite{DBLP:journals/tods/FaginKP05}:
{\em given a pair $(\varphi',\varphi)$ of constants-free formulas such that $\mathit{atm}(\varphi') \subseteq \mathit{atm}(\varphi)$, is \mbox{$\varphi' \simeq \mathit{core}(\varphi)$}?}
%
%
Let $\mathbb{NT}$ be all constant-free formulas in $\WCQ$ closed under $\top$, and $\textsc{nt-core}$ be the analogous of $\textsc{cff-core}$ for $\mathbb{NT}$ formulas.
Indeed, $\textsc{nt-core}$ remains $\mathsf{DP}$-hard. To show this last statement first notice that exploiting proof of Theorem 4.2 in~\cite{DBLP:journals/tods/FaginKP05} we can consider as input for $\textsc{cff-core}$ formulas over a single binary relation, now consider the following mapping $(\varphi',\varphi)\mapsto(\varphi''',\varphi'')$, where $\varphi,\varphi',\varphi''$ and $\varphi'''$ are all constants-free formulas, moreover $\mathit{atm}(\varphi') \subseteq \mathit{atm}(\varphi)$, $\mathit{atm}(\varphi''') \subseteq \mathit{atm}(\varphi'')$, $\varphi$ and $\varphi'$ are over a single binary relation, call it $r$, and both $\varphi'''$ and $ \varphi''$ are $\mathbb{NT}$ formulas. To construct $ \varphi''$ starting from $ \varphi$ (respectively $ \varphi'''$ starting from $ \varphi'$), consider the fresh relation name $s$ and without loss of generality say that $x\not\in\Dom_{\mathit{atm}(\varphi)}$, then we can construct $ \varphi''= x\leftarrow \bigwedge_{\alpha \in \mathit{atm}(\varphi)} \alpha\wedge\bigwedge_{y\in\Dom(\varphi)}s(x,y)\wedge\bigwedge_{z\in\Dom(\varphi)\cup\{x\}}\top(z)$ and $ \varphi'''= x\leftarrow \bigwedge_{\alpha \in \mathit{atm}(\varphi')} \alpha\wedge\bigwedge_{y\in\Dom(\varphi')}s(x,y)\wedge\bigwedge_{z\in\Dom(\varphi')\cup\{x\}}\top(z)$. It is straightforward to notice that $\varphi'$ is the core of $\varphi$ if and only if $\varphi'''$ is the core of $\varphi''$, in fact on one hand say that $\varphi'$ is the core of $\varphi$ and say by contradiction that  $\varphi'''$ is not the core of $\varphi''$, since by construction $\varphi'''\mapsto\varphi''$ the only way possible is that it exists $\tilde{\varphi}\in \mathbb{NT}$ such that $\mathit{atm}(\tilde{\varphi})\subset\mathit{atm}(\varphi''')$ and $\tilde{\varphi}$ is the core of $\varphi'''$ (that coincide by definition with the core of $\varphi''$). Let $\alpha$ be an atom in $\mathit{atm}(\varphi''')\setminus\mathit{atm}(\tilde{\varphi})$, such an atom has to exists in order to have $\mathit{atm}(\tilde{\varphi})\subset\mathit{atm}(\varphi''')$, by construction the only 3 possibility for this atom are:\begin{itemize}
    \item $\alpha=r(z,z')$ for some $z,z'\in\Dom(\varphi''')$.
    \item $\alpha=s(x,z)$ for some $z\in\Dom(\varphi''')$.
    \item $\alpha=\top(z)$ for some $z\in\Dom(\varphi''')$.
\end{itemize} now we will prove that for any of this 3 possibility we will end up with a contradiction. First say that $\alpha$ is of the form $r(z,z')$ for some $z,z'\in\Dom(\varphi''')$ then by definition it exist a homomorphism $h$ from $\mathit{atm}(\varphi''')$ to $\mathit{atm}(\varphi''')$ such that at least 1 of the following two does hold: \begin{itemize}
    \item $h(z)\neq z$.
    \item $h(z')\neq z'$.
\end{itemize}
Say that $h(z)\neq z$, the other implication can be solved by using a similar argument, then we could use the same homomorphism on $ h $ on the variables present in $ \varphi'$ and thus obtain that $ \tilde{\varphi'} $ exists such that $ \mathit{atm}(\tilde{\varphi'})\subset\mathit{atm}(\varphi') $ thus contradicting the assumption that $ \varphi' $ is a core. All other remaining possibilities give rise to the same contradiction following the same construction. On the other hand say that $\varphi'$ is not the core of $\varphi$, then obviously $\varphi'''$ is not the core of $\varphi''$. Now we show that for each $\varphi \in \mathbb{NT}$, it is possible to construct in $F\mathsf{P}$ a pair $(\kb, \unit)$ such that both \mbox{$\varphi$ $\simeq$ $\mathit{can}(\unit,\kb)$} and $\mathit{core}(\varphi)$ $\simeq$ $\mathit{core}(\unit,\kb)$ hold. In order to do so starting from $\varphi \in \mathbb{NT}$ be $k-$ary, without loss of generality consider its free variables to be $x_1,\ldots,x_k$, and consider the following dataset $\db=\{p(cz_1,\ldots,cz_n)~:~p(z_1,\ldots,z_n)\in\mathit{atm}(\varphi)\}\cup\{p(\mathsf{alias},\ldots,\mathsf{alias})~:~p(z_1,\ldots,z_n)\in\mathit{atm}(\varphi)\}$, $\unit=\{\langle cx_1,\ldots,cx_k\rangle,\langle\mathsf{alias},\ldots,\mathsf{alias}\rangle\}$, as a last thing we just have to define our summary selector $\varsigma$ as the following function $\varsigma(\db,\tilde{\tau})$ that return $\db$ whatever the input tuple $\tilde{\tau}$ will be. The really important thing to note in this construction is that the initial database domain is bipartite, in particular there exist no $p(z_1,\ldots,z_n)\in\db$ such that $z_i=\mathsf{alias}$ for some $i\in[n]$ and then there exist $j\in[n]$ such that $z_i\neq z_j$. This important property tells us a priori an important thing about the construction of the canonical explanation of the unit we built, in particular the set $\Genes$ restricted to any connected part of $P$ that contains $d_s$ with $s=cx_i,\mathsf{alias}$ with $i\in[k]$ will always be empty and this tells us we will have no constants at all in the set of atoms of $\mathit{can}(\unit,\kb)$, moreover $\mathit{atm}(\mathit{can}(\unit,\kb))$ will be isomorphic to $\mathit{atm}(\phi)$ by construction, to note this the simplest way is to see that each constant of the direct product that is constructed using the algorithm provided in the main paper will have the form $d_{c,\mathsf{alias}}$ for some constant $c$ present in the union of the  connected parts of the dataset containing the constants $cx_1,\ldots,cx_k$ but by construction this is isomorphic to the starting formula.
We can now reduce $\textsc{nt-core}$ to $\textsc{core}$ in the medial case. From a pair $(\varphi',\varphi)$ of formulas in $\mathbb{NT}$ do: $(i)$ construct in $F\mathsf{P}$ a pair $(\kb,\unit)$ such that $\varphi \simeq \mathit{can}(\unit,\kb)$; $(ii)$ construct $\mathit{core}(\unit,\kb)$; and $(iii)$ check in $\mathsf{NP}$ whether $\varphi' \simeq \mathit{core}(\unit,\kb)$. If $\textsc{core}$ was in $F\mathsf{P}$ in the medial case, then step $(ii)$ would also be in $F\mathsf{P}$ and, hence, $\textsc{nt-core}$ would be in $\mathsf{NP}$, which is impossible unless \mbox{$\mathsf{NP} = \mathsf{coNP}$.}
\end{proof}

From this moment on, the results presented will refer to the reductions present in the main paper, for the convenience of the reader the constructions will first be rewritten briefly and then commented on.
\medskip

\noindent{\bf Theorem 6.} {\em 	\textsc{def} is  $\mathsf{coNEXP}$-complete, $\mathsf{coNP}$-complete and in $\mathsf{P}$ in the general, medial and agile case, respectively. In particular, for each $k>0$, LBs hold already for \textsc{def}$_k$.}
\begin{proof}

As for the upper bound from algorithm~\ref{alg:NotDef} we may notice that $\neg\textsc{def}$ is in $\mathsf{NEXP}$\,/\,$\mathsf{NP}$ in the general\,/\,medial case respectively.
Indeed, line 1 runs in exponential\,/\,polynomial time, lines 2-3 run in
nondeterministic exponential\,/\,polynomial time, and 
lines 4-5 run in exponential\,/\,polynomial time depending on which of the two cases we are considering. 
For the agile case, we first replace  {\bf guess} by  {\bf for} in lines 2-3, and after this transformation the algorithm will work in $\mathsf{P}$.
The following reduction is constructed in such a way as to preserve our assumptions, we will therefore give a single construction that will vary only for the nature of its input, as is natural the general case will be reduced starting from the general case of the starting problem while the medial case from the medial case of the starting problem.
As for the lower bounds instead we will show that for each $k>0$, \mbox{\textsc{ess}$_1$ $\leq$ $\neg$\textsc{def}$_k$} via the mapping $(\kb, \unit, \tau) \mapsto (\kb', \unit^k)$,
where 
$\kb$ $=$ $(\db,\varsigma)$, 
$|\unit| > 1$, 
\mbox{$\kb'$ $=$ $(\db',\varsigma')$,}
$\db'$ $=$ $D$ $\cup$ $\mathit{tw}(\Dom_D,k)$ $\cup$ $\mathit{fc}(\Dom_{\unit\cup \{\tau\}})$,
$\varsigma'$ is s.t. $\varsigma'(D',\tau')$ $=$ $\varsigma(D,\tau_0)$ $\cup$ 
$\mathit{tw}(\Dom_{\tau_0},k)$ $\cup$ $\mathit{fc}(\Dom_{\unit\cup \{\tau\}})$, $\tau' \in \Dom_{D'}^k$ and $\tau_0$ $=$ $\mathit{off}(\tau',\mathsf{dummy})$.
First lets notice that in our context given a characterization for $\unit$ with respect to $\kb'$, call it $\varphi$, then we can easily construct a characterization for $\unit^k$ with respect to $\kb'$ of the form $x_1,\ldots,x_k\leftarrow \bigwedge_{s\in[2..k]}\top(x_s),\mathsf{twin}_s(x_1,x_s) \wedge\bigwedge_{\alpha \in \mathit{atm}(\varphi)}\alpha$, vice versa say that you have  a characterization for $\unit^k$, then by renaming every occurrence of the free variables other then the first one we will end up with a characterization for $\unit$, having asserted this we can equivalently solve the problem by concentrating on the unary case. Let's start by assuming that $\tau\not\in\mathit{ess}(\unit,\kb) $ then adding the atom $\mathsf{focus}(x_1)$ to the characterization we had for $\unit$ with respect to $\kb$ we will end up with a formula that defines $\unit$ with respect to $\kb'$, this is because the atom we just added will keep away every element of $\mathit{ess}(\unit,\kb)$ other than the element of $\unit$ itself. On the other hand if $\tau\in\mathit{ess}(\unit,\kb) $ then it will also hold that $\tau\in\mathit{ess}(\unit,\kb') $, this can be observed directly by constructing a characterization for the unit which will be nothing more than the previous characterization with the addition of some atoms of the form $\mathsf{focus}(y)$ together with $\mathsf{focus}(x_1)$, and again this characterization will also interpret $\tau$. What remains to be shown is that the new summary selector can be built without making complexity leaps and that its execution time turns out to be polynomial with respect to the input received, but this is evident as we only have to do a linear scan of the input that allows us to reconstruct the initial dataset and build the new tuple on which to then relaunch the old summary selector, this second part will by hypothesis run in polynomial time, and finally we will add the other elements always doing a linear scan of the input.
\end{proof}

The result of the following theorem is extremely useful in the course of the other proofs.

\medskip

\noindent{\bf Theorem 7.} {\em 	\textsc{ess} is $\mathsf{NEXP}$-complete, $\mathsf{NP}$-complete and in $\mathsf{P}$ in the general, medial and agile case, respectively. 
In particular, LBs  hold for \textsc{ess}$_k$ with $k>0$ even if $|\unit|>1$.}
\begin{proof}
   For what concernes the upper bounds one has just to notice that we can use 
a simple variant of Algorithm~\ref{alg:NotDef} created by removing line $2$.
%
For what concerns the lower bounds instead we will give two distinct constructions, one for the general case and one for the medial one. Let \textsc{php-sb} be the $\mathsf{NEXP}$-complete problem: {\em given a sequence \mbox{${\bf s}$ $=$ $I_1,..,I_{m+1}$} of instances over the binary relation $\mathsf{br}$ with \mbox{$\Dom_{I_i} \cap \Dom_{I_{j\neq i}}\!=\! \emptyset$,} is there any homomorphism from \mbox{$I_1\! \otimes\!...\!\otimes\!I_{m}$} to $I_{m+1}$?} 
Let \textsc{{\footnotesize3}-col} be the $\mathsf{NP}$-complete problem: {\em given a graph \mbox{$G$ $=$ $(V,E)$,} is there a map \mbox{$\lambda: V \rightarrow \{\mathtt{r},\mathtt{g},\mathtt{b}\}$} such that \mbox{$\{u,v\}\!\in\!V$} implies \mbox{$\lambda(u)\!\neq\!\lambda(v)$?}}
Please note that \textsc{php-sb} is $\mathsf{NEXP}$-complete when $m$ is unbounded in the following we will consider $m>1$.
In the general case, for each \mbox{$k\!>\!0$,}
by adapting a technique of \citeauthor{DBLP:conf/icdt/CateD15}~(\citeyear{DBLP:conf/icdt/CateD15}), we show that \mbox{\textsc{php-sb} $\leq$ \textsc{ess}$_k$} via the map
{\bf s} $\mapsto$ $(\kb, \unit^k, \langle \mathsf{a}_{m+1}\rangle^k)$, where
\mbox{$\kb$ $=$ $(\db,\varsigma)$,} 
\mbox{$\db$ $=$ $\db_1 \cup...\cup \db_4$,}
$\db_1$ $=$ $I_1 \cup ... \cup I_{m+1}$,
$\db_2$ $=$ $\{\top(c):c \in \Dom_{\db_1}\}$,
$\db_3$ $=$ $\{\top(\mathsf{a}_i),\mathsf{br}(\mathsf{a}_{i},c):c \in \Dom(I_i)\}_{i \in[m+1]}$,
$\db_4$ $=$ $\mathit{tw}(\{\mathsf{a}_1,...,\mathsf{a}_{m+1}\},k)$,
$\varsigma$ always selects $D$,
$\unit$ $=$ $\{\langle \mathsf{a}_1 \rangle,..,\langle \mathsf{a}_m \rangle\}$. 
First lets notice that in our context given a characterization for $\unit$, call it $\varphi$, then we can easily construct a characterization for $\unit^k$  of the form $x_1,\ldots,x_k\leftarrow \bigwedge_{s\in[2..k]}\top(x_s),\mathsf{twin}_s(x_1,x_s) \wedge\bigwedge_{\alpha \in \mathit{atm}(\varphi)}\alpha$, vice versa say that you have  a characterization for $\unit^k$, then by renaming every occurrence of the free variables other then the first one we will end up with a characterization for $\unit$, according to this we can simply focus on the unary case and then add this detail later. %
Let $S$ denote $I_1 
		\otimes\ldots\otimes
		I_m$.
  In
		one direction, if $S \mapsto I_{m+1}$ then, by homomorphism preservation, any conjunctive formula that interprets the elements of $\unit$ will also interpret $\langle a_{m+1}\rangle$.
		On the other hand, if $ S \not\mapsto I_{m+1}$, then we can construct a formula $\varphi\in\WCQ$ that interprets $\unit$ but not $\langle a_{m+1}\rangle$.
  To this aim first consider $\varphi_S$ to be the characteristic formula of $S$.
  Then take $\varphi$ as $x\leftarrow \varphi_S\bigwedge_{y\in\Dom_{\varphi_S}}\mathsf{br}(x,y)$.
 By construction, $\varphi$ interprets all the elements of $\unit$ but does not interpret $\langle a_{m+1} \rangle$.
 In order to see this, on the one hand consider, for each $i\in[m]$,  $h_i$ as the projection onto the $i$-th component. Obviously each of these is a homomorphism from $ I_1 
		\otimes\ldots\otimes
		I_m \mapsto I_i$ and in particular this shows that $\varphi_S$ is true when evaluated on every $ I_i$ with $i\in[m]$, implying $\langle a_{i}\rangle\in\mathit{inst}(\varphi,\kb)\forall i \in [m]$. On the other hand, suppose by contraddiction that there exist a homomorphism $h$ $\mathit{atm}(\varphi)$ to $\varsigma(a_{m+1})$ such that $\langle h(x) \rangle=\langle a_{m+1} \rangle $, then the same homomorphism $h$ limited to variables other than $x$ would show the existence of a homomorphism $h'$ from $ I_1 
		\otimes\ldots\otimes
		I_m$ to $I_{m+1}$ and this is absurd.
What remains to be shown is that the new summary selector can be built without making complexity leaps and that its execution time turns out to be polynomial with respect to the input received, but this is straightforward since the summary selector has always to return the entire dataset in input.

For each \mbox{$k\!>\!0$,} in the medial case, \mbox{\textsc{{\footnotesize3}-col} $\leq$ \textsc{ess}$_k$} via the map
$G$ $\mapsto$ 
$(\kb, \unit^k, \langle \mathsf{b}_1\rangle^k)$, where  \mbox{$\kb$ $=$ $(\db,\varsigma)$,} 
$\db$ $=$ $\db_1$ $\cup...\cup$ $\db_5$,
$\db_1$ $=$ $\{\mathsf{arc}(u_i,v_i),$ $\mathsf{arc}(v_i,u_i),$ $:\{u,v\}\in E \wedge i\in[2]\}$,
$\db_2$ $=$ $\{\top(c):c \in \Dom_{\db_1}\}$,
$\db_3$ $=$ $\{\top(\mathsf{b}_i),\mathsf{arc}(\mathsf{b}_i,\mathsf{b}_z):i,z\in[4],i\neq z\}$,
$\db_4$ $=$\! $\{\top(\mathsf{a}_i),\mathsf{arc}(\mathsf{a}_i,v_i),\mathsf{arc}(v_i,\mathsf{a}_i)\!:\!v\in\!V \wedge i\!\in\![2]\}$, $\db_5$ $=$ $\mathit{tw}(\Dom_{\db_3}\cup\Dom_{\unit},k)$, 
$\varsigma$ always selects $D$ and
$\unit = \{\langle \mathsf{a}_1 \rangle,\langle \mathsf{a}_2 \rangle\}$.	    
	Again lets notice that in our context given a characterization for $\unit$, call it $\varphi$, then we can easily construct a characterization for $\unit^k$  of the form $x_1,\ldots,x_k\leftarrow \bigwedge_{s\in[2..k]}\top(x_s),\mathsf{twin}_s(x_1,x_s) \wedge\bigwedge_{\alpha \in \mathit{atm}(\varphi)}\alpha$, vice versa say that you have  a characterization for $\unit^k$, then by renaming every occurrence of the free variables other then the first one we will end up with a characterization for $\unit$, having asserted this we can equivalently solve the problem by concentrating on the unary case.
  Let $K_4=(V_4,E_4)$ denote the complete graph of order four.
Note that $\db_3$ translates $K_4$ in a dataset whereas $\db_1\cup\db_4$ translates two disjoint objects isomorphic to $G^a$, call them $G^{a_1}$ and $G^{a_2}$, in a dataset.
Let $\tilde{a}\in\{a_1,a_2\}$.
On the one hand say that $G$ is 3-colourable, then, from Lemma~\ref{lem:color}, $G^{\tilde{a}}$ is 4-colourable, so it exists a homomorphism $h$ from $G^{\tilde{a}}$ to $K_4$, this homomorphism will map $\tilde{a}$ to a certain element of $V_4$ and, since element from $V_4$ are equal up to isomorphism, w.l.o.g. $h(\tilde{a})=\mathsf{b}_1$. Let now $\varphi$ be a characterization for $\mathit{ess}(\unit,\kb)$, this means that it exists a homomorphism $h_1: \mathit{atm}(\varphi)\mapsto dom(\kb)$ such that $h_1(x)=\mathsf{\tilde{a}}$ but then by homomorphism composition $h_2: \mathit{atm}(\varphi)\mapsto dom(\kb)$ defined as $h\circ h_1$ is another homomorphism such that $h_2(x)=\mathsf{b}_1$  and so  $\langle \mathsf{b}_1\rangle\in \mathit{ess}(\unit,\kb)$.
On the other hand now say that $\langle \mathsf{b}_1\rangle\in \mathit{ess}(\unit,\kb)$, this means that $\mathit{can}(\mathit{ess}(\unit,\kb))$ interprets $V_4$, in particular this gives raise to a homomorphism from $G^{\tilde{a}}$ to $K_4$, but then, from Lemma~\ref{lem:color}, $G$ is 3-colourable.

As before what remains to be shown is that the new summary selector can be built without making complexity leaps and that its execution time turns out to be polynomial with respect to the input received, but again this is straightforward because also in this case the summary selector has always to return the entire dataset in input.
\end{proof}
The following result among the various ones is the one that foresees one of the most linear constructions, we will do nothing but build an almost indistinguishable copy of an object, the first tuple of our input unit to be precise.

\medskip

\noindent{\bf Theorem 8.} {\em 	\textsc{sim} is $\mathsf{NEXP}$-complete, $\mathsf{NP}$-complete and in $\mathsf{P}$ in the general, medial and agile case, respectively. In particular, for each $k>0$, LBs hold already for \textsc{sim}$_k$.}
\begin{proof}
For what concerns the upper bounds it's easy to notice that an input for ${\textsc{sim}}$ is $\mathsf{accepted}$ if, and only if, it is $\mathsf{accepted}$ both for \textsc{gad{\scriptsize1}} and  \textsc{gad{\scriptsize2}}.
The following reduction is constructed in such a way as to preserve our assumptions, we will therefore give a single construction that will vary only for the nature of its input, as is natural the general case will be reduced starting from the general case of the starting problem while the medial case from the medial case of the starting problem. Now lets face the lower bounds for this problem. It holds that \textsc{ess}$_1$ $\leq$ \textsc{sim}$_k$ via the map $(\kb, \unit, \tau) \mapsto (\kb', \unit^k,\tau^k,\langle \mathsf{alias}\rangle^k)$, where $\kb$ $=$ $(D,\varsigma)$,
$\unit$ $=$ $\{\tau_1,..,\tau_m\}$,
$m\!>\!1$, 
$\tau_1$ is of the form $\langle c \rangle$,
$\kb'$ $=$ $(D',\varsigma')$,
\mbox{$D'$ $=$ $\bar{D}$ $\cup$ $\mathit{tw}(\Dom_{\bar{D}},k)$,}
\mbox{$\bar{D}$ $=$ $\mathit{double}(D,c)$,}
selector $\varsigma'$ is such that $\varsigma'(D',\tau')$ $=$ $\varsigma(D,\tau_0)$ $\cup$
$\mathit{tw}(\Dom_{\{\tau_0\}},k)$, $\tau' \in \Dom_{D'}^k$ and $\tau_0$ $=$ $\mathit{off}(\tau',c)$. 
First lets notice that in our context given a characterization for $\unit$, call it $\varphi$, then we can easily construct a characterization for $\unit^k$  of the form $x_1,\ldots,x_k\leftarrow \bigwedge_{s\in[2..k]}\top(x_s),\mathsf{twin}_s(x_1,x_s) \wedge\bigwedge_{\alpha \in \mathit{atm}(\varphi)}\alpha$, vice versa say that you have  a characterization for $\unit^k$, then by renaming every occurrence of the free variables other then the first one we will end up with a characterization for $\unit$, having asserted this we can equivalently solve the problem by concentrating on the unary case. The first thing the reader has to notice is that $\langle\mathsf{alias}\rangle\in\mathit{ess}(\unit,\kb')$ by construction, in fact say that $h$ is a $\C$-homomorphism from $\mathit{atm}(\mathit{can}(\mathit{ess}(\unit,\kb)))$ to $\varsigma'(\db',\langle c \rangle)$ such that $ x_1  \mapsto c$, please note that this $\C$-homomorphism must exist by hypotheses, then we can construct another $\C$-homomorphism from $\mathit{atm}(\mathit{can}(\mathit{ess}(\unit,\kb)))$ to $\varsigma'(\db',\langle \mathsf{alias} \rangle)$, call it $\tilde{h}$, such that $ x_1  \mapsto \mathsf{alias}$ as follows $\tilde{h}(y)= h(y)$ if $y\neq x_1$ and $\tilde{h}(x_1)= \mathsf{alias}$. That been said the problem collapse to wheter or not $\tau\in\mathit{ess(\unit,\kb)}$ and if that is the case then we will end up with a yes answer since the same $\C-$homomorphism that was a witness for $\tau$ to be in $\mathit{ess(\unit,\kb)}$ is still a witness for $\tau$ to be in $\mathit{ess(\unit,\kb')}$ and viceversa having $\C-$homomorphism that is a witness for $\tau$ to be in $\mathit{ess(\unit,\kb')}$ leads us to a witness for $\tau$ to be in $\mathit{ess(\unit,\kb)}$.
 What remains to be shown is that the new summary selector can be built without making complexity leaps and that its execution time turns out to be polynomial with respect to the input received, but this is evident as we only have to do a linear scan of the input that allows us to reconstruct the initial dataset and build the new tuple on which to then relaunch the old summary selector, this second part will by hypothesis run in polynomial time, and finally we will add the other elements always doing a linear scan of the input.
\end{proof}

Unlike the previous theorem, here we will have to act differently, the idea behind it is to provide the whole dataset, except for some specially chosen elements, access to a new predicate that is therefore able to separate the family units that can be defined in two parts, those that can be characterized with formulas that have access to such information and those that instead have characterizations without such information.
\medskip

\noindent{\bf Theorem 9.} {\em 	\textsc{inc} is $\mathsf{coNEXP}$-complete, $\mathsf{coNP}$-complete and in $\mathsf{P}$ in the general, medial and agile case, respectively. In particular, for each $k>0$, LBs hold already for \textsc{inc}$_k$.}
\begin{proof}
  For what concerns the upper bounds it's easy to notice that an input for ${\textsc{inc}}$ is  {\em $\mathsf{accepted}$} if, and only if, it is  {\em $\mathsf{accepted}$} both for $\neg$\textsc{gad{\scriptsize1}} and  $\neg$\textsc{gad{\scriptsize2}}.
The following reduction is constructed in such a way as to preserve our assumptions, we will therefore give a single construction that will vary only for the nature of its input, as is natural the general case will be reduced starting from the general case of the starting problem while the medial case from the medial case of the starting problem.
For what concerns the lower bounds it holds that \textsc{ess}$_1$ $\leq$ $\neg$\textsc{inc}$_k$ via the following map: $(\kb, \unit, \tau) \mapsto (\kb', \unit^k,\tau^k,\langle \mathsf{alias}\rangle^k)$, where \mbox{$\kb$ $=$ $(D,\varsigma)$,}
\mbox{$\unit$ $=$ $\{\tau_1,..,\tau_m\}$,}
$m\!>\!1$,
$\tau_1$ is of the form $\langle c \rangle$,
\mbox{$\kb'$ $=$ $(\db',\varsigma')$,} 
\mbox{$\db'$ $=$ $D$ $\cup$
$\mathit{tw}(\Dom_{\bar{D}},k)$ $\cup$ $\mathit{fc}(\Dom_\db)$,}
$\bar{D} = \mathit{double}(D,c)$,
$\varsigma'$ is such that $\varsigma'(D',\tau')$ $=$ $\varsigma(D,\tau_0)$ 
$\cup$ $\mathit{tw}(\Dom_{\{\tau_0\}},c)$
$\cup$
$\mathit{fc}(\Dom_\db)$, $\tau' \in \Dom_{D'}^k$ and $\tau_0$ $=$ $\mathit{off}(\tau',c)$. First lets notice that in our context given a characterization for $\unit$, call it $\varphi$, then we can easily construct a characterization for $\unit^k$  of the form $x_1,\ldots,x_k\leftarrow \bigwedge_{s\in[2..k]}\top(x_s),\mathsf{twin}_s(x_1,x_s) \wedge\bigwedge_{\alpha \in \mathit{atm}(\varphi)}\alpha$, vice versa say that you have  a characterization for $\unit^k$, then by renaming every occurrence of the free variables other then the first one we will end up with a characterization for $\unit$, having asserted this we can equivalently solve the problem by concentrating on the unary case.
By costruction we know that $\langle \mathsf{alias} \rangle$ for sure is not in $\mathit{ess}(\unit,\kb')$ and this is because all elements in unit "have access" to $\mathsf{focus}$, this let us know that a characterization for $\unit$ will use that predicate and so it will not interpret $\langle\mathsf{alias}\rangle$. That been said the problem collapse to wheter or not $\tau\in\mathit{ess(\unit,\kb)}$ or not and if that is the case then we will end up with a no answer since the same $\C-$homomorphism that was a witness for $\tau$ to be in $\mathit{ess(\unit,\kb)}$ can be enlarged to a witness for $\tau$ to be in $\mathit{ess(\unit,\kb')}$ and viceversa having $\C-$homomorphism that is a witness for $\tau$ to be in $\mathit{ess(\unit,\kb')}$ leads us to a witness for $\tau$ to be in $\mathit{ess(\unit,\kb)}$.
 What remains to be shown is that the new summary selector can be built without making complexity leaps and that its execution time turns out to be polynomial with respect to the input received, but this is evident as we only have to do a linear scan of the input that allows us to reconstruct the initial dataset and build the new tuple on which to then relaunch the old summary selector, this second part will by hypothesis run in polynomial time, and finally we will add the other elements always doing a linear scan of the input.
\end{proof}
The following result is interesting because it provides a non-classical example of a complete problem for the two classes under consideration, usually in fact such problems appear as made up of pairs of strings that can be interpreted separately.

\medskip

\noindent{\bf Theorem 10.} {\em \textsc{prec} is $\mathsf{DEXP}$-complete, $\mathsf{DP}$-complete and in $\mathsf{P}$ in the general, medial and agile case, respectively. In particular, for each $k>1$, LBs hold already for \textsc{prec}$_k$.}
\begin{proof}
 For what concerns the upper bounds it's easy to notice that an  input for ${\textsc{prec}}$ is  {\em $\mathsf{accepted}$} if, and only if, it is  {\em $\mathsf{accepted}$}  both for \textsc{gad{\scriptsize1}} and  $\neg$\textsc{gad{\scriptsize2}}.
The following reduction is constructed in such a way as to preserve our assumptions, we will therefore give a single construction that will vary only for the nature of its input, as is natural the general case will be reduced starting from the general case of the starting problem while the medial case from the medial case of the starting problem.
Lets now talk about lower bounds.  For each $L\in \mathsf{DEXP}$ (resp., $\mathsf{DP}$), \mbox{$L$ $\leq$ \textsc{prec}$_k$} in the general (resp., medial) case, via the mapping \mbox{$w \mapsto (\kb, \unit, \tau', \tau'')$,} 
where
$w$ is any string over the alphabet of $L$,
$L$ $=$ $L_1$ $\cap$ $L_2$,
$L_1$ $\in$ $\mathsf{NEXP}$ (resp., $\mathsf{NP}$),
$L_2$ $\in$ $\mathsf{coNEXP}$ (resp., $\mathsf{coNP}$),
$\rho_1$ is a reduction from $L_1$ to ${\textsc{ess}}_{1}$ in the general (resp., medial) case,
$\rho_2$ is a reduction from $L_2$ to $\neg{\textsc{ess}}_{k-1}$ in the general  (resp., medial) case,
each \mbox{$\rho_i(w)$ $=$ $(\kb_i, \unit_i, \tau_i)$,}
each \mbox{$\kb_i$ $=$ $(\db_i,\varsigma_i)$,}
w.l.o.g. each constant and predicate different from $\top$ in $\db_1$ (resp., $\db_2$) has $a$- (resp., $b$-) as a prefix;
each \mbox{$\unit_i$ $=$ $\{\tau_{i,1},..,\tau_{i,m_i}\}$,}
\mbox{$\kb$ $=$ $(\db,\varsigma)$,}
\mbox{$\db$ $=$ $\db_1\cup\db_2$,}
$\varsigma$ is such that
$\varsigma(\db,\tau)=\varsigma_1(\db_1,\tau|_{\Dom(\db_1)}) \cup \varsigma_2(\db_2,\tau|_{\Dom(\db_2)})$,
$\unit$ $=$ $\unit_1\times\unit_2$,
$\tau'$ $=$ $\tau_1\circ\tau_{2,1}$,
$\tau''$ $=$ $\tau_{1,1}\circ\tau_2$,
each $\tau|_{\Dom(\db_i)}$ is obtained from $\tau$ by removing the terms not in $\Dom(\db_i)$. The main idea of the construction is that the final knowledgebase that we construct is actually bipartite in two distinct sense, it is not in fact bipartite just from the domain point of view but in particular the predicate $\top$ is the only one shared by both type of constants, apart from that we may say that constants coming from the first reduction are not just totally different from constant occurring in the second reduction but also they appears in atoms with totally different predicate names. 
Noting this fact gives us the possibility to say that the explanation we will construct will be necessary separable in the sense of Definition~\ref{def:sep}.
From this everything will follow by Corollary \ref{cor:bipQuery}. In fact we do know by construction that both $\tau_{1,1}\in\mathit{ess}(\unit_1,\kb_1)$ and $\tau_{2,1}\in\mathit{ess}(\unit_2,\kb_2)$ hold and so the input for the new problem we have constructed will return a positive answer if and only if both $\tau_1\in\mathit{ess}(\unit_1,\kb_1)$ and $\tau_2\not\in\mathit{ess}(\unit_2,\kb_2)$ hold.
What remains to be shown is that the new summary selector can be built without making complexity leaps and that its execution time turns out to be polynomial with respect to the input received, but this is just a combination of the running of two distinct procedures that we do know by hypotheses run in polynomial time, moreover we are able to easily reconstruct both $\db_1$ and $\db_2$ via a linear scan of the input.
\end{proof}

\end{document}